\definecolor{Gray}{gray}{0.85}
\definecolor{LightCyan}{rgb}{0.7,1,1}
\definecolor{LightRed}{rgb}{1,0.7,0.7}
\newcolumntype{a}{>{\columncolor{Gray}}c}
\newtheorem{theorem}{Theorem}[section]
\newtheorem{lemma}{Lemma}[section]
\newtheorem{corollary}{Corollary}[section]
\newtheorem{definition}{Definition}[section]
\newtheorem{assumption}{Assumption}[section]
\newcommand{\Var}{{\bf Var}}
\newcommand{\simon}[1]{{\color{cyan} [Simon: #1]}}
\newcommand{\add}[1]{{\color{black} #1}}
\newcommand{\modify}[1]{{\color{black} #1}}
\newcommand{\uv}{\underline{\mathbf{V}}}
\newcommand{\uq}{\underline{\mathbf{Q}}}
\renewcommand{\S}{\mathcal{S}}
\newcommand{\A}{\mathcal{A}}
\newcommand{\up}{\underline{\pi}}
\newcommand{\gap}{\mathrm{gap}_\mathrm{min}}
\title{On Gap-dependent Bounds for Offline Reinforcement Learning}
\author{Xinqi Wang\footnote{\textbf{Tsinghua University. Email: \url{wangxqkaxdd@gmail.com}}} \and Qiwen Cui\footnote{\textbf{University of Washington. Email: \url{qwcui@cs.washington.edu}}}  \and Simon S. Du\footnote{\textbf{University of Washington. Email: \url{ssdu@cs.washington.edu}}} }
\begin{document}
\maketitle
\begin{abstract}
This paper presents a systematic study on gap-dependent sample complexity in offline reinforcement learning.
Prior work showed when the density ratio between an optimal policy and the behavior policy is upper bounded (the optimal policy coverage assumption), then the agent can achieve an $O\left(\frac{1}{\epsilon^2}\right)$ rate, which is also minimax optimal.
We show under the optimal policy coverage assumption, the rate can be improved to $O\left(\frac{1}{\epsilon}\right)$ when there is a positive sub-optimality gap in the optimal $Q$-function.
Furthermore, we show when the visitation probabilities of the behavior policy are uniformly lower bounded for states where an optimal policy's visitation probabilities are positive (the uniform optimal policy coverage assumption), 
the sample complexity of identifying an optimal policy is independent of $\frac{1}{\epsilon}$.
Lastly, we present nearly-matching lower bounds to complement our gap-dependent upper bounds.
\end{abstract}

\section{Introduction}
\label{sec:intro}

Reinforcement Learning~(RL) aims to learn a policy that maximizes the long-term reward in unknown environments \citep{sutton2018reinforcement}.
The success of reinforcement learning often relies on being able to deploy the algorithms that directly interact with the environment.
However, such direct interactions with real environments can be expensive or even impossible in many real-world applications, e.g., health and medicine~\citep{murphy2001marginal,gottesman2019guidelines}, education~\citep{mandel2014offline}, conversational AI~\citep{ghandeharioun2019approximating} and recommendation systems~\citep{chen2019top}. Instead, we have access to a dataset generated from some past suboptimal policies. 
Offline reinforcement learning (offline RL) aims to find a near-optimal policy using the offline dataset, and has achieved promising empirical successes~\citep{lange2012batch,levine2020offline}.

Recently, a line of works showed that under the single policy coverage assumption (Assumption~\ref{assumption:relative optimal policy coverage}), one can obtain a near-optimal policy with polynomial number of samples \citep{rashidinejad2021bridging(C*),LCBVI2021,towardsinstanceoptimal2021,settling2022}.
In particular, for the tabular setting, recent works have obtained minimax optimal sample complexity bounds $\widetilde{\Theta}\left(\frac{H^3SC^*}{\epsilon^2}\right)$ where $H$ is the planning horizon, $S$ is the number of states, $C^*$ is the constant for the single policy coverage assumption, and $\epsilon$ is the target accuracy~\citep{LCBVI2021,settling2022}.
The $O\left(1/\epsilon^2\right)$ is for the worst case and in many benign settings, one may use much fewer samples to learn a  (near-)optimal policy.

How the benign problem structures help reduce the sample complexity  has been extensively studied in the online bandits and reinforcement learning~\citep{clip2019,jonsson2020planning,wagenmaker2021task,xuhaike2021}.
In particular, when there is a suboptimality gap between the optimal policy and the rest, then one can obtain $\log T$-type regret in contrast to $\sqrt{T}$-type regret bounds in the worst case where $T$ is the number of interactions.
However, to our knowledge, how the gap structure helps reduce sample complexity in offline RL has not been thoroughly investigated.
This paper presents a systematic study on gap-dependent bounds for offline RL in the canonical tabular setting, with nearly-matching upper and lower bounds in different regimes.

\subsection{Main Contributions}
We present novel analyses for the standard VI-LCB algorithm (Algorithm~\ref{alg:LCB with subsampling}).
Our main results are summarized in Table~\ref{table:bound table}.

\begin{enumerate}
    \item We develop a novel technique, \emph{deficit thresholding}, to obtain gap-dependent bounds in offline RL. Different from the clip trick widely used in online RL~\citep{clip2019,xuhaike2021} to obtain gap-dependent bounds, our deficit thresholding technique is adaptive to the problem-instance. As will be shown in Section \ref{subsection:clip tech}, this technique helps reduce the dependency on $H$ by adapting to the variance of the estimate. 

\item Using the deficit thresholding technique, we obtain the first gap-dependent bound under the optimal policy coverage assumption. Specifically, we obtain an $\widetilde{O}\left(\frac{H^4SC^*}{\epsilon\gap}\right)$ bound where $\gap$ is the minimum suboptimality gap between the optimal $Q$-value of the best action and that of the second-to-the-best action.\footnote{The gap-dependent bounds in the online setting also depend on the suboptimality gaps of the actions other than the second-to-the-best action.
 This dependency is \emph{not needed} for offline RL because even if the dataset does not have any information about the sub-optimal actions, the agent can still learn a near-optimal policy (as long as the dataset covers an optimal action). On the other hand, this dependency is needed in the online setting because the agent needs to explore the all actions.  }
 Notably, compared with the worst-case gap-indepdent, the rate improves from $1/\epsilon^2$ to $1/\epsilon$.

 \item We also present the first gap-dependent lower bound for offline RL, $\Omega\left(\frac{H^2SC^*}{\epsilon\gap}\right)$ to show the $O\left(1/\epsilon\right)$ is unimprovable even with the gap condition, and our upper bound is tight up to an $H^2$ factor. The main insight from our lower bound is that if there exists a state whose visitation probability of the behavior policy and the optimal policy is $O\left(\epsilon\right)$, then we cannot learn much of the state using the offline dataset and will inevitably incur an $O\left(\epsilon\right)$ error.
\

\item We further study what condition permits even faster rate than $O\left(1/\epsilon\right)$. Leveraging our lower bound mentioned above, we propose a new condition, \emph{uniform optimal policy coverage}, which posits that the visitation probabilities of the behavior policy are uniformly lower bounded by $P$ for states where an optimal policy's visitation probabilities are positive. Under this assumption, we obtain an $\widetilde{O}\left(\frac{H^3}{P\gap^2}\right)$ bound. Importantly, this bound is \emph{independent of $1/\epsilon$} (not even $\log(1/\epsilon)$), a.k.a., one can identify an \emph{exact optimal} policy.
We also complement this upper bound with an $\Omega\left(\frac{H}{P\gap^2}\right)$ lower bound to show our upper bound is tight up to an $H^2$ factor.

\end{enumerate}

Lastly, we note that all of our bounds are obtained with the same algorithm, i.e., the algorithm automatically exploits the benign problem structure without any prior knowledge.

\begin{table*}[!t]
    \label{table:bound table}
\centering\resizebox{0.7\columnwidth}{!}{    
    \begin{tabular}{|c|c|c|}
        \hline
        \rowcolor{white}
      Condition & Upper bound & Lower bound \\
				\hhline{|=|=|=|}
				$C^*$   & $\widetilde{O}\left(\frac{H^3SC^*\modify{\log\frac{1}{\delta}}}{\epsilon^2}\right)$  & $\Omega\left(\frac{H^3SC^*}{\epsilon^2}\right)$ \\
				\hline
        $C^*$, $\gap$  & 	\cellcolor{Gray} $\widetilde{O}\left(\frac{H^4SC^*\modify{\log\frac{1}{\delta}}}{\epsilon\gap}\right)$  & 	\cellcolor{Gray} $\Omega\left(\frac{H^2SC^*}{\epsilon\gap}\right)$ \\
        \hline

        $P$, $\gap$   & 	\cellcolor{Gray} $\widetilde{O}\left(\frac{H^3\modify{\log\frac{1}{\delta}}}{P\mathrm{gap}^2_{\min}}\right)$ & 	\cellcolor{Gray} $\Omega\left(\frac{H}{P\mathrm{gap}^2_{\min}}\right)$   \\
        \hline
    \end{tabular}
}
\caption{Sample Complexity Bounds for different conditions about the sub-optimality and coverage. Cells in gray are the contributions of this work. 
The results in the first line without suboptimality gap assumptions were obtained in \cite{LCBVI2021,settling2022}.
$C^*$ stands for the relative optimal policy coverage coefficient, i.e., $\underset{h,s}{\max}\frac{d_h^{*}(s)}{d_h^{\mu}(s)}$ where $d^*_h(s)$ is the visitation probability of the optimal policy for state $s$ at level $h$ and $d_h^{\mu}(s)$ is the visitation probability of the behavior policy $\mu$ for state $s$ at level $h$. $P$ stands for the uniform optimal policy coverage coefficient, i.e.,
$\underset{h,s\mid d_h^*(s)>0}{\min}d_h^{\mu}(s)$. $\gap$ is the minimum non-zero suboptimality gap among all time-state-action tuples, i.e., $\underset{h,s,a\mid a\text{ is not optimal}}{\min}\mathbf{V}^*_h(s)-\mathbf{Q}^*_h(s,a)$.
    }
\end{table*}

\section{Related Work}
We focus on existing theoretical results on gap-dependent bounds and offline RL in the tabular setting.

\paragraph*{Theoretical Results on Offline Tabular RL.}
Theoretical analysis of offline RL can be traced back to \citet{szepesvari2005finite(sup1)}, under the uniform coverage assumption where \emph{every state-action} pairs are visted by the behavior policy with a positive probability.
Sharp sample complexity bounds have been obtained under this assumption \citep{xie2021batch(off4),xie2019towards,yin2020near,yin2021near,nearlyhorizonfree2021(strongeuniformcoverage)}. 
Recently, a line of works showed that  under a much weaker assumption, single policy coverage, one can design sample efficient algorithms with both model-based and model-free methods based on the pessimism principle,  \citep{rashidinejad2021bridging(C*),towardsinstanceoptimal2021,xie2021batch(off4), jin2021pessimism,uehara2021pessimistic,uehara2021representation,zanette2021provable}.
Recently, \cite{towardsinstanceoptimal2021} obtained a problem-dependent sample complexity in terms of the variance.

\paragraph*{Instance-dependent Sample Complexity in Online Learning and Generative Models.} In online RL, a line of work studied the upper and lower bounds of gap-dependent sample complexity in both the regret and PAC settings~\citep{clip2019,xuhaike2021,dann2021beyond,jonsson2020planning,wagenmaker2021task,MOCA2021,tirinzoni2021fully}.  Besides gap-dependent bounds, there are other problem-dependent bounds such as first-order and variance-dependent bounds~\cite{zanette2019tighter}. For generative models, \citet{zanette2019almost} derived an upper bound depending on both variance and gap information.

\paragraph*{Gap-dependent bounds in Offline Learning.}
The most related work is by \cite{fastrate2021} who
studied the convergence rate of $Q$-learning in the discounted  MDPs under the uniform coverage assumption. They proved that one can obtain an $\epsilon$-optimal policy with $O\left(S^3A^3\log(1/\epsilon)/(1-\gamma)^4P^2\gap^2\right)$ samples for tabular MDP and $\widetilde{O}(1/\epsilon)$ (ignoring other parameters) for linear MDP. In comparison, we show that under the weaker uniform optimal policy coverage assumption, we can identify an exact optimal policy with $O\left(H^3/P\gap^2\right)$ sample complexity, which has no dependency on $1/\epsilon$.

\section{Preliminaries}
\paragraph*{Notations.} We let $[n]=\{1,2\cdots n\}$. For two vectors $a,b$ of the same length $k$, we use $a\circ b$ to denote the 
Hadamard product $(a_1b_1,a_2b_2\cdots a_kb_k)$. We use the standard definitions of $O(\cdot),\Theta(\cdot),\Omega(\cdot)$ to hide absolute constants, and tilded notations $\widetilde{O}(\cdot),\widetilde{\Theta}(\cdot),\widetilde{\Omega}(\cdot)$ to hide absolute constants as well as poly-logarithmic factors \modify{except for $\log\frac{1}{\delta}$}. $\Var_{p}(V)=p^\top V\circ V-(p^\top V)^2$ refers to the variance of $V$ with respect to the weight $p$. $a\lesssim b$ 
means $a\leq Cb$ for some positive absolute constant $C$, and similarly $a\gtrsim b$ means $a\geq Cb$. $\mathbb{I}\{\xi\}$ is the indicator function of the event $\xi$, which equals 1 when $\xi$ is true and 0 otherwise. $\Delta(\mathcal{X})$ is the probability simplex over $\mathcal{X}$.

\subsection{Markov Decision Processes}

We consider tabular finite-horizon time-inhomogeneous MDPs described by the tuple $\mathcal{M}=(\mathcal{S}, \mathcal{A}, H, \mathcal{P},p_0, r)$.
Here $\mathcal{S}$ is the state space with cardinality $S$, $\mathcal{A}$ is the action space with cardinality $A$, $\mathcal{P}=\{p_{h,s,a}\}_{(h,s,a)\in[H]\times\mathcal{S}\times\mathcal{A}}$ with $p_{h,s,a}\in\Delta(\mathcal{S})$
is the transition kernel at timestep $h$, state $s$ and action $a$, $p_0\in\Delta(\mathcal{S})$ is the initial state distribution of $s_1$, and $r=\{r_1,r_2,\dots,r_H\}$ with $r_h:\mathcal{S}\times\mathcal{A}\rightarrow [0,1]$ is the reward function.\footnote{We assume a known deterministic reward function as the main difficulty lies in learning the transition probability. All the conclusions in this paper can be proved for MDPs with unknown 1-subGuassian reward.} For each episode, the player will generate a trajectory $\{(s_h,a_h,r_h)\}_{h=1}^H$ where $s_1\sim p_0$, $s_{h+1}\sim p_{h,s_{h},a_h}$, and $r_h=r_h(s_h,a_h)$, by controlling the actions $\{a_h\}_{h=1}^H$. The target of the player is to maximize the total reward $\sum_{h=1}^Hr_h$. 

\paragraph*{Policies.} A policy $\pi=\{\pi_h(\cdot| s)\}_{(h,s)\in[H]\times \mathcal{S}}$ refers to a set of distributions over $\mathcal{A}$. With a slight abuse of the notations, when a policy $\pi$ is deterministic, we use $\pi_h(s)$ to denote the action taken at timestep $h$ and state $s$.
 We define $\mathbb{E}_{\pi,\mathcal{M}}[\cdot]\triangleq\mathbb{E}_{\phi\sim(\pi,\mathcal{M})}[\cdot]$, where $\phi=\{(s_h,a_h,r_h)\}_{h\in[H]}$ is a trajectory sampled using policy
$\pi$ in the MDP $\mathcal{M}$ and the expectation is over the randomness of both the policy and the transitions. $\mathcal{M}$ will be omitted when there is no confusion.

\paragraph*{Value Functions, Q-Functions and Policy Distributions.} For a given policy $\pi$ and an MDP $\mathcal{M}$, we define the state value function and the state-action value function to be 
\begin{align*}
\mathbf{V}^\pi_h(s)\triangleq\mathbb{E}_\pi \left[\sum_{h'=h}^Hr_{h'}(s_{h'},a_{h'})\mid s_h=s\right],\mathbf{Q}^\pi_h(s,a)\triangleq\mathbb{E}_\pi\left[\sum_{h'=h}^Hr_{h'}(s_{h'},a_{h'})\mid s_h=s, a_h=a\right].
\end{align*}
We define the optimal Q-function as $\mathbf{Q}_h^*(s,a)\triangleq\sup_\pi \mathbf{Q}_h^\pi(s,a)$, and similarly $\mathbf{V}_h^*(s)\triangleq\sup_\pi \mathbf{V}_h^\pi(s)$ for all $(h,s,a)\in[H]\times\S\times\A$. It is well known that there exists a deterministic optimal policy $\pi^*$ that can achieve the above maximum for all $s\in\mathcal{S}$, $a\in\mathcal{A}$ and $h\in[H]$ simultaneously.

We denote the value of a policy by  $\mathbf{V}_0^\pi=\mathbb{E}_\pi\left[\sum_{h=1}^Hr_{h}(s_{h},a_{h})\right]$, 
and the value of the optimal policy by $\mathbf{V}_0^*=\mathbf{V}_0^{\pi^*}$. A policy $\pi$ is $\epsilon$-optimal if $$\mathrm{Suboptimal}(\pi)\triangleq \mathbf{V}_0^*-\mathbf{V}_0^{\pi}\leq\epsilon.$$
We use $d_h^\pi(\cdot)$ to denote the probability of reaching a state $s$ under policy $\pi$:
\begin{align*}
    d_h^\pi(s) \triangleq \mathbb{E}_\pi[\mathbb{I}\{s_h=s\}],\ d_h^\pi(s,a) \triangleq \mathbb{E}_\pi[\mathbb{I}\{(s_h,a_h) = (s,a)\}]=d_h^\pi(s)\pi_{h}(a\mid s).
\end{align*}

\paragraph*{Sub-optimality Gap.} For a MDP instance $\mathcal{M}$, we define the gap at $(h,s,a)\in[H]\times\S\times\A$ to be $\mathrm{gap}_{h}(s,a)=\mathbf{V}^*_h(s) - \mathbf{Q}^*_h(s,a)$, which is always non-negative because of the 
definition of $\pi^*$. Our results will depend on the smallest positive gap:
$\gap \triangleq \underset{(h,s,a)\mid\mathrm{gap}_h(s,a)>0}{\min}\mathrm{gap}_{h}(s,a)$, which quantifies the difficulty of learning the optimal policy in the MDP instance.\footnote{We assume that at least one gap is positive. Otherwise all the actions are optimal and no learning is needed. }

\subsection{Offline Reinforcement Learning}
\label{subsection: offline rl}
\paragraph{Offline Learning.} For offline reinforcement learning, we want to solve an MDP $\mathcal{M} = (\S,\A,H,\mathcal{P},r)$ with unknown transitions by utilizing a given dataset collected by some unknown behavior policy $\mu$. Note that the algorithm is not allowed to perform any kind of additional sampling. The dataset is $\mathcal{D}=\{\{(s_{h,i},a_{h,i},r_{h,i})\}_{h\in[H]}\}_{i\in[N]}$, which contains $N$ trajectories collected by the behavior policy $\mu$ independently.
A $(\epsilon, \delta)$-PAC offline reinforcement learning algorithm is defined to output an $\epsilon$-optimal policy $\pi$ with probability at least $1-\delta$.

\paragraph{Assumptions.} We introduce two dataset assumptions that will be used in this paper. 

\begin{assumption}[Uniform optimal policy coverage]
    \label{assumption:uniform optimal policy coverage}
    We define the uniform optimal policy coverage coefficient to be
    \begin{equation*}
    P\triangleq \min_{\pi^*} \modify{\underset{h,s:  d_h^{\pi^*}(s,a)>0}{\min}d_h^\mu(s, a)},
    \end{equation*}
    where $\pi^*$ is an optimal policy. We assume that $P>0$. 
\end{assumption}

Assumption \ref{assumption:uniform optimal policy coverage} states that the behavior policy $\mu$ covers all the state-action pairs that some $\pi^*$ will choose with positive probability. This is a natural assumption if we want to recover the optimal policy.  

A closely related assumption is the uniform coverage assumption, i.e., all $(h,s,a)$ tuples are covered by the dataset \citep{asymptotic2020(strongeuniformcoverage),nearlyhorizonfree2021(strongeuniformcoverage),nearoptimalprovable2021(strongeuniformcoverage), fastrate2021}. Our assumption is significantly weaker as it only assumes covering the optimal policy. We will prove that under this assumption, we can identify the optimal policy with finite samples.
   
\begin{assumption}[Optimal policy coverage]
\label{assumption:relative optimal policy coverage}
We define the relative optimal policy coverage coefficient to be
\begin{equation*}
C^*\triangleq \max_{\pi^*}\underset{(h,s,a)\in[H]\times\mathcal{S}\times\mathcal{A}}{\max}\frac{d_h^{\pi^*}(s,a)}{d_h^\mu(s,a)}
\end{equation*}
with convention that $0/0=0$, where $\pi^*$ is an optimal policy. We assume that $C^*<\infty$.
\end{assumption}

Similar coverage assumption has been widely adopted in \citet{settling2022,pessimisticQ2022,pessimisticQpre2022,jin2021pessimism,rashidinejad2021bridging(C*)}. Researchers have designed algorithms based on the pessimism principle to efficiently solve offline RL problems under this assumption. Assumption \ref{assumption:relative optimal policy coverage} is usually weaker than 
Assumption \ref{assumption:uniform optimal policy coverage}. For example, if there exists one unique optimal policy $\pi^*$ and $\mu=\pi^*$, we have $C^*=1$ while $P$ can still be arbitrarily small. In addition, we always have $C^*\leq \frac{1}{P}$.

\IncMargin{1em}
\begin{algorithm}[!t]
    \DontPrintSemicolon
    \SetKwData{Left}{left}\SetKwData{This}{this}\SetKwData{Up}{up}
    \SetKwFunction{Union}{Union}\SetKwFunction{FindCompress}{FindCompress}
    \SetKwInOut{Input}{input}\SetKwInOut{Output}{output}

    \Input{Dataset $\mathcal{D}_0$, reward function $r$}
    \BlankLine
    Set $\uq_{H+1}(s,a)=0$\;
    Set $\uv_{H+1}(s,a)=0$\;
    \For{$h\leftarrow H$ \KwTo $1$}{
        Compute the empirical transition kernel $\hat{P}_h$\;
        $\hat{P}_{h,s,a}(s') = \frac{N_h(s,a,s')}{N_h(s,a)}$ with $0/0=0$\;
        \For{$s\in\mathcal{S}, a\in\mathcal{A}$}{
            $b_h(s,a)\leftarrow C_b\sqrt{\frac{\Var_{\hat{P}_{h,s,a}}(\uv_{h+1})\iota}{N_h'(s,a)}} + C_b\frac{H\iota}{N_h'(s,a)}$, where $N_h'(s,a) = N_h(s,a)\vee \iota $\;
            $\uq_h(s,a) \leftarrow \max\{0,r_h(s,a) + \hat{P}_{h,s,a}^\top\uv_{h+1} - b_h(s,a)\}$
        }
        \For{$s\in\mathcal{S}$}{
            $\uv_h(s)\leftarrow\underset{a\in\mathcal{A}}{\max}\ \uq_{h}(s,a)$\;
            $\up_h(s)\leftarrow \underset{a\in\mathcal{A}}{\arg\max}\ \uq_h(s,a)$ 
        }
    }
    \Output{policy $\up$}
\caption{VI-LCB}
\label{alg:LCB}
\end{algorithm}

\DecMargin{1em}
\IncMargin{1em}
\begin{algorithm}[!t]
    \DontPrintSemicolon
    \SetKwData{Left}{left}\SetKwData{This}{this}\SetKwData{Up}{up}
    \SetKwFunction{Union}{Union}\SetKwFunction{FindCompress}{FindCompress}
    \SetKwInOut{Input}{input}\SetKwInOut{Output}{output}

    \Input{Dataset $\mathcal{D}$, reward function $r$}
    \BlankLine
    Split $\mathcal{D}$ into 2 halves containing same number of sample trajectories, $\mathcal{D}^{\mathrm{main}}$ and $\mathcal{D}^{\mathrm{aux}}$\;
    $\mathcal{D}_0=\{\}$\;
    \For{$(h,s)\in[H]\times \mathcal{S}$}{
        $N^{\mathrm{trim}}_h(s)\leftarrow \max\{0, N^{\mathrm{aux}}_h(s) - 10\sqrt{N_h^{\mathrm{aux}}(s)\log\frac{HS}{\delta}}\}$\;
        Randomly subsample $\min\{N^{\mathrm{trim}}_h(s), N^{\mathrm{main}}_h(s)\}$ samples of transition from $(h,s)$ from $\mathcal{D}^{\mathrm{main}}$ to add 
        to $\mathcal{D}_0$
    }
    $\up\leftarrow \text{VI-LCB}(\mathcal{D}_0, r)$\;
    \Output{policy $\up$}
\caption{Subsampled VI-LCB}
\label{alg:LCB with subsampling}
\end{algorithm}
\DecMargin{1em}

\subsection{Subsampled VI-LCB}
We briefly introduce the algorithm (Algorithm \ref{alg:LCB with subsampling}) that will be used in the analysis. Value Iteration with Lower Confidence Bound (VI-LCB) was first introduced by \citet{rashidinejad2021bridging(C*)} and improved by \citet{settling2022}. The main idea is to maintain a pessimistic estimate on the value functions so that the suboptimality of the output policy only depends on the uncertainty of the optimal policy.
By utilizing the subsampling technique and Bernstein-style bonus, subsampled VI-LCB achieves the minimax sample complexity $\widetilde{O}(\frac{H^3SC^*}{\epsilon^2})$.

In the algorithm, $N_h(s,a)$ refers to the number of sample transitions starting from state $s$, taking action $a$ at time step $h$ of some given dataset, and $N_h(s)=\sum_{a\in\mathcal{A}}N_h(s,a)$. Superscripts stand for the dataset. See \citet{settling2022} for a more detailed description of the algorithm. 

\section{Finding an Exact Optimal Policy with Assumption~\ref{assumption:uniform optimal policy coverage}}
In this section, we show that we can identify the exact best policy with finite samples by utilizing the gap structure under Assumption \ref{assumption:uniform optimal policy coverage}. On the other hand, for the minimax sample complexity $\widetilde{O}(H^3SC^*\epsilon^{-2})$, it will become infinite when $\epsilon$ approaches 0. Note that directly setting $\epsilon<\mathrm{gap}_{\mathrm{min}}$ does not imply that the output policy $\pi$ is optimal. Instead, we only have 
$$\mathbf{V}^\pi_0\geq \mathbf{V}^*_0-\epsilon,$$
while $\pi$ can still be suboptimal at states visited with low probability.

\begin{theorem}
    \label{theorem:ub p}
    For an MDP $\mathcal{M}$ and a behavior policy $\mu$ with uniform optimal coverage coefficient $P$, if the number of sample trajectories satisfies
    \begin{equation*}
        N\geq \widetilde{O}\left(\frac{H^3\log\frac{1}{\delta}}{P\gap^2}\right),
    \end{equation*}
    then with probability at least $1-\delta$, Algorithm \ref{alg:LCB with subsampling} returns an optimal policy.
\end{theorem}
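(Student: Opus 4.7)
The plan is to show that under Assumption~\ref{assumption:uniform optimal policy coverage}, once $N\geq \widetilde{O}(H^3/(P\gap^2))$, the deficit $\Delta_h(s):=V^*_h(s)-\uq_h(s,\pi^*(s))$ is strictly less than $\gap$ at every $(h,s)$ with $d_h^{\pi^*}(s)>0$, for some fixed optimal policy $\pi^*$. Combined with pessimism (which gives $\uq_h(s,a)\leq Q^*_h(s,a)\leq V^*_h(s)-\gap$ for every sub-optimal $a$), this immediately forces $\up_h(s)=\arg\max_a \uq_h(s,a)$ to be an optimal action at every reachable state, so $\up$ is exactly an optimal policy.

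The first two steps are preparatory. By Assumption~\ref{assumption:uniform optimal policy coverage}, $d_h^\mu(s,\pi^*(s))\geq P$ at every reachable pair $(h,s)$, so a multiplicative Chernoff bound combined with the trimming/subsampling step of Algorithm~\ref{alg:LCB with subsampling} ensures that once $N\gtrsim \iota/P$ with $\iota:=\log(HS/\delta)$, the dataset $\mathcal{D}_0$ contains at least $n:=\Omega(NP)$ \emph{independent} samples from $(s,\pi^*(s))$, uniformly over reachable pairs. The standard pessimism lemma for VI-LCB with a Bernstein bonus then yields $\uq_h(s,a)\leq Q^*_h(s,a)$ for all $(h,s,a)$ with high probability.

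The core of the argument is a backward induction bounding $\Delta_h$ on reachable states. Expanding the update rule yields
\[
\Delta_h(s) \;=\; (P_{h,s,\pi^*(s)}-\hat{P}_{h,s,\pi^*(s)})^\top V^*_{h+1} \;+\; \hat{P}_{h,s,\pi^*(s)}^\top(V^*_{h+1}-\uv_{h+1}) \;+\; b_h(s,\pi^*(s)).
\]
Any $s'$ in the support of $\hat{P}_{h,s,\pi^*(s)}$ is reachable by $\pi^*$ at step $h+1$, and $\uv_{h+1}(s')\geq \uq_{h+1}(s',\pi^*(s'))$, so the middle term is bounded by $\hat{P}^\top\Delta_{h+1}$; the first term is absorbed into another copy of $b_h$ via Bernstein's inequality. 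Unrolling the resulting recursion gives $\Delta_h(s)\lesssim \mathbb{E}_{\pi^*,\hat{P}}\big[\sum_{h'=h}^{H} b_{h'}(s_{h'},\pi^*(s_{h'}))\bigm|s_h=s\big]$. Finally, Cauchy--Schwarz together with the law of total variance $\mathbb{E}_{\pi^*}\big[\sum_{h'}\Var_{P}(V^*_{h'+1})\big]\leq H^2$ yields $\Delta_h(s)\lesssim \sqrt{H^3\iota/(NP)} + H^2\iota/(NP)$; requiring this to fall below $\gap$ produces the claimed bound $N\gtrsim H^3\iota/(P\gap^2)$.

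The main obstacle is that the Bernstein bonus $b_h$ is computed from $\Var_{\hat P}(\uv_{h+1})$ rather than $\Var_{P}(V^*_{h+1})$; a naive substitution loses a factor of $H\|\uv-V^*\|_\infty$, creating a circular dependence on the very deficit we are trying to bound. Handling this cleanly is the technical heart of the Bernstein analysis, and I would follow the two-stage / leave-one-out strategy from \citet{settling2022} to decouple $\uv$ from the dataset used to estimate $\hat P$. A secondary subtlety is that the recursion unrolls under $\hat P$ rather than $P$, so I would close that gap with an additional concentration bound on the empirical transitions before invoking the law of total variance under $P$.
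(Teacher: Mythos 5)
Your overall strategy --- establish pessimism, show that the deficit accumulated at $(s,\pi^*(s))$ along $\pi^*$ stays below $\gap$ on reachable states via the Bernstein bonus, Cauchy--Schwarz and the law of total variance, and then conclude from the chain $\uq_h(s,\pi^*(s)) > \mathbf{V}^*_h(s)-\gap \geq \mathbf{Q}^*_h(s,a')\geq \uq_h(s,a')$ --- is exactly the paper's first proof of Theorem~\ref{theorem:ub p}, and the target rate $N\gtrsim H^3\iota/(P\gap^2)$ comes out the same way. The two points where you diverge are precisely the two obstacles you flag, and in both the paper takes a route that dissolves the obstacle rather than confronting it. First, you unroll your recursion for $\Delta_h$ under $\hat P$ and plan to transfer back to $P$ with additional concentration; the paper instead views $\uq$ as the \emph{exact} optimal $Q$-function of an imaginary MDP $\underline{\mathcal{M}}$ with the true transitions and deflated reward $\underline{r}=r-\mathbf{E}$, so that $\mathbf{V}^*_{h'}(s')-\uq^*_{h'}(s',a^*)=\sum_{h\geq h'}\mathbb{E}_{\pi^*}[\mathbf{E}_h(s_h,a_h)\mid s_{h'}=s']$ is an identity under the true dynamics (together with $\uq^{\up}\geq\uq^*$ from Lemma~\ref{lemma:Q function rank}), and no transfer step is needed. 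Second, you propose to swap $\Var_{\hat P}(\uv_{h+1})$ for $\Var_{P}(V^*_{h+1})$, which costs a $\|\uv-V^*\|_\infty$ term and forces the bootstrapping/leave-one-out argument you describe; the paper never compares $\uv$ to $V^*$ at the variance level. It only passes from $\Var_{\hat P}(\uv)$ to $\Var_{P}(\uv)$ (Lemma~\ref{lemma:tool for part decomposition}, enabled by the subsampling split) and then runs the total-variance telescoping on $\uv^{\up}$ \emph{itself} (Lemma~\ref{lemma:variance sum}), whose non-exact telescoping errors are absorbed back into bonus terms, yielding the self-bounding inequality $\sum_{h,s}d_h^{*'}(s)b_h(s,a^*)\leq \frac{1}{2}\sum_{h,s}d_h^{*'}(s)b_h(s,a^*)+O\bigl(\sqrt{H^3\iota/(NP)}+H^2\iota/(NP)\bigr)$ that is solved by rearrangement. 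Both of your workarounds can be made to close, but each adds a genuine extra layer of analysis; the imaginary-MDP identity and the self-bounding variance recursion are what keep the paper's argument short. (The paper also records a second, even shorter derivation via the deficit-thresholding lemma with the adaptive threshold $\epsilon_h(s,a)\propto(\Var_{\hat P_{h,s,a}}(\uv_{h+1})/H^2+1/H)\gap$, under which $\ddot{\mathbf{E}}_h(s,a^*)=0$ and the suboptimality is identically zero; your plan does not use that route.)
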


\begin{proof}[Sketch of Proof]
First, if for all $(h,s,a)\in[H]\times\mathcal{S}\times\mathcal{A}$ satisfying $d^*_h(s,a)>0$, we have
\begin{equation}\label{equ:uniform}
    \uq_h(s,a)> \mathbf{Q}^*_h(s,a)-\gap,
\end{equation}
then we have $\up$ is an optimal policy. This is because as $\uq$ is a pessimistic estimate of $Q^*$, we have
$$\uq_h(s,a)>\mathbf{Q}^*_h(s,a)-\gap\geq \mathbf{Q}_h^*(s,a')\geq\uq_h(s,a'),$$
for any action $a'$ that is sub-optimal. As a result, $\up$ chooses the optimal action for all $(h,s,a)\in[H]\times\mathcal{S}\times\mathcal{A}$ covered by the optimal policy, which implies $\up$ is an optimal policy.

Second, we show that \eqref{equ:uniform} is satisfied with $N\geq \widetilde{O}\left(\frac{H^4}{P\gap^2}\right)$. This is because if the number of samples at $(h,s,a)$ exceeds $\widetilde{O}({H^4}/\gap^{2})$, we can guarantee that the estimation error at that step is smaller than $\gap/H$ by Hoeffding's inequality. Then the accumulated estimation error at $\uq_h(s,a)$ can be bounded by $(H-h)\gap/H<\gap$, which implies \eqref{equ:uniform}. To further improve the dependence on $H$, we will use Bernstein's inequality and the proof is deferred to Appendix \ref{subsection: proof of ub p}.
\end{proof}

\section{Gap-dependent Upper Bounds with Assumption \ref{assumption:relative optimal policy coverage}}
In the previous section, we show that the optimal policy can be identified if Assumption \ref{assumption:uniform optimal policy coverage} is satisfied. However, the uniform optimal coverage coefficient $P$ can be very small, which makes the bound $N\geq \widetilde{O}\left(\frac{H^3}{P\gap^2}\right)$ useless. In this section, we present two results on learning an $\epsilon$-optimal policy with assumption (Assumption \ref{assumption:relative optimal policy coverage}) and we provide the proof sketch in the next section. \add{The full proof can be found in Appendix \ref{appendix:B}.}

\begin{theorem}
    \label{theorem:ub c}
    For an MDP $\mathcal{M}$ and behavior policy $\mu$ with relative optimal policy coverage coefficient $C^*$, if the number of sample trajectories satisfies
    \begin{equation*}
        N\geq \widetilde{O}\left( \frac{H^4SC^*\log\frac{1}{\delta}}{\epsilon\gap}\right),
    \end{equation*}
    Algorithm \ref{alg:LCB with subsampling} returns an $\epsilon$-suboptimal policy with 
    probability at least $1-\delta$.
\end{theorem}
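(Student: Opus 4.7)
The plan is to build on the standard subsampled VI-LCB analysis of \citet{settling2022} and augment it with the deficit thresholding technique advertised in the introduction. I would first set up the usual good event $\mathcal{E}$ of probability at least $1-\delta$ under which (i) pessimism $\uq_h(s,a) \leq \mathbf{Q}^*_h(s,a)$ holds uniformly; (ii) the subsampling step yields a deterministic lower bound $N^{\mathrm{trim}}_h(s,a) \gtrsim N\, d^\mu_h(s,a)$ (up to logarithmic corrections) for each $(h,s,a)$; and (iii) the Bernstein bonus $b_h(s,a)$ dominates the empirical Bellman error of $\uv_{h+1}$. These are the same ingredients as in the gap-free analysis, and after the standard telescoping they give
\begin{equation*}
    \mathbf{V}^*_0 - \mathbf{V}^{\up}_0 \ \lesssim\ \sum_{h=1}^H \mathbb{E}_{\pi^*}\bigl[b_h(s_h,\pi^*(s_h))\bigr] + (\text{lower-order in } 1/N).
\end{equation*}

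Next I would apply deficit thresholding, the novel step. The gap-free argument would invoke Cauchy--Schwarz together with the Law of Total Variance on the display above to obtain the worst-case rate $\sqrt{H^3 S C^*/N}$. To squeeze out an additional factor of $1/\sqrt{N}$ using the gap, I would instead use the elementary trade-off $b \leq \tau + b^2/\tau$ with the choice $\tau \asymp \gap/H$. The reason this choice is admissible---and this is the conceptual content of \emph{deficit thresholding}---is that whenever the local deficit $\mathbf{Q}^*_h(s,\pi^*(s)) - \uq_h(s,\pi^*(s))$ is below $\gap$, the greedy policy $\up$ necessarily picks an optimal action at $(h,s)$ and contributes no first-order error. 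One applies the $b^2/\tau$ substitution only on the ``large-bonus'' states where $b_h > \tau$, folding the small-bonus states into the downstream recursion, so the naive additive $H\tau$ slack is not actually charged at states where $\up$ already picks correctly.

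Bounding the resulting quadratic term is then routine. After thresholding, the dominant piece is
\begin{equation*}
    \mathbf{V}^*_0 - \mathbf{V}^{\up}_0 \ \lesssim\ \frac{H}{\gap} \sum_{h=1}^H \mathbb{E}_{\pi^*}\bigl[b_h(s_h,\pi^*(s_h))^2\bigr].
\end{equation*}
Since $b_h^2 \lesssim \Var_{P_{h,s,a}}(\mathbf{V}^*_{h+1})/N_h(s,a) + H^2/N_h(s,a)^2$, replacing $N_h$ by its deterministic lower bound from the subsampled good event, paying a factor $C^*$ for the change of measure from $\mu$ to $\pi^*$, and invoking $\sum_h \mathbb{E}_{\pi^*}[\Var_{P}(\mathbf{V}^*_{h+1})] \leq H^2$ (Law of Total Variance) yields $O\bigl(H^4 S C^*/(N\gap)\bigr)$ for the dominant term, where the $S$ comes from the state-level union bound needed for the pointwise concentration of $b_h$. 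Requiring this to be at most $\epsilon$ then gives $N \gtrsim H^4 S C^*/(\epsilon\,\gap)$.

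The hardest part will be making the thresholding argument recursive rather than pointwise. Because $b_h$ enters a value-iteration error that propagates across levels, a naive per-step split either introduces an $O(\gap)$ additive slack that can dominate $\epsilon$ in the interesting regime $\gap \gg \epsilon$, or it double-counts errors across levels. The fix I would pursue is to maintain an indicator for the event ``$\up$ disagrees with $\pi^*$ at $(h,s)$'', use the gap condition to argue that on this indicator the accumulated downstream deficit is at least $\gap$ (so the bound $b \leq b^2/\tau$ is in fact tight there), and recurse through the agreement set using only downstream deficits. Closing this recursion cleanly so that the final power of $H$ stays at $H^4$ rather than blowing up to $H^5$ or worse is where the real technical work lives.
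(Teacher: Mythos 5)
Your proposal follows essentially the same route as the paper: threshold the per-step deficit at $\gap/(2H)$, absorb the resulting additive $\gap/2$ slack into half of the suboptimality gap at the first state where $\up$ disagrees with $\pi^*$ (this is precisely the paper's Theorem \ref{theorem:main theorem} with its prefix-trajectory decomposition over $\Psi(\pi^*,\up)$), convert the surviving deficit via $\max\{0,x-\tau\}\le y^2/\tau$, and close with the change of measure $d^*_h(s)/d^\mu_h(s,a^*)\le C^*$. One small correction to your accounting: after that change of measure the $d^*_h(s)$ weights cancel, so the law of total variance is not actually available here---the paper simply uses the crude bounds $b_h^2\lesssim H^2\iota^2/N_h'(s,a)$ and sums $C^*$ over all $HS$ pairs (which is where the factor $S$ comes from, not from a union bound), and genuinely harvesting the variance to save a factor of $H$ is exactly what forces the extra $N\gtrsim H\iota/P$ condition in Theorem \ref{theorem:ub pc}.
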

Theorem \ref{theorem:ub c} shows that to learn an $\epsilon$-optimal policy, we only need $\widetilde{O}(1/\epsilon\gap)$ samples, which significantly improves the minimax sample complexity $\widetilde{O}(1/\epsilon^2)$ when $\epsilon\ll\gap$. 

Now we show that we can further improve the bound if both Assumption \ref{assumption:uniform optimal policy coverage} and Assumption \ref{assumption:relative optimal policy coverage} are satisfied. 

\begin{theorem}
    \label{theorem:ub pc}
    For an MDP $\mathcal{M}$ and behavior policy $\mu$ with uniform optimal policy coverage coefficient $P$ and relative optimal policy coverage coefficient $C^*$,  if the number of sample trajectories satisfies
    \begin{equation*}
        N\geq \widetilde{O} \left(\frac{H^3SC^*\log\frac{1}{\delta}}{\epsilon\gap}+\frac{H\log\frac{1}{\delta}}{P}\right),
    \end{equation*}
    Algorithm \ref{alg:LCB with subsampling} returns an $\epsilon$-suboptimal policy with 
    probability at least $1-\delta$.
\end{theorem}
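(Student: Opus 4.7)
The strategy is to combine the deficit thresholding analysis used in Theorem~\ref{theorem:ub c} with the per-state sample lower bound enabled by Assumption~\ref{assumption:uniform optimal policy coverage}. The $H$-saving over Theorem~\ref{theorem:ub c} comes from the following observation: once uniform coverage forces $N_h(s,\pi^*(s))\gtrsim NP$ on every state of the optimal trajectory, the low-count term of the Bernstein bonus no longer accumulates into an extra factor of $H$, and the remaining variance-driven term can be controlled by the standard law-of-total-variance argument. The additive $\widetilde{O}(H/P)$ in the bound is the burn-in needed to make the per-state concentration hold on a single high-probability event.

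First I would condition on the standard good event of Algorithm~\ref{alg:LCB with subsampling}: the Bernstein LCB is valid, so $\uq_h(s,a)\leq \mathbf{Q}^*_h(s,a)$ for all $(h,s,a)$, and the subsampled counts satisfy $N_h(s,a)\asymp N d^\mu_h(s,a)$ whenever $N d^\mu_h(s,a)\gtrsim \iota$. Standard pessimism then yields the decomposition $\mathrm{Suboptimal}(\up)\lesssim \sum_{h=1}^H\mathbb{E}_{\pi^*}[b_h(s_h,\pi^*(s_h))]$. Under Assumption~\ref{assumption:uniform optimal policy coverage}, every $(h,s)$ visited by $\pi^*$ satisfies $d^\mu_h(s,\pi^*(s))\geq P$, so once $N\gtrsim H\iota/P$ a multiplicative Chernoff bound gives $N_h(s,\pi^*(s))\geq \iota$ everywhere along an optimal trajectory and the $\max\{N_h,\iota\}$ truncation in the bonus is never active. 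This is precisely the regime that produces the $H/P$ burn-in summand.

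Next I would import the deficit thresholding machinery from the proof of Theorem~\ref{theorem:ub c}: split each $b_h(s_h,\pi^*(s_h))$ by whether it exceeds a threshold $\tau\asymp \gap/H$. Terms below $\tau$ cannot flip the action chosen by $\up$ at a state covered by $\pi^*$ (the same margin argument as in the proof sketch of Theorem~\ref{theorem:ub p}) and so do not contribute to the suboptimality; terms above $\tau$ are bounded by $b_h^2/\gap$. For these, the variance part of $b_h^2$ is handled by changing measure from $\pi^*$ to $\mu$ at cost $C^*$, replacing the empirical variance and $\uv_{h+1}$ by their population and optimal counterparts up to lower-order terms, and invoking the law of total variance $\sum_h \mathbb{E}_{\pi^*}[\Var_{p_{h,s_h,\pi^*(s_h)}}(\mathbf{V}^*_{h+1})]\leq H^2$; the linear part $H\iota/N_h'(s,\pi^*(s))$ of $b_h^2$ is already below $\tau$ by the burn-in step and contributes nothing. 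A union bound over $(h,s)$ then gives a variance-driven suboptimality of $\widetilde{O}(H^3SC^*/(N\gap))$, which is at most $\epsilon$ when $N\gtrsim H^3SC^*/(\epsilon\gap)$.

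The hard part will be showing cleanly that these two contributions add rather than multiply. One has to verify simultaneously that (i) the burn-in $N\gtrsim H\iota/P$ is strong enough to push the linear term of the Bernstein bonus below $\tau\asymp\gap/H$ on the \emph{entire} optimal trajectory, so that deficit thresholding eliminates it rather than bequeathing an extra $H$ factor; and (ii) the trimming/subsampling concentration of \citet{settling2022}, the per-state Chernoff bound for uniformly covered states, and the Bernstein LCB all hold on a single $(1-\delta)$-event. Getting the constants matched so that the burn-in cost is truly $H/P$ (not $H^2/P$ or $H/(P\gap)$), and so that the $C^*$-based variance term continues to benefit from the sharper change of measure proved under $C^*$ alone, is where the bookkeeping requires the most care.
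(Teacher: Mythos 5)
Your overall architecture (pessimism, deficit thresholding, change of measure via $C^*$, law of total variance, and an $H/P$ burn-in) matches the paper's, but the specific mechanism you propose for saving the factor of $H$ over Theorem~\ref{theorem:ub c} does not work, and this is precisely the step where the paper does something different. With a \emph{fixed} threshold $\tau\asymp\gap/H$, the clipped deficit is bounded by $b_h^2/\tau \asymp (H/\gap)\,\Var_{\hat P_{h,s,a^*}}(\uv_{h+1})\iota/N_h'(s,a^*)$ plus lower-order terms, and you then want to apply \emph{both} the change of measure $1/N_h'(s,a^*)\lesssim C^*/(N d_h^*(s))$ \emph{and} the total-variance bound $\sum_{h,s} d_h^*(s)\Var_{p_{h,s,a^*}}(\mathbf{V}^*_{h+1})\leq O(H^2)$. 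These two steps are incompatible when applied to the same sum: the change of measure cancels the $d_h^*(s)$ weights, leaving the \emph{unweighted} sum $\sum_{h,s}\Var$, which is only $O(H^3S)$, and you land back at $\widetilde{O}(H^4SC^*/(N\gap))$ --- exactly Theorem~\ref{theorem:ub c}. (Using $1/N_h'\lesssim 1/(NP)$ instead preserves the weights and the total-variance gain, but then yields $\widetilde{O}(H^3\iota/(NP\gap))$, which is a different bound from the one claimed.) The paper escapes this tension by making the threshold itself \emph{variance-adaptive}, $\epsilon_h(s,a)\propto\bigl(\Var_{\hat P_{h,s,a}}(\uv_{h+1})/H^2+1/H\bigr)\gap$ (Corollary~\ref{corollary:upper bound 3}, Lemma~\ref{lemma: PC gap trick}): the empirical variance in $b_h^2$ cancels against the variance in the denominator $\epsilon_h$, leaving a per-tuple bound $\ddot{\mathbf{E}}_h(s,a^*)\lesssim H^2\iota/(N_h'(s,a^*)\gap)$ that is variance-free, so the $C^*$ change of measure applies cleanly and gives $H^3SC^*\iota/(N\gap)$. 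The law of total variance and the $N\gtrsim H\iota/P$ burn-in are used only to verify that this adaptive threshold still satisfies the gap-restriction condition \eqref{requirement:theorem 1} (Lemma~\ref{lemma:PC limited difference}), i.e.\ that $\sum_{h,s}d_h^{*}(s)\epsilon_h(s,a^*)\lesssim\gap$ --- not to bound the clipped deficit itself.

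A secondary quantitative issue: you claim the burn-in $N\gtrsim H\iota/P$ pushes the linear Bernstein term $H\iota/N_h'(s,a^*)$ below $\tau\asymp\gap/H$ so that it ``contributes nothing.'' That would require $N_h'\gtrsim H^2\iota/\gap$, i.e.\ $N\gtrsim H^2\iota/(P\gap)$, which is a strictly stronger burn-in than the $H/P$ term in the theorem. In the paper the linear term is not eliminated; it is simply dominated by $H^2\iota/(N_h'\gap)$ (since $\gap\leq H$) and absorbed into the $H^3SC^*/(\epsilon\gap)$ term after the change of measure. You would need to either adopt the adaptive threshold or find another way to reconcile these two steps before the claimed $H^3$ dependence can be obtained.
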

Theorem \ref{theorem:ub pc} improves an $H$ factor compared with Theorem \ref{theorem:ub c}, with the cost of an extra $H/P$ term. As the additional term is a constant with respect to $\epsilon$ \add{and $\mathrm{gap}_{\min}$}, the bound is improved when $\epsilon\mathrm{gap}_{\min}$ is small. We believe this $H/P$ can be removed and we leave it to future works.

\section{Main Proof Techniques}
In this section, we will provide a proof sketch for Theorem \ref{theorem:ub c} and Theorem \ref{theorem:ub pc}. 
\subsection{Pessimistic Algorithms}
First, we define pessimistic algorithms and imaginary MDPs $\underline{\mathcal{M}}=(\mathcal{S},\mathcal{A},H,\mathcal{P},\underline{r})$ determined by the pessimistic algorithms. Our analysis will generally hold for all pessimistic algorithms defined by Definition \ref{definition: pessimistically estimated MDP} and we will show that VI-UCB (Algorithm \ref{alg:LCB with subsampling}) is a pessimistic algorithm. We use VI-LCB  only to derive the final sample complexity guarantees.  

\begin{definition}[Pessimistic algorithms]
    An offline learning algorithm with output policy $\up$ is pessimistic if with probability at least $1-\delta$, the following arguments hold,
    \begin{enumerate}
        \item It maintains a pessimistic estimate  $\uq$ of the true $Q^*$.
        \item $\uq$ is the optimal Q function of an imaginary MDP $\underline{\mathcal{M}}=(\mathcal{S},\mathcal{A},H,\mathcal{P},p_0,\underline{r})$, where $\underline{r}_h(s,a)\leq r_h(s,a)$ for all  $(h,s,a)\in[H]\times \mathcal{S}\times\mathcal{A}$.\footnote{Here we loosen the definition of MDP by allowing the reward function to have negative value. $\underline{r}$ maybe negative, but as will be shown in appendix, $\uv^{\up}$ and $\uq^{\up}$ are still non-negative, thus does not affect our analysis.} 
        \item $\up$ is the greedy policy with respect to $\uq$. 
    \end{enumerate}
\end{definition}
Most of the existing offline RL algorithms are pessimistic algorithms and we will also prove it for VI-LCB \citep{settling2022} in Appendix \ref{appendix:B}.

\begin{definition}[Deficit]
    For any pessimistic algorithm and the corresponding imaginary MDP $\underline{\mathcal{M}}=(\mathcal{S},\mathcal{A},H,\mathcal{P},p_0,\underline{r})$, we define the deficit to be
    \begin{equation*}
        \mathbf{E}_h(s,a) = r_h(s,a) - \underline{r}_h(s,a),\ \forall (h,s,a)\in[H]\times\S\times\A.
    \end{equation*}
\end{definition}
From the definition of pessimistic algorithms, we have $\mathbf{E}_h(s,a)\geq 0$ immediately. Intuitively, deficit stands for how pessimistic the estimates are. Note that the deficit is related to the algorithm itself and usually we can bound it by utilizing concentration inequalities. 

\subsection{Deficit Thresholding for Analysising LCB-style Algorithms}
\label{subsection:clip tech}

By defining $\uv^{\pi}$ to be the value function of policy $\pi$ in $\underline{\mathcal{M}}$ and recalling that $\up$ is the optimal policy in $\underline{\mathcal{M}}$, we have $\uv_0^*\leq \uv_0^{\up}\leq \mathbf{V}_0^{\up}\leq\mathbf{V}_0^*$  and thus
\begin{equation*}
    \mathbf{V}_0^* - \uv_0^* = \sum_{h=1}^H\mathbb{E}_{\pi^*}[r_h(s_h,a_h)-\underline{r}_h(s_h,a_h)] = \sum_{h=1}^H\mathbb{E}_{\pi^*}[\mathbf{E}_h(s_h,a_h)]
\end{equation*}
can upper bound the suboptimality of $\up$. Surprisingly, we will show that even if we threshold the deficit, a similar upper bound still holds.
Define the thresholded deficits and the corresponding reward functions as $$\ddot{\mathbf{E}}_h(s,a) \triangleq \max\{0, \mathbf{E}_h(s,a)-\epsilon_{h}(s,a)\},\ \ddot{r}_h(s,a) \triangleq r_h(s,a) - \ddot{\mathbf{E}}_h(s,a),\ \forall (h,s,a)\in[H]\times\S\times\A,$$  
for any non-negative threshold function $\epsilon_h(s,a)$. Then we can define a thresholded MDP $\ddot{\mathcal{M}}=(\mathcal{S},\mathcal{A},H,\mathcal{P},\ddot{r})$ and we use
$\ddot{\mathbf{V}}$ to denote the value function in $\ddot{\mathcal{M}}$. Now we present the key lemma showing that if the thresholding function satisfies certain conditions related to $\gap$, then the suboptimality of $\up$ can still be bounded by the thresholded deficit. 
\begin{lemma}
    \label{theorem:clip theorem(informal)}
    Suppose for a thresholding function $\epsilon_{h}(s,a)$, for all $(h,s)\in[H]\times\S$, we have
    \begin{equation}
        \label{requirement:theorem 1}
        \uv^*_h(s) + \frac{\gap}{2}\geq \ddot{\mathbf{V}}^*_h(s).
    \end{equation}
    Then we can bound the suboptimality of $\up$ by the thresholded deficit:
    \begin{equation*}
        \mathrm{suboptimal}(\up)=\mathbf{V}_0^* - \mathbf{V}_0^{\up}\leq \mathbf{V}_0^* - \uv_0^*\leq 2 (\mathbf{V}_0^* - \ddot{\mathbf{V}}_0^*) = 2\sum_{h=1}^H \mathbb{E}_{\pi^*}[\ddot{\mathbf{E}}_h(s_h,a_h)].
    \end{equation*}
\end{lemma}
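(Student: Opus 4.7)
The plan is to establish the lemma's chain of three estimates in order. The first inequality $\mathbf{V}_0^*-\mathbf{V}_0^{\up}\leq \mathbf{V}_0^*-\uv_0^*$ is a direct pessimism argument: since $\up$ is greedy with respect to $\uq=\underline{\mathbf{Q}}^*$, it is optimal in $\underline{\mathcal{M}}$, hence $\uv_0^{\up}=\uv_0^*$; since $\underline{r}\leq r$, rolling out $\up$ in the true MDP can only increase its value, giving $\mathbf{V}_0^{\up}\geq \uv_0^{\up}$. Chaining these yields the claim. The trailing identity $\mathbf{V}_0^*-\ddot{\mathbf{V}}_0^*=\sum_h\mathbb{E}_{\pi^*}[\ddot{\mathbf{E}}_h(s_h,a_h)]$ then follows from the reward-difference formula applied along $\pi^*$, once I verify $\ddot{\mathbf{V}}_0^*=\ddot{\mathbf{V}}_0^{\pi^*}$. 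To establish this, I would argue that if some action $a$ with $\mathrm{gap}_h(s,a)\geq \gap$ were optimal in $\ddot{\mathcal{M}}$, the hypothesis $\ddot{\mathbf{V}}_h^*(s)\leq \uv_h^*(s)+\gap/2$ together with $\uv_h^*(s)\leq \mathbf{V}_h^*(s)$ would leave insufficient room for $\ddot{\mathbf{V}}_h^*(s)$ to exceed $\ddot{\mathbf{Q}}_h^*(s,\pi_h^*(s))$, contradicting optimality of $a$.

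The technical core is the middle step $\mathbf{V}_0^*-\uv_0^*\leq 2(\mathbf{V}_0^*-\ddot{\mathbf{V}}_0^*)$. My plan is to prove this by backward induction on $h$ with the invariant $\mathbf{V}_h^*(s)-\uv_h^*(s)\leq 2(\mathbf{V}_h^*(s)-\ddot{\mathbf{V}}_h^*(s))$ for all $s$. The base $h=H+1$ is vacuous. For the inductive step at $(h,s)$, I pick $\ddot{a}=\ddot{\pi}_h^*(s)$ and expand each of the three value functions via the Bellman equation at $\ddot{a}$, using $\uv_h^*(s)\geq \underline{r}_h(s,\ddot{a})+p_{h,s,\ddot{a}}^{\top}\uv_{h+1}^*$. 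The inductive hypothesis pushes the factor of two through the transition expectation, leaving a per-step residual tying $\mathbf{E}_h(s,\ddot{a})$ to $2\ddot{\mathbf{E}}_h(s,\ddot{a})$. This residual is controlled by case-splitting on whether $\ddot{a}$ is optimal in $\mathcal{M}$: if yes, the threshold hypothesis alone handles the per-step error; if no, $\mathrm{gap}_h(s,\ddot{a})\geq \gap$ and the gap is used as a buffer to absorb the slack.

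The main obstacle is exactly this residual analysis, because the threshold hypothesis is a state-wise (aggregate) condition on optimal values, while the induction requires action-wise control of $\mathbf{E}-2\ddot{\mathbf{E}}$. Bridging these requires relating $\ddot{\pi}_h^*$ to $\pi_h^*$ through the Bellman equation and then deploying the gap. If the clean pointwise induction does not close, the fallback is to first derive the weaker additive bound $\mathbf{V}_0^*-\uv_0^*\leq \sum_h\mathbb{E}_{\pi^*}[\ddot{\mathbf{E}}_h]+\gap/2$ from a single application of the hypothesis plus the reward-difference identity along $\pi^*$, and then convert it to the multiplicative form by case-splitting: when $\sum_h\mathbb{E}_{\pi^*}[\ddot{\mathbf{E}}_h]\geq \gap/2$, the factor-two bound is immediate; when it is smaller, the gap assumption can be propagated through the Bellman equations to show that $\up$ agrees with $\pi^*$ on all $\pi^*$-reachable states, making $\mathrm{suboptimal}(\up)=0$ and the bound vacuous.
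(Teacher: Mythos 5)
Your first and last links in the chain are fine (pessimism gives $\mathbf{V}_0^{\up}\geq\uv_0^{\up}\geq\uv_0^{\pi^*}$, and the trailing identity is just the reward-difference formula once one notes that in this paper $\ddot{\mathbf{V}}^*$ and $\uv^*$ denote the values of a fixed optimal policy $\pi^*$ evaluated in $\ddot{\mathcal{M}}$ and $\underline{\mathcal{M}}$, not the optimal values of those MDPs, so no optimality-in-$\ddot{\mathcal{M}}$ argument is needed). The problem is the middle step, and neither of your two routes closes it. For the backward induction, expanding at $\ddot a$ and invoking the inductive hypothesis reduces the step to showing $\mathbf{E}_h(s,\ddot a)\leq \mathrm{gap}_h(s,\ddot a)+2\ddot{\mathbf{E}}_h(s,\ddot a)$; this fails precisely when $\ddot a$ is an optimal action and $0<\mathbf{E}_h(s,\ddot a)\leq\epsilon_h(s,\ddot a)$, so that $\ddot{\mathbf{E}}_h=0$ while $\mathbf{E}_h>0$. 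Condition \eqref{requirement:theorem 1} only supplies a budget of $\gap/2$ aggregated over the remaining trajectory, which cannot be charged step-by-step to a pointwise multiplicative invariant; you identify this obstacle but do not overcome it. The fallback's Case 2 rests on a false implication: $\sum_h\mathbb{E}_{\pi^*}[\ddot{\mathbf{E}}_h(s_h,a_h)]<\gap/2$ is an occupancy-weighted bound, so a state with $d_h^{*}(s)=O(\gap/H)$ may carry a conditional deficit of order $H$ without violating it, and at such a state $\uq_h(s,a^*)$ can fall below $\uq_h(s,a')$ for a suboptimal $a'$. Then $\up$ disagrees with $\pi^*$ on a $\pi^*$-reachable state and $\mathrm{suboptimal}(\up)>0$, so the dichotomy does not yield the factor-two bound.

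The paper's proof (Theorem \ref{theorem:main theorem}) works differently and is worth internalizing: it chooses $\pi^*$ to agree with $\up$ wherever $\up$ is already optimal, decomposes both $\mathbf{V}_0^*-\mathbf{V}_0^{\up}$ and $\mathbf{V}_0^*-\ddot{\mathbf{V}}_0^*$ over the prefix trajectories ending at the \emph{first disagreement} $(h_\psi,s_\psi)$, and applies hypothesis \eqref{requirement:theorem 1} locally at that state: since $\underline a=\up_{h_\psi}(s_\psi)$ is suboptimal there,
\begin{equation*}
\ddot{\mathbf{V}}^*_{h_\psi}(s_\psi)\;\leq\;\uv^*_{h_\psi}(s_\psi)+\tfrac{\gap}{2}\;\leq\;\mathbf{V}^{\up}_{h_\psi}(s_\psi)+\tfrac{1}{2}\mathrm{gap}_{h_\psi}(s_\psi,\underline a),
\end{equation*}
so the factor of two is extracted by halving $\mathrm{gap}_{h_\psi}(s_\psi,\underline a)$ — the gap of the action $\up$ actually takes — rather than by trying to dominate $\mathbf{E}$ by $2\ddot{\mathbf{E}}$ pointwise. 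The essential ingredient your proposal discards is that \eqref{requirement:theorem 1} holds at \emph{every} state, in particular at low-probability disagreement states, whereas your additive bound uses it only once at the initial distribution. To repair your argument you would need to localize the case split to each first-disagreement state, at which point you have essentially reconstructed the paper's proof.
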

\add{See Appendix \ref{ssection: main theorem} for the rigorous proof, where this lemma is decomposed and restated as Theorem \ref{theorem:main theorem} and Definition \ref{def: gap restriction event}.}
One way to satisfy \eqref{requirement:theorem 1} is to set $\epsilon_{h}(s,a)=\frac{\gap}{2H}$ for all $(h,s,a)\in[H]\times\S\times\A$. As a result, the reward at each timestep $h$ is increased by at most $\frac{\gap}{2H}$ after thresholding, so the overall increase of the value function is bounded by $\gap/2$. 
This kind of $\epsilon_{h}(s,a)$ leads to the following corollary. 
\begin{corollary}
    \label{corollary:upper bound 1}
    For any pessimistic algorithm, we have
    \begin{equation*}
        \mathbf{V}_0^* - \mathbf{V}_0^{\up}\leq 2\sum_{h=1}^H\mathbb{E}_{\pi^*}\left[\max\{0,\mathbf{E}_h(s_h,a_h) - \frac{\gap}{2H}\}\right].
    \end{equation*}
\end{corollary}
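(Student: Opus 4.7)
The plan is to obtain Corollary~\ref{corollary:upper bound 1} as an immediate instantiation of Lemma~\ref{theorem:clip theorem(informal)} with the constant thresholding function $\epsilon_h(s,a) \equiv \tfrac{\gap}{2H}$. So the only real work is to verify that this choice satisfies the hypothesis \eqref{requirement:theorem 1}, namely $\uv^*_h(s) + \tfrac{\gap}{2} \ge \ddot{\mathbf{V}}^*_h(s)$ for every $(h,s)$.

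First I would compare the pessimistic reward $\underline{r}_h$ with the thresholded reward $\ddot{r}_h$ pointwise. By definition, $\ddot{r}_h(s,a) - \underline{r}_h(s,a) = r_h(s,a) - \ddot{\mathbf{E}}_h(s,a) - \underline{r}_h(s,a) = \mathbf{E}_h(s,a) - \ddot{\mathbf{E}}_h(s,a) = \min\{\mathbf{E}_h(s,a),\,\epsilon_h(s,a)\} \le \tfrac{\gap}{2H}$ for every $(h,s,a)$. In other words, $\ddot{\mathcal{M}}$ and $\underline{\mathcal{M}}$ share the same transition kernel, and the reward of $\ddot{\mathcal{M}}$ exceeds that of $\underline{\mathcal{M}}$ by at most $\tfrac{\gap}{2H}$ at every step.

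Next I would push this through a standard value-function comparison. For any fixed policy $\pi$ and any $h$, applying the Bellman equation in both MDPs gives
\begin{equation*}
    \ddot{\mathbf{V}}^\pi_h(s) - \uv^\pi_h(s) = \mathbb{E}_{\pi}\!\left[\sum_{h'=h}^{H}\bigl(\ddot{r}_{h'}(s_{h'},a_{h'}) - \underline{r}_{h'}(s_{h'},a_{h'})\bigr)\,\Big|\,s_h=s\right] \le (H-h+1)\cdot\frac{\gap}{2H} \le \frac{\gap}{2}.
\end{equation*}
Taking $\pi$ to be the optimal policy for $\ddot{\mathcal{M}}$ at $(h,s)$, we get $\ddot{\mathbf{V}}^*_h(s) \le \uv^{\pi}_h(s) + \tfrac{\gap}{2} \le \uv^*_h(s) + \tfrac{\gap}{2}$, which is exactly the hypothesis \eqref{requirement:theorem 1}.

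With \eqref{requirement:theorem 1} verified, I would invoke Lemma~\ref{theorem:clip theorem(informal)} directly to get $\mathbf{V}_0^* - \mathbf{V}_0^{\up} \le 2\sum_{h=1}^H \mathbb{E}_{\pi^*}[\ddot{\mathbf{E}}_h(s_h,a_h)]$, and then substitute the definition $\ddot{\mathbf{E}}_h(s,a) = \max\{0,\mathbf{E}_h(s,a) - \tfrac{\gap}{2H}\}$ to match the stated bound. There is no real obstacle here; the only thing to be careful about is that the uniform choice $\epsilon_h(s,a) \equiv \tfrac{\gap}{2H}$ is the tightest constant threshold that still telescopes to at most $\tfrac{\gap}{2}$ over the horizon $H$, which is precisely what \eqref{requirement:theorem 1} demands. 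Any larger constant threshold would over-credit the algorithm and break the hypothesis; any smaller one would weaken the corollary unnecessarily.
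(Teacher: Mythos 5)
Your proposal is correct and follows essentially the same route as the paper: both reduce the corollary to verifying hypothesis \eqref{requirement:theorem 1} for the constant threshold $\epsilon_h(s,a)=\frac{\gap}{2H}$, by observing that $\ddot{r}_h-\underline{r}_h=\mathbf{E}_h-\ddot{\mathbf{E}}_h\leq \frac{\gap}{2H}$ pointwise and telescoping this over at most $H$ steps to get $\frac{\gap}{2}$, then invoking Lemma~\ref{theorem:clip theorem(informal)}. The only (harmless) difference is notational: the paper evaluates both $\ddot{\mathbf{V}}^*$ and $\uv^*$ under the fixed optimal policy $\pi^*$ of $\mathcal{M}$, whereas you prove the per-policy bound uniformly over all policies, which subsumes the paper's version.
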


In addition, note that $\mathbf{E}_h(s_h,a_h)$ depends on variance of the transition $p_{h,s_h,a_h}$ as if the transition variance is small, the estimate will be accurate and the deficit will be small. This inspires us to threshold adaptively based on the variance. We design the following adaptive threshold function:
\begin{equation*}
    \epsilon_h(s,a) \propto \frac{\Var_{\hat{P}_{h,s,a}}(\uv_{h+1})}{H^2}\gap.
\end{equation*}
It turns out that $\widetilde{\Omega}(\frac{H}{P})$ sample complexity is enough to guarantee this kind of threshold function satisfies \eqref{requirement:theorem 1}. (See Lemma \ref{lemma:PC limited difference} in Appendix \ref{subsubsection: tools for the proof of theorem ub pc})
\begin{corollary}\label{corollary:upper bound 3}
    For any pessimistic algorithm and number of samples $N\geq \widetilde{O}(\frac{H}{P})$, we have
    \begin{equation*}
        \mathbf{V}_0^* - \mathbf{V}_0^{\up}\leq 2\sum_{h=1}^H\mathbb{E}_{\pi^*}\left[\max\{0,\mathbf{E}_h(s_h,a_h) -\frac{\Var_{\hat{P}_{h,s,a}}(\uv^{\up}_{h+1})}{H^2}\gap - \frac{\gap}{4H}\}\right].
    \end{equation*}
\end{corollary}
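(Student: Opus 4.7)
The plan is to apply Lemma \ref{theorem:clip theorem(informal)} to the adaptive threshold function
\[
\epsilon_h(s,a) \;=\; c \cdot \frac{\Var_{\hat{P}_{h,s,a}}(\uv_{h+1}^{\up})}{H^2}\gap \;+\; \frac{\gap}{4H},
\]
for an appropriate constant $c>0$, and read off the stated bound from the conclusion of that lemma. The constant additive term $\gap/(4H)$ is included so that, along any trajectory, the pessimistic algorithm gains at most $\gap/4$ in value from this piece alone (summing $H$ copies), and the variance-based piece has to account for at most the remaining $\gap/4$ slack in condition \eqref{requirement:theorem 1}. Once \eqref{requirement:theorem 1} is in hand, Lemma \ref{theorem:clip theorem(informal)} immediately yields
\[
\mathbf{V}_0^* - \mathbf{V}_0^{\up} \;\le\; 2\sum_{h=1}^H\mathbb{E}_{\pi^*}\!\left[\max\!\left\{0,\,\mathbf{E}_h(s_h,a_h)-\epsilon_h(s_h,a_h)\right\}\right],
\]
which, after absorbing the positive constant $c$ into the $\widetilde{O}$-notation (or by tracking it carefully), is exactly the claimed inequality.

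The main work is therefore to verify \eqref{requirement:theorem 1}, i.e.\ to show $\ddot{\mathbf{V}}^*_h(s) - \uv^*_h(s) \le \gap/2$ for all $(h,s)$. I would do this by a standard perturbation argument: since $\ddot{r}_h - \underline{r}_h = \min\{\mathbf{E}_h,\epsilon_h\} \le \epsilon_h$, the difference $\ddot{\mathbf{V}}^*_h(s) - \uv^*_h(s)$ is upper bounded by $\max_{\pi}\sum_{h'=h}^H \mathbb{E}_{\pi,\ddot{\mathcal{M}}}[\epsilon_{h'}(s_{h'},a_{h'})\mid s_h=s]$. Splitting this expectation into the contributions from $\gap/(4H)$ and from the variance term gives at most
\[
\frac{\gap}{4} \;+\; \frac{c\,\gap}{H^2}\max_{\pi}\sum_{h'=1}^H \mathbb{E}_{\pi,\ddot{\mathcal{M}}}\!\bigl[\Var_{\hat{P}_{h',s_{h'},a_{h'}}}(\uv_{h'+1}^{\up})\bigr].
\]
It therefore suffices to show that the expected sum of these empirical variances is $\widetilde{O}(H^2)$, which I would establish in two steps: (i) relate $\Var_{\hat{P}}(\uv^{\up})$ to $\Var_{P}(\mathbf{V}^*)$ up to a concentration error, using Bernstein's inequality under the event $N_h(s,a)\gtrsim\log(1/\delta)/P$ that holds with high probability once $N\gtrsim \widetilde{O}(H/P)$; and (ii) apply the law of total variance so that $\sum_{h'=1}^H \mathbb{E}_{\pi}[\Var_{P_{h',s,a}}(\mathbf{V}^*_{h'+1})] \le H^2$ for any policy $\pi$, since all value functions lie in $[0,H]$.

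The technical obstacle will be step (i): I need the empirical variance $\Var_{\hat{P}}(\uv^{\up})$ of the estimated value function, under an arbitrary policy in the thresholded MDP $\ddot{\mathcal{M}}$, to be comparable to the true variance $\Var_{P}(\mathbf{V}^*)$ under that same policy. Handling this cleanly requires (a) a uniform bound $\|\uv^{\up}-\mathbf{V}^*\|_\infty \le O(H)$ with room to convert it into a $\Var$ approximation via $|\Var_{p}(f)-\Var_{p}(g)|\lesssim H\|f-g\|_\infty$, and (b) showing that the visitation measure of the policy realizing the maximum in $\ddot{\mathcal{M}}$ does not put too much mass on states with small $N_h(s,a)$, so that empirical concentration applies. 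This is precisely the content of the cited Lemma \ref{lemma:PC limited difference}, which I would invoke after reducing the problem to the two bullets above; the $N\ge\widetilde{O}(H/P)$ sample budget is dictated exactly by the concentration step, as it is the threshold at which every state visited by an optimal policy has enough samples for the Bernstein bound on $|\hat{P}-P|$ to kick in.
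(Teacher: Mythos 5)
Your overall architecture matches the paper's: invoke Lemma~\ref{theorem:clip theorem(informal)} with the adaptive threshold $\epsilon_h(s,a)\propto \frac{\Var_{\hat{P}_{h,s,a}}(\uv^{\up}_{h+1})}{H^2}\gap+\frac{\gap}{H}$, and verify condition~\eqref{requirement:theorem 1} by showing the constant piece contributes at most $\gap/4$ while the variance piece requires $\sum_{h}\mathbb{E}_{\pi^*}\bigl[\Var_{\hat{P}}(\uv^{\up}_{h+1})\bigr]\lesssim H^2$. Two remarks before the main issue. First, in this paper condition~\eqref{requirement:theorem 1} compares $\ddot{\mathbf{V}}^{\pi^*}$ and $\uv^{\pi^*}$ for the \emph{same fixed} optimal policy $\pi^*$ (all three MDPs share the transition kernel $\mathcal{P}$), so your $\max_\pi$ over policies of $\ddot{\mathcal{M}}$ is not needed; only the expectation under $\pi^*$ enters, and every $(h,s,a^*)$ on its support has $d_h^\mu(s,a^*)\geq P$, hence $N_h'(s,a^*)\gtrsim NP\gtrsim H\iota$ directly. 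This dissolves your worry~(b). Second, quoting Lemma~\ref{lemma:PC limited difference} at the end is circular relative to your own plan: that lemma \emph{is} the verification of~\eqref{requirement:theorem 1} for this threshold, so either you prove its content or you cite it; your steps (i)--(ii) are an attempted independent proof of it.

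The genuine gap is step (i). You propose to pass from $\Var_{\hat{P}_{h,s,a}}(\uv^{\up}_{h+1})$ to $\Var_{P_{h,s,a}}(\mathbf{V}^*_{h+1})$ via $|\Var_p(f)-\Var_p(g)|\lesssim H\|f-g\|_\infty$ together with $\|\uv^{\up}-\mathbf{V}^*\|_\infty\leq O(H)$. That yields an error of order $H^2$ \emph{per time step}, hence $H^3$ after summing over $h$, exceeding the total budget of $H^2$ that the threshold tolerates (one needs $\sum_{h,s}d_h^{*}(s)\Var_{\hat{P}}(\uv^{\up}_{h+1})\lesssim H^2$ so that multiplying by $\gap/H^2$ gives $O(\gap)$). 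There is no ``room'' here: with only $N\gtrsim H\iota/P$ the per-step bonuses are of order $\sqrt{H}$, so $\|\uv^{\up}-\mathbf{V}^*\|_\infty$ is $\Theta(H)$ in the worst case. The paper avoids the comparison with $\mathbf{V}^*$ entirely: Lemma~\ref{lemma:tool for part decomposition} replaces $\Var_{\hat{P}}(\uv^{\up})$ by $2\Var_{P}(\uv^{\up})+O(H^2\iota/N_h')$ (same vector, different measure), and Lemma~\ref{lemma:variance sum} telescopes $\sum_{h,s}d^{*}_h(s)\Var_{P_{h,s,a^*}}(\uv^{\up}_{h+1})$ directly, using the approximate Bellman inequality $\uv^{\up}_h(s)-P_{h,s,a^*}^\top\uv^{\up}_{h+1}\geq r_h(s,a^*)-2b_h(s,a^*)$ so that the telescoping residue is $4H\sum_{h,s}d^*_h(s)b_h(s,a^*)+2H^2$; Lemma~\ref{lemma:part decomposition} then bounds the bonus sum by $O(H)$ once $NP\gtrsim H\iota$. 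You need this telescoping on $\uv^{\up}$ itself (or an equivalent device); the detour through $\Var_{P}(\mathbf{V}^*)$ as written does not close.
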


To proceed from Lemma \ref{theorem:clip theorem(informal)}, we will utilize the following inequality
$$\ddot{\mathbf{E}}_h(s,a)=\max\{0,\mathbf{E}_h(s,a)-\epsilon_h(s,a)\} \leq \frac{\mathbf{E}_h^2(s,a)}{\epsilon_h(s,a)}.$$
Then with different choices of $\epsilon_h(s,a)$ used in Corollary \ref{corollary:upper bound 1} and Corollary \ref{corollary:upper bound 3} and the bonus function used in the VI-LCB algorithm, we can derive Theorem \ref{theorem:ub c} and Theorem \ref{theorem:ub pc}. Here we briefly explain the $\widetilde{O}(1/\epsilon)$ dependence. We have $\mathbf{E}_h(s,a)\leq O(b_h(s,a))$, where $b_h(s,a)$ is the pessimistic bonus scaling as $O(1/\sqrt{n})$ (ignoring other dependences). Then by Lemma \ref{theorem:clip theorem(informal)} and the above inequality, we can achieve an $\widetilde{O}(1/\epsilon)$ bound immediately. A more detailed proof is provided in Appendix \ref{appendix:B}.

 Our technique is similar to the clip techique \citep{clip2019} in online MDP as both of them threshold the estimates (deficit here and surplus in their work) by some $\Theta(\gap)$ terms. \citet{clip2019} further clip another $\frac{\mathrm{gap}_h(s,\bar{a})}{H}$ term, and we can actually achieve that as well, but it does not improve the sample complexity in the offline setting as we do not need to explore all the actions. In addition, we develop a new thresholding function based on the empirical variance, which can improve the $H$ factor by utilizing the Bernstein's inequality and the total variance technique. We believe this new technique can also be applied to the online setting and improve the sample complexity there.


\section{Gap-dependent Lower Bounds}\label{sec:lower bounds}
\label{ssection:lower bounds}
In this section, we provide two lower bounds for uniform optimal coverage assumption (Assumption \ref{assumption:uniform optimal policy coverage}) and optimal policy coverage assumption (Assumption \ref{assumption:relative optimal policy coverage}) respectively. Our lower bounds show that Theorem \ref{theorem:ub p} and Theorem \ref{theorem:ub c} are optimal up to $H$ factors and logarithm terms. We begin with a general lower bound. \add{Here the \emph{offline learning algorithm}, $\mathbf{ALG}$, is defined as the algorithm that takes a dataset $\mathcal{D}$ as input and output a policy $\hat{\pi}$. Note that $\mathbf{ALG}$ can be stochastic.}
\begin{theorem}
 \label{theorem: main lb the act1}
 There exists some absolute constant $C$, such that for any $A\geq 3,S\geq 2,H\geq 2,\tau<\frac{1}{2},\lambda \leq\frac{1}{3}, \add{\lambda_1\geq 2}$ and offline RL algorithm $\mathbf{ALG}$, 
 if the number of sample trajectories satisfies
 \begin{equation*}
 N\leq C\cdot\frac{HS\modify{\lambda_1}}{\lambda\tau^2},
 \end{equation*}
 there exists a MDP instance $\mathcal{M}$ and a behavior policy $\mu$ with $\gap=\tau$, $P\geq \frac{\lambda}{eS\modify{\lambda_1}}$, $C^*\leq \modify{\lambda_1}$ such that 
 the output policy $\hat{\pi}$ of $\mathbf{ALG}$ has expected suboptimality
 \begin{equation*}
 \mathbb{E}_{\mathcal{M},\mu,\mathbf{ALG}}[\mathbf{V}^*_0-\mathbf{V}^{\hat{\pi}}_0]\geq \frac{\lambda H\tau}{12}.
 \end{equation*}
\end{theorem}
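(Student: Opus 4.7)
The plan is to prove the bound by exhibiting a hard family of MDPs and applying Assouad's (hypercube) method, reducing the offline RL problem to $HS$ parallel small-gap bandit identification problems with scarce coverage.

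First I would construct a family of MDPs $\{\mathcal{M}_\theta\}_{\theta\in[A]^{\mathcal{H}}}$ indexed by an assignment $\theta$ of an optimal action to each of a set $\mathcal{H}$ of ``hard'' state-time pairs, where $|\mathcal{H}|\asymp HS$. All MDPs share the same transition kernel and initial distribution; only the reward depends on $\theta$. Concretely, I would place one persistent ``trunk'' state (or sink) together with $S-1$ hard states at each layer $h\in[H]$, design the transitions so that at each step the trajectory lands in hard state $s_{h,i}$ with probability $\lambda/(S-1)$ and otherwise flows through the trunk, and take the behavior policy $\mu$ to be uniform over actions. In $\mathcal{M}_\theta$, the (stochastic, e.g.\ Bernoulli with mean near $1/2$) reward at $s_{h,i}$ is shifted by $+\tau$ when the action equals $\theta_{h,i}$ and $0$ otherwise, and transitions are independent of the action at hard states. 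Then I would check the parameter constraints: the gap at every sub-optimal (hard-state, action) pair is exactly $\tau$; under $\mu$ we have $d^\mu_h(s_{h,i},a)\asymp \lambda/(SA)$, yielding $P=\lambda/(eSA)$ where the $1/e$ factor arises from the length-$H$ survival probability along the trunk; under $\pi^*_\theta$, $d^{\pi^*_\theta}_h(s_{h,i},\pi^*_\theta(s_{h,i}))\asymp \lambda/S$, so $C^*=A$.

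Next I would bound the KL divergence between the dataset laws under two MDPs $\mathcal{M}_\theta,\mathcal{M}_{\theta'}$ that agree off a single coordinate $(h,i)$. Only reward observations at $s_{h,i}$ under action $\theta_{h,i}$ or $\theta'_{h,i}$ differ, so by the chain rule and the Bernoulli KL bound
\[
\mathrm{KL}\bigl(\mathbb{P}_\theta^{\otimes N}\,\|\,\mathbb{P}_{\theta'}^{\otimes N}\bigr)\;\lesssim\;N\cdot\frac{\lambda}{SA}\cdot\tau^2.
\]
Then I would apply Assouad's lemma: for any algorithm producing $\hat\pi$, averaging over $\theta$ uniformly on the hypercube yields
\[
\mathbb{E}_\theta\,\mathbb{E}_{\mathcal{M}_\theta,\mu,\mathbf{ALG}}[\mathbf{V}_0^*-\mathbf{V}_0^{\hat\pi}]\;\gtrsim\;|\mathcal{H}|\cdot\Delta\cdot\bigl(1-\sqrt{\tfrac{1}{2}\mathrm{KL}}\bigr),
\]
where $\Delta\gtrsim \lambda\tau/S$ is the cost paid under $\pi^*_\theta$ whenever $\hat\pi$ chooses the wrong action at a hard state (optimal visits that state with probability $\lambda/S$ and a mismatch loses $\tau$). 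Substituting $|\mathcal{H}|\asymp HS$ and the KL bound, as long as $N\le C\cdot HSA/(\lambda\tau^2)$ for a sufficiently small absolute constant $C$, the factor $(1-\sqrt{\mathrm{KL}/2})$ stays bounded below by a constant and the average suboptimality is at least $H\lambda\tau/12$, which transfers to an existence statement in the usual way.

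The main obstacle I expect is the MDP design step: simultaneously hitting the three parameter targets ($\gap=\tau$, $P=\lambda/(eSA)$, $C^*=A$) on the nose, rather than only up to constants, while keeping the family truly reward-parametrized so that the KL decomposes additively over visitation events. In particular, the factor $1/e$ in $P$ points to a ``survival along a length-$H$ trunk'' construction (where reaching any hard state involves not being absorbed over $\Theta(H)$ steps), and balancing that survival probability with the per-step branching $\lambda/S$ so that $d^\mu_h(s,\pi^*(s))$ lands at the stated value is the delicate part. Secondarily, one must verify that the sub-optimality cost per misidentified coordinate is indeed additive across $\mathcal{H}$ under $\pi^*_\theta$, which follows because hard-state visitations at different layers are independent events in the constructed kernel.
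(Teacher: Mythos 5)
Your high-level strategy is the same as the paper's: a hypercube family of $HS$ independent bandit-like decisions, a per-coordinate Pinsker/KL argument, and an accounting of coverage through an initial distribution weighted by $\lambda$. However, there is a genuine quantitative gap in the construction. If the family is \emph{reward}-parametrized — a Bernoulli reward near $1/2$ shifted by $+\tau$ for the optimal action, with action-independent transitions — then a single visit to a hard state-action pair carries KL of order $\tau^2$, so your per-coordinate bound is $\mathrm{KL}\lesssim N\cdot\frac{\lambda}{SA}\cdot\tau^2$ and Assouad only forces failure for $N\lesssim \frac{SA}{\lambda\tau^2}$: a full factor of $H$ short of the claimed threshold $\frac{HSA}{\lambda\tau^2}$. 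This factor cannot be recovered within a pure reward perturbation, since a one-step reward gap of $\tau$ with rewards in $[0,1]$ forces per-visit KL $\gtrsim\tau^2$ by Pinsker. The paper's construction instead makes the optimal action shift the probability of a \emph{rare, high-value} transition: from each true state the chain moves to an absorbing good/bad state with probability $\frac{1}{2H}(1\pm 2\tau)$, where the good state is worth $H$. The $Q$-gap is then $\frac{2\tau}{2H}\cdot H=\tau$, but the per-visit KL is only $\frac{1}{2H}\log\frac{1}{1-4\tau^2}\le\frac{4\tau^2}{H}$, and this $1/H$ dilution of information is exactly what yields the extra $H$ in the sample-size threshold. (Relatedly, the paper's model takes rewards as known and deterministic, so a reward-parametrized family would be trivially identifiable; the hardness must sit in the transitions.)

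A secondary issue is the layer structure. As literally described, branching from the trunk into hard state $s_{h,i}$ with probability $\lambda/(S-1)$ at \emph{every} step makes the trunk survival probability $(1-\lambda)^{h-1}$, so the hard-state occupancy decays geometrically and $\sum_{h,i}d_h(s_{h,i})\le 1$; the total achievable suboptimality is then $O(\tau)$ rather than $\Omega(\lambda H\tau)$, and $P$ becomes exponentially small in $H$ rather than $\frac{\lambda}{eSA}$. The paper avoids this by seeding each true state directly from the initial distribution with mass $\frac{\lambda}{S}$ and having it self-loop with probability $1-\frac{1}{H}$, so that $d_h^\mu(s_i,a)=\frac{\lambda}{SA}(1-\frac{1}{H})^{h-1}\ge\frac{\lambda}{eSA}$ uniformly in $h$ (this is where the $1/e$ actually comes from) and the per-coordinate loss $\frac{\lambda\tau}{eS}$ sums over $HS$ coordinates to $\frac{\lambda H\tau}{e}\cdot\Theta(1)$. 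You correctly flagged this balancing act as the delicate part, but as proposed it does not close.
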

By choosing $\lambda=\modify{1/3}$, $\modify{\lambda_1 = \frac{1}{3ePS}}$ and $\tau=\gap$, Theorem \ref{theorem: main lb the act1} indicates the following corollary.
\begin{corollary} 
 For any given instance coefficients $(H,\gap,\modify{P\leq \frac{1}{6eS}})$, target suboptimality $\epsilon\lesssim H\gap$ and any offline reinforcement learning algorithm $\mathbf{ALG}$, 
 there exists an instance $(\mathcal{M},\mu)$ such that if the number of trajectory samples satisfies
 \begin{equation*}
 N\leq C\cdot\frac{H}{P\mathrm{gap}^2_{\min}},
 \end{equation*}
 the output policy $\hat{\pi}$ would have expected suboptimality more than $\epsilon$, i.e.,
 \begin{equation*}
 \mathbb{E}_{\mathcal{M},\mu, \mathbf{ALG}}[\mathbf{V}^*_0-\mathbf{V}^{\pi^*}_0]\geq  \frac{1}{36}H\gap \geq \epsilon
 \end{equation*}
\end{corollary}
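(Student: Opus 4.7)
The plan is to prove this lower bound by Assouad's method applied to a family of hard offline-RL instances parameterized by hidden ``which action is optimal'' bits $\theta_{h,i}\in\{1,2\}$ at every hard state $(h,x_{h,i})$ for $(h,i)\in[H]\times[S]$. I would construct a time-inhomogeneous MDP whose state space consists of a safe chain $s_{\text{safe},1},\dots,s_{\text{safe},H}$ and the hard states $\{x_{h,i}\}$, plus an absorbing state. From $s_{\text{safe},h}$, every action advances to $s_{\text{safe},h+1}$ with probability $1-\lambda/e$, or transitions uniformly at random to one of the $S$ hard states at level $h+1$ with total probability $\lambda/e$; each hard state then returns deterministically to the safe chain at the next level. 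At each hard state $x_{h,i}$, actions $1,2$ carry Bernoulli rewards with means $\tfrac12+\tau$ or $\tfrac12$ according to $\theta_{h,i}$, while the remaining $A-2$ actions give mean reward $0$ (decoys). Taking the behavior policy $\mu$ to be uniform over actions, a short calculation (using $\lambda\le 1/3$, which keeps $(1-\lambda/e)^{h-1}\ge 1/e$ on the relevant range) shows $d^\mu_h(x_{h,i})\asymp \lambda/(eS)$, so $P=\lambda/(eSA)$, $C^*=A$ (because $d^{\pi^*}_h(x_{h,i},\pi^*(x_{h,i}))/d^\mu_h(x_{h,i},\pi^*(x_{h,i}))=A$), and $\gap=\tau$ by construction.

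Next I would invoke Assouad's lemma over the $HS$ bits. Two instances differing only in $\theta_{h,i}$ agree on every transition and reward except at $(h,x_{h,i},1)$ and $(h,x_{h,i},2)$, whose Bernoulli means are swapped. Hence the per-sample KL at those two pairs is $\mathrm{KL}(\mathrm{Ber}(\tfrac12+\tau)\,\|\,\mathrm{Ber}(\tfrac12))=O(\tau^2)$, and with $N\cdot\lambda/(eSA)$ expected samples at each, the trajectory-wise KL between the two data distributions is $O(N\lambda\tau^2/(SA))$. Assuming $N\le C\cdot HSA/(\lambda\tau^2)$ with $C$ a small enough absolute constant, this KL stays below a small absolute constant (with slack from the extra $H$), so Assouad's lemma gives that the expected Hamming distance between the algorithm's implicit bit guess (the greedy action it outputs at $x_{h,i}$) and the true $\theta$ is $\Omega(HS)$. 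Each mis-identified bit at $(h,i)$ forces a sub-optimal action at $x_{h,i}$ with constant probability, losing $\tau$ per visit and contributing $\Omega\!\bigl(d^{\pi^*}_h(x_{h,i})\cdot\tau\bigr)=\Omega(\lambda\tau/S)$ to the expected suboptimality; summing over $\Omega(HS)$ such mistakes produces the claimed $\mathbb{E}[\mathbf{V}_0^*-\mathbf{V}_0^{\hat\pi}]\ge \lambda H\tau/12$ after tracking the constants from Assouad's inequality.

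The main obstacle I expect is calibrating the construction so that the coefficients $\gap$, $P$, and $C^*$ come out \emph{exactly} as stated, especially the $1/e$ factor in $P=\lambda/(eSA)$. That factor arises naturally from the $(1-\lambda/e)^{h-1}$ decay of the probability of still being on the safe chain at step $h$, but it requires a delicate balance so that visitation is both uniformly lower bounded by $\lambda/(eS)$ at every step and produces behaviour/optimal ratios equal to $A$; the hypothesis $\lambda\le 1/3$ is exactly what lets the geometric decay be absorbed into the constant $1/e$. A secondary subtlety is translating Hamming error on the bits into policy suboptimality: one must verify that the algorithm's action at $x_{h,i}$ depends essentially only on samples drawn at $(h,x_{h,i},\cdot)$, which is automatic because the hard states at distinct $(h,i)$ are visited independently in the construction, so the per-bit KL (not the total KL) is what controls each error probability. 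Once these points are pinned down, the remaining two-point KL calculation is routine.
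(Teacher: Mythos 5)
There is a genuine gap: your construction cannot deliver the factor of $H$ in the threshold $N\lesssim HSA/(\lambda\tau^2)$. In your instance each visit to a hard state under action $1$ or $2$ produces a full-strength $\mathrm{Ber}(\tfrac{1}{2}\pm\tau)$ observation, so the per-sample KL between two neighbouring hypotheses is $\Theta(\tau^2)$ and the total KL for bit $(h,i)$ is $\Theta(N\lambda\tau^2/(SA))$. At your claimed threshold $N=C\cdot HSA/(\lambda\tau^2)$ this equals $\Theta(CH)$, which is large for big $H$ no matter how small the absolute constant $C$ is; Pinsker then gives a vacuous total-variation bound and Assouad yields nothing. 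The parenthetical ``with slack from the extra $H$'' has the sign backwards: the extra $H$ makes the KL larger, not smaller. Indeed, an algorithm given $\Theta(SA/(\lambda\tau^2))$ trajectories already identifies every bit of your instance with high probability, so the instance is simply not hard at the scale $H/(P\gap^2)$. The paper's construction in Theorem~\ref{theorem: main lb the act1} fixes exactly this point: the hard states are nearly absorbing, staying put with probability $1-1/H$ and transitioning to a good/bad absorbing pair with probabilities $\frac{1}{2H}(1\pm 2\tau)$, so each visit reveals the informative transition only with probability $1/H$ and the per-visit KL is $\frac{1}{2H}\log\frac{1}{1-4\tau^2}\le 4\tau^2/H$, while the $Q$-gap is still $\tau$ because reaching the good state is worth $H$. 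That $1/H$ dilution of information per visit is where the $H$ in $\Omega(H/(P\gap^2))$ comes from, and it is the one idea your proposal is missing.

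Two smaller points. First, the paper's model fixes a known deterministic reward function, so the hardness must be hidden in the transitions rather than in Bernoulli rewards; converting your reward signal into a good-state/bad-state transition gadget is what naturally forces the structure above. Second, the inequality $(1-\lambda/e)^{h-1}\ge 1/e$ is false once $H\gtrsim e/\lambda$; with your ``return to the safe chain'' structure the visitation probability of a hard state is already $\ge(1-\lambda/e)\,\lambda/(eS)$ with no geometric decay, so you do not need that inequality, but as written it suggests a confusion between absorbing and non-absorbing hard states. Finally, note that in the paper this corollary is a one-line instantiation of Theorem~\ref{theorem: main lb the act1} with $\lambda=\frac{12\epsilon}{H\gap}$, $\tau=\gap$ and $SA=\lambda/P$; re-deriving the whole lower bound from scratch is legitimate, but then the hard family has to be the correct one.
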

 By choosing $\lambda=\frac{12\epsilon}{H\gap}$, $\modify{\lambda_1 =} C^*$, $\tau=\gap$, Theorem \ref{theorem: main lb the act1} indicates the following corollary.
\begin{corollary}
 For any given instance coefficients $(H,S,\gap, C^*\add{\geq 2})$, target suboptimality $\epsilon\lesssim H\gap$ and any offline learning algorithm $\mathbf{ALG}$, 
 there exists an instance $(\mathcal{M},\mu)$ such that if the number of  trajectory samples satisfies
 \begin{equation*}
 N\leq C\cdot\frac{H^2SC^*}{\epsilon\gap},
 \end{equation*}
 the output policy $\hat{\pi}$ would have expected suboptimality more than $\epsilon$, i.e.,
 \begin{equation*}
 \mathbb{E}_{\mathcal{M},\mu,\mathbf{ALG}}[\mathbf{V}^*_0-\mathbf{V}^{\pi^*}_0]\geq \epsilon.
 \end{equation*}
\end{corollary}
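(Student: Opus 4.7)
The plan is to derive this corollary as a direct specialization of Theorem~\ref{theorem: main lb the act1}. The main theorem is stated in terms of two free parameters $\lambda$ and $\tau$ together with the ambient quantities $(S,A,H)$, and it constructs a hard instance whose gap is $\tau$, whose uniform coverage is $\lambda/(eSA)$, and whose relative coverage is $C^*=A$. So to get a lower bound expressed in terms of the instance parameters $(H,S,\gap,C^*)$ and the target accuracy $\epsilon$, I only need to pick $\lambda,\tau,A$ that make all three appear correctly.

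The substitution I will make is $\tau=\gap$, $A=C^*$ (assumed without loss of generality to be an integer $\geq 2$; otherwise pad with dummy actions that keep $C^*$ unchanged), and $\lambda=\dfrac{12\epsilon}{H\gap}$. With these choices, the suboptimality guarantee $\dfrac{\lambda H\tau}{12}$ supplied by Theorem~\ref{theorem: main lb the act1} becomes exactly $\epsilon$, and the forbidden sample regime $N\leq C\cdot\dfrac{HSA}{\lambda\tau^{2}}$ becomes
\[
N\;\leq\;C\cdot\frac{HS\,C^{*}}{\tfrac{12\epsilon}{H\gap}\cdot\gap^{2}}\;=\;\frac{C}{12}\cdot\frac{H^{2}SC^{*}}{\epsilon\,\gap},
\]
which is the bound advertised in the corollary after absorbing the constant.

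The only thing left is to verify that the hypotheses of Theorem~\ref{theorem: main lb the act1} are satisfied by this substitution. The constraint $\tau<\tfrac12$ is a mild assumption on $\gap$ (and can be imposed by rescaling the rewards by a harmless constant, which also rescales $\epsilon$ and preserves the statement). The constraint $\lambda\leq \tfrac13$ becomes $\dfrac{12\epsilon}{H\gap}\leq \tfrac13$, i.e., $\epsilon\leq H\gap/36$, which is precisely the regime $\epsilon\lesssim H\gap$ that the corollary assumes. The size requirements $A\geq 2$, $S\geq 2$, $H\geq 2$ are satisfied because $C^{*}\geq 1$ (and we take $A=\max\{C^{*},2\}$) and the instance coefficients $(H,S)$ are at least $2$ by assumption.

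There is no serious obstacle here; the only subtle point is the indexing, namely that Theorem~\ref{theorem: main lb the act1} produces an instance whose \emph{own} $C^{*}$ equals the $A$ we choose, so we must identify $A$ with the target $C^{*}$ rather than leaving $A$ as a free ambient parameter. Once that identification is made, the corollary is obtained by plugging the three values $(\lambda,\tau,A)=\bigl(\tfrac{12\epsilon}{H\gap},\gap,C^{*}\bigr)$ into the theorem and reading off both the sample-size threshold and the expected-suboptimality lower bound.
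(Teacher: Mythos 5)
Your proposal is correct and follows exactly the paper's own route: the paper likewise proves this corollary by substituting $\lambda=\tfrac{12\epsilon}{H\gap}$, $\tau=\gap$, and $A=C^*$ into Theorem~\ref{theorem: main lb the act1}, which turns the sample threshold into $\tfrac{C}{12}\cdot\tfrac{H^2SC^*}{\epsilon\gap}$ and the suboptimality guarantee into exactly $\epsilon$. Your additional verification that $\lambda\leq\tfrac13$ corresponds to the stated regime $\epsilon\lesssim H\gap$ is a detail the paper leaves implicit, but it is the same argument.
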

 The proof of the corollaries can be found in Appendix \ref{appendix:lb results}. We construct a family of MDPs that 
involve solving $HS$ independent bandit problems, where each bandit requires $\widetilde{\Omega}(\frac{HA}{\tau^2})$ visits so that the estimation error can be bounded by $\tau$. Similar constructions have been made in \citet{nearoptimalprovable2021(strongeuniformcoverage),dann2017lowerbound1,LCBVI2021}
. Our key observation is that the initial state distribution can linearly determine the uniform optimal policy coverage coefficient $P$ and the
final suboptimality. \add{Also, our proof can lead to another version of lower bound, where a $\Omega(\epsilon)$ suboptimality occurs with a constant probability. See Appendix \ref{sssection: prob lcb theo} for details.}

\section{Conclusion}
\label{section:discussions}
We presented a systematic study on gap-dependent upper and lower bounds for offline reinforcement learning. 
Depending on different assumptions, the rates can be improved from $\widetilde{O}(1/\epsilon^2)$ in the worst to $\widetilde{O}\left(1/\epsilon\right)$ or even independent of $1/\epsilon$.

One open question is that there still a gap between the upper and lower bounds in terms of $H$. We note that this gap also exists in the online setting~\citep{clip2019,xuhaike2021}.
Another direction to generalize our results to the function approximation setting~\citep{he2021logarithmic}.

\bibliographystyle{plainnat}
\bibliography{gap.bib}
\section{Upper Bound with Gap-dependent Analysis}
\label{appendix:A}
We begin with the proof of thresholding technique. In this section, definitions are restated for completeness.
\subsection{Definitions}
We first restate the notations. 
\begin{definition}[Pessimistic algorithms]
    \label{def: pessimistic alg}
    An offline learning algorithm with output policy $\up$ is pessimistic if with probability at least $1-\delta$, the following arguments hold,
    \begin{enumerate}
        \item It maintains a pessimistic estimate  $\uq$ of the true $Q^*$.
        \item $\uq$ is the optimal Q function of an imaginary MDP $\underline{\mathcal{M}}=(\mathcal{S},\mathcal{A},H,\mathcal{P},p_0,\underline{r})$, where $\underline{r}_h(s,a)\leq r_h(s,a)$ for all  $(h,s,a)\in[H]\times \mathcal{S}\times\mathcal{A}$.
        \item $\up$ is the greedy policy with respect to $\uq$. 
    \end{enumerate}
\end{definition}
\begin{definition}[Pessimistically estimated MDP]
    \label{definition: pessimistically estimated MDP}
    For a given successful pessimistic algorithm execution instance, where the arguments in Definition \ref{def: pessimistic alg} are simultaneously satisfied, we call $\underline{\mathcal{M}}=(\mathcal{S},\mathcal{A},H,\mathcal{P},p_0,\underline{r})$ the pessimistically estimated MDP. At the same time,
    $\uv$, $\uq$ are the corresponding value functions and Q functions. We use $\up$ to refer to the returned policy, which is optimal over $\underline{\mathcal{M}}$.
\end{definition}

And sometimes we use $\uq = \uq^{\up},\ \uv=\uv^{\up}$ without superscript indicating the policy. We'll show that this notation matches the definition in Algorithm \ref{alg:LCB} so there is no need worrying about any possible confusion.
\begin{definition}[Deficit]
    For a pessimistically estimated MDP $\underline{\mathcal{M}}=(\mathcal{S},\mathcal{A},H,\mathcal{P},p_0,\underline{r})$, we define deficit to be
    \begin{equation*}
        \mathbf{E}_h(s,a) \triangleq r_h(s,a) - \underline{r}_h(s,a). 
    \end{equation*}
\end{definition}
\begin{definition}[Not-so-pessimistic MDP]
    For a given set of $\epsilon_{h}(s,a)$, a pessimistically estimated MDP $\underline{\mathcal{M}}=(\mathcal{S},\mathcal{A},H,\mathcal{P},p_0,\underline{r})$, define
    \begin{equation*}
        \ddot{r}_h(s,a) \triangleq r_h(s,a) - \max\{0,\mathbf{E}_h(s,a) -\epsilon_{h}(s,a)\}.
    \end{equation*}
    Then we call $\ddot{\mathcal{M}}=(\mathcal{S},\mathcal{A},H,\mathcal{P},p_0,\ddot{r})$ not-so-pessimistic MDP. At the same time,
    $\ddot{\mathbf{V}}$ is the corresponding value functions. 
\end{definition}
\subsection{Main Theorem}
\label{ssection: main theorem}
For conciseness, we will use $a^*$ and $\underline{a}$ to stand for $\pi_h^*(s)$ and $\up_h(s)$ respectively when it introduces no confusion.
To formally present the deficit thresholding technique, we define a event $\xi_{gap}$.
\begin{definition}[Gap restriction event]
\label{def: gap restriction event}
    For a given set of $\epsilon_{h}(s,a)$, event $\xi_{gap}$ is defined to be the event such that for all optimal policy $\pi^*$, $h\in[H]$ and $s\in\S$, 
    \begin{equation*}
        \ddot{\mathbf{V}}_h^*(s)\leq \uv^*_h(s) + \frac{\gap}{2}. 
    \end{equation*}
\end{definition}
Note that $\xi_{gap}$ depends on the value of $\epsilon_{h}(s,a)$, and the definition of $\epsilon_{h}(s,a)$ may involve 
randomness. In the following proof of Corollary \ref{corollary:upper bound 2}, we will set $\epsilon_{h}(s,a)=\frac{\gap}{2H}$. 
\begin{theorem}[Deficit thresholding]
\label{theorem:main theorem}
When event $\xi_{gap}$ happens, there exists an optimal policy $\pi^*$, such that replacing $\mathbf{V}_0^{\up}$ with $\ddot{\mathbf{V}}_0^*$ only harms the difference up to a constant factor,
\begin{equation*}
    \mathbf{V}_0^*-\mathbf{V}_0^{\up}\leq 2(\mathbf{V}_0^*-\ddot{\mathbf{V}}^*_0).
\end{equation*}
\end{theorem}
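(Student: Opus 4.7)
My plan is to proceed by backward induction on $h$, establishing the pointwise strengthening
\begin{equation*}
\mathbf{V}_h^*(s) - \mathbf{V}_h^{\up}(s) \leq 2\bigl(\mathbf{V}_h^*(s) - \ddot{\mathbf{V}}_h^*(s)\bigr)
\end{equation*}
for every state $s$. Averaging against $p_0$ at $h=1$ then yields the theorem, and the base $h = H+1$ is immediate since all value functions vanish. For the inductive step I fix $s$, write $\underline{a} = \up_h(s)$, and decompose
\begin{equation*}
\mathbf{V}_h^*(s) - \mathbf{V}_h^{\up}(s) = \mathrm{gap}_h(s,\underline{a}) + \mathbb{E}_{s'\sim p_{h,s,\underline{a}}}\!\bigl[\mathbf{V}_{h+1}^*(s') - \mathbf{V}_{h+1}^{\up}(s')\bigr].
\end{equation*}
Applying the inductive hypothesis to the expectation, it suffices to show the surplus lower bound $\mathbf{V}_h^*(s) - \ddot{\mathbf{V}}_h^*(s) \geq \tfrac{1}{2}\mathrm{gap}_h(s,\underline{a}) + \mathbb{E}_{p_{h,s,\underline{a}}}[\mathbf{V}_{h+1}^* - \ddot{\mathbf{V}}_{h+1}^*]$.

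The key inequality combines pessimism with $\xi_{gap}$. Since $\up$ is greedy in $\underline{\mathcal{M}}$, $\uv_h^*(s) = \uq_h^*(s,\underline{a}) \leq \mathbf{Q}_h^*(s,\underline{a}) = \mathbf{V}_h^*(s) - \mathrm{gap}_h(s,\underline{a})$; together with $\ddot{\mathbf{V}}_h^*(s) \leq \uv_h^*(s) + \gap/2$ from $\xi_{gap}$ this yields the pointwise lower bound
\begin{equation*}
\mathbf{V}_h^*(s) - \ddot{\mathbf{V}}_h^*(s) \geq \mathrm{gap}_h(s,\underline{a}) - \gap/2.
\end{equation*}
When $\underline{a}$ is suboptimal in $\mathcal{M}$, the definition of $\gap$ as the minimum positive gap forces $\mathrm{gap}_h(s,\underline{a}) \geq \gap$, so the right-hand side is in fact at least $\mathrm{gap}_h(s,\underline{a})/2$; this is the factor-of-two-for-gap trade-off at the heart of deficit thresholding, which absorbs the $\gap/2$ slack introduced by $\xi_{gap}$. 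The propagation of the next-level surplus term follows because $\uv_{h+1}^* \leq \ddot{\mathbf{V}}_{h+1}^*$ allows one to replace $\mathbb{E}[\mathbf{V}_{h+1}^*-\uv_{h+1}^*]$ by $\mathbb{E}[\mathbf{V}_{h+1}^*-\ddot{\mathbf{V}}_{h+1}^*]$ without loss.

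When $\underline{a}$ is already optimal in $\mathcal{M}$ so that $\mathrm{gap}_h(s,\underline{a}) = 0$, I instead use the Bellman identity in $\ddot{\mathcal{M}}$: if $\underline{a}$ also attains the maximum there, then $\mathbf{V}_h^*(s) - \ddot{\mathbf{V}}_h^*(s) = \ddot{\mathbf{E}}_h(s,\underline{a}) + \mathbb{E}_{p_{h,s,\underline{a}}}[\mathbf{V}_{h+1}^* - \ddot{\mathbf{V}}_{h+1}^*]$, and since $\ddot{\mathbf{E}}_h \geq 0$ the required inequality follows. The optimal $\pi^*$ asserted by the theorem is then chosen to agree with $\up$ wherever $\up_h(s)$ is optimal in $\mathcal{M}$, with remaining ties broken arbitrarily; this choice is what allows the subsequent corollaries to rewrite $\mathbf{V}_0^* - \ddot{\mathbf{V}}_0^*$ as $\sum_h \mathbb{E}_{\pi^*}[\ddot{\mathbf{E}}_h(s_h,a_h)]$.

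The main obstacle I foresee is the remaining subcase in which $\underline{a}$ is optimal in $\mathcal{M}$ but some alternative action $\ddot{a}$ attains the maximum in $\ddot{\mathcal{M}}$: here the Bellman bound of the previous paragraph is slack because $\ddot{\mathbf{V}}_h^*(s) > \ddot{\mathbf{Q}}_h^*(s,\underline{a})$, and the consolidated surplus bound risks being smaller than the propagated expectation. I plan to handle this by re-running the pessimism-plus-$\xi_{gap}$ chain along $\ddot{a}$, which must itself be at least $\gap$-suboptimal in $\mathcal{M}$ and therefore contributes an extra additive $\gap$ to $\mathbf{V}_h^*(s) - \ddot{\mathbf{V}}_h^*(s)$; pairing this gap-inflated lower bound on the surplus with the $\xi_{gap}$-controlled difference between the two next-state expectations (each bounded by $\gap/2$ pointwise) closes the induction.
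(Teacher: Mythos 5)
Your plan has one misreading and one genuine gap. The misreading: in this paper $\ddot{\mathbf{V}}^*$ denotes the value of the fixed optimal policy $\pi^*$ evaluated in $\ddot{\mathcal{M}}$ (this is what makes $\mathbf{V}_0^*-\ddot{\mathbf{V}}_0^*=\sum_h\mathbb{E}_{\pi^*}[\ddot{\mathbf{E}}_h(s_h,a_h)]$ an identity), \emph{not} the optimal value of $\ddot{\mathcal{M}}$. Once $\pi^*$ is chosen to agree with $\up$ wherever $\up$ is optimal---the same choice you make---your ``main obstacle'' subcase with an alternative maximizer $\ddot{a}$ in $\ddot{\mathcal{M}}$ never arises: in the agreement case $\ddot{\mathbf{V}}^*$ unrolls along the common action, giving $\mathbf{V}_h^*(s)-\ddot{\mathbf{V}}_h^*(s)=\ddot{\mathbf{E}}_h(s,a^*)+\mathbb{E}_{p_{h,s,a^*}}[\mathbf{V}_{h+1}^*-\ddot{\mathbf{V}}_{h+1}^*]$, and the inductive hypothesis closes that case immediately. (Also, your chain should read $\uv^{\up}_h(s)=\uq^{\up}_h(s,\underline{a})\leq\mathbf{Q}^*_h(s,\underline{a})$ together with $\uv^*\leq\uv^{\up}$; the identity $\uv^*_h(s)=\uq^*_h(s,\underline{a})$ as written is false, since $\pi^*$ does not play $\underline{a}$.)

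The genuine gap is the disagreement case. The intermediate inequality you reduce to,
\begin{equation*}
\mathbf{V}_h^*(s)-\ddot{\mathbf{V}}_h^*(s)\;\geq\;\tfrac{1}{2}\,\mathrm{gap}_h(s,\underline{a})+\mathbb{E}_{s'\sim p_{h,s,\underline{a}}}\bigl[\mathbf{V}_{h+1}^*(s')-\ddot{\mathbf{V}}_{h+1}^*(s')\bigr],
\end{equation*}
is false, and the claimed justification cannot produce it: the pessimism chain passes through $\uv^{\up}_{h+1}$, which satisfies $\uv^{\up}_{h+1}\geq\uv^*_{h+1}$---the wrong direction for substituting $\ddot{\mathbf{V}}^*_{h+1}$---and, more fundamentally, the propagated surplus lives on the $\underline{a}$-branch, which $\pi^*$ never visits from $s$, so it is invisible to $\mathbf{V}_h^*(s)-\ddot{\mathbf{V}}_h^*(s)$. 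Concretely, take all thresholds zero (so $\ddot{\mathcal{M}}=\underline{\mathcal{M}}$ and $\xi_{gap}$ holds trivially), $H=2$: from $s$, $a^*\to u$ and $\underline{a}\to v$ deterministically, with $r_1(s,a^*)=\tau$, $r_1(s,\underline{a})=0$; at level $2$, $u$ has one action with reward $1$ and deficit $\tau+g+\eta$, while $v$ has $b_1$ with reward $1$ and deficit $L$ and $b_2$ with reward $1-g$ and deficit $0$. Then $\up$ takes $b_2$ at $v$ and $\underline{a}$ at $s$ (since $\uq_1(s,\underline{a})=1-g>1-g-\eta=\uq_1(s,a^*)$); the left side of the display equals $\tau+g+\eta$ while the right side equals $\tau/2+L$, which is unbounded (the theorem itself survives: $\mathbf{V}_0^*-\mathbf{V}_0^{\up}=\tau+g\leq 2(\tau+g+\eta)$). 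The repair is that the disagreement case needs no propagation and no inductive hypothesis at all: there, pessimism and $\xi_{gap}$ give $\mathbf{V}_h^*(s)-\ddot{\mathbf{V}}_h^*(s)\geq\mathrm{gap}_h(s,\underline{a})-\gap/2\geq\gap/2$, whence $\mathbf{V}_h^*(s)-\mathbf{V}_h^{\up}(s)\leq\mathbf{V}_h^*(s)-\uv^{\up}_h(s)\leq\mathbf{V}_h^*(s)-\ddot{\mathbf{V}}_h^*(s)+\gap/2\leq 2\bigl(\mathbf{V}_h^*(s)-\ddot{\mathbf{V}}_h^*(s)\bigr)$. This ``stop at the first disagreement'' step is precisely the paper's inequality \eqref{state: lbo2}, which the paper deploys through a prefix-trajectory decomposition ending at the first state where $\pi^*$ and $\up$ differ rather than through backward induction; with it, your induction closes and the two arguments become equivalent.
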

Rigorous proof is deferred to Appendix \ref{subsection:proof of main theorem}
\begin{corollary}
    \label{corollary:upper bound 2}
    For a pessimistic algorithm running instance, there exists a deterministic optimal policy $\pi^*$, such that
    \begin{equation*}
        \mathbf{V}_0^* - \mathbf{V}_0^{\up}\leq 2\sum_{h=1}^H\mathbb{E}_{\pi^*}\left[\max\{0,\mathbf{E}_h(s_h,a_h)- \frac{\gap}{2H}\}\right]. 
    \end{equation*}
\end{corollary}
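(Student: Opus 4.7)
The plan is to derive Corollary \ref{corollary:upper bound 2} as a direct consequence of Theorem \ref{theorem:main theorem} by choosing the thresholding function $\epsilon_h(s,a)$ to be the constant $\gap/(2H)$ for every $(h,s,a)$. With this choice, it remains to verify two things: (i) the event $\xi_{gap}$ is satisfied so that Theorem \ref{theorem:main theorem} applies, and (ii) the resulting quantity $2(\mathbf{V}_0^* - \ddot{\mathbf{V}}_0^*)$ can be upper bounded by the sum on the right-hand side of the corollary.

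For step (i), I would compare $\ddot{r}$ with $\underline{r}$ pointwise. By definition, $\ddot{r}_h(s,a) - \underline{r}_h(s,a) = \mathbf{E}_h(s,a) - \max\{0,\mathbf{E}_h(s,a) - \gap/(2H)\}$, which is always nonnegative and bounded above by $\gap/(2H)$. Hence $\ddot{r}_h \geq \underline{r}_h$ everywhere and $\ddot{r}_h - \underline{r}_h \leq \gap/(2H)$. Let $\tilde{\pi}$ be an optimal policy in $\ddot{\mathcal{M}}$. Since the two imaginary MDPs share the same transition kernel $\mathcal{P}$, a term-by-term comparison over $H$ time steps gives
\begin{equation*}
\ddot{\mathbf{V}}^*_h(s) = \ddot{\mathbf{V}}^{\tilde{\pi}}_h(s) \leq \uv^{\tilde{\pi}}_h(s) + (H-h+1)\cdot\tfrac{\gap}{2H} \leq \uv^*_h(s) + \tfrac{\gap}{2},
\end{equation*}
which is exactly $\xi_{gap}$ (and in fact holds deterministically once the pessimistic algorithm has produced $\underline{\mathcal{M}}$).

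For step (ii), Theorem \ref{theorem:main theorem} now yields $\mathbf{V}_0^* - \mathbf{V}_0^{\up} \leq 2(\mathbf{V}_0^* - \ddot{\mathbf{V}}_0^*)$. Pick any deterministic optimal policy $\pi^*$ of the true MDP $\mathcal{M}$ (which exists by standard MDP theory). Using the optimality of $\pi^*$ in $\mathcal{M}$ and the suboptimality of $\pi^*$ for $\ddot{\mathcal{M}}$,
\begin{equation*}
\mathbf{V}_0^* - \ddot{\mathbf{V}}_0^* = \mathbf{V}_0^{\pi^*} - \ddot{\mathbf{V}}_0^* \leq \mathbf{V}_0^{\pi^*} - \ddot{\mathbf{V}}_0^{\pi^*}.
\end{equation*}
Because $\mathcal{M}$ and $\ddot{\mathcal{M}}$ share the same transitions and initial distribution, the performance difference reduces to a reward difference: $\mathbf{V}_0^{\pi^*} - \ddot{\mathbf{V}}_0^{\pi^*} = \sum_{h=1}^H \mathbb{E}_{\pi^*}[r_h(s_h,a_h) - \ddot{r}_h(s_h,a_h)] = \sum_{h=1}^H \mathbb{E}_{\pi^*}[\max\{0,\mathbf{E}_h(s_h,a_h) - \gap/(2H)\}]$, which, combined with the factor of $2$, gives exactly the bound in the corollary.

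The proof is essentially a specialization of the main theorem, and there is no major obstacle once the right $\epsilon_h(s,a)$ is chosen. The only slightly subtle point is being careful about which MDP each value function lives in when verifying $\xi_{gap}$: we must compare $\ddot{\mathbf{V}}^*$ and $\uv^*$ via a single fixed policy (the optimizer for the larger, less pessimistic MDP) rather than trying to compare the two optimizers directly. The same comparison trick is reused in the final step to pass from $\ddot{\mathbf{V}}_0^*$ to $\ddot{\mathbf{V}}_0^{\pi^*}$.
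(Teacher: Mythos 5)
Your proposal is correct and follows essentially the same route as the paper: specialize the deficit-thresholding theorem with the constant threshold $\epsilon_h(s,a)=\frac{\gap}{2H}$, verify $\xi_{gap}$ by noting the per-step reward perturbation $\ddot{r}_h-\underline{r}_h$ lies in $[0,\frac{\gap}{2H}]$ and summing over $H$ steps, and then expand $\mathbf{V}_0^*-\ddot{\mathbf{V}}_0^*$ as the expected thresholded deficit along $\pi^*$. The only (harmless) deviation is notational: in the paper $\ddot{\mathbf{V}}^*$ and $\uv^*$ denote the value of the fixed optimal policy $\pi^*$ evaluated in $\ddot{\mathcal{M}}$ and $\underline{\mathcal{M}}$ rather than the optima of those MDPs, so your detour through the optimizer $\tilde{\pi}$ of $\ddot{\mathcal{M}}$ and the inequality $\mathbf{V}_0^*-\ddot{\mathbf{V}}_0^*\leq \mathbf{V}_0^{\pi^*}-\ddot{\mathbf{V}}_0^{\pi^*}$ (which is an equality under the paper's convention) is unnecessary but does not affect correctness.
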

\begin{proof}
    With Theorem \ref{theorem:main theorem}, we just need to prove that $\epsilon_h(s,a)=\frac{\gap}{2H}$ indicates $\xi_{gap}$.\\
    Note that $\mathbf{E}_h(s,a^*)\leq \ddot{\mathbf{E}}_h(s,a^*) + \epsilon_{h}(s,a^*)=\ddot{\mathbf{E}}_h(s,a^*)+\frac{\gap}{2H}$. Therefore, for all optimal policy $\pi^*$, we have
    \begin{align*}
        \ddot{\mathbf{V}}^*_h(s)-\uv^*_h(s)&=\sum_{h'=h}^H\mathbb{E}_{\pi^*, s_h=s}[-\ddot{\mathbf{E}}_{h'}(s_{h'},a_{h'})+\mathbf{E}_{h'}(s_{h'},a_{h'})]\\
        &\leq \sum_{h'=h}^H\mathbb{E}_{\pi^*, s_h=s}[\frac{\gap}{2H}]\\
        &\leq \frac{\gap}{2}. 
    \end{align*}
\end{proof}

\subsection{Value/Q Function Ranking Lemma}
The following lemmas will be frequently used throughout the proof of Theorem \ref{theorem:main theorem} and upper bounds.
\begin{lemma}[Overall size relationships of value functions]
    \label{lemma:value function rank}
    When $\xi_{gap}$ happens,
    different value functions satisfy that for any optimal policy $\pi^*$, we have
    \begin{equation*}
        \mathbf{V}^*\geq \begin{cases}
            \mathbf{V}^{\up}\geq \uv^{\up} \\
            \ddot{\mathbf{V}}^*
        \end{cases}\geq \uv^*\geq \ddot{\mathbf{V}}^* - \frac{\gap}{2}. 
    \end{equation*}
    Here $\mathbf{V}\geq \mathbf{V}'$ means $\mathbf{V}_h(s)\geq \mathbf{V}'_h(s)$ for all $(h,s)\in[H]\times\mathcal{S}$. 
\end{lemma}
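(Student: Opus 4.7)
The plan is to observe that the chain of inequalities bundles together several qualitatively different claims, and to separate them by whether they follow from (a) pointwise reward monotonicity across the three MDPs $\mathcal{M}$, $\underline{\mathcal{M}}$, $\ddot{\mathcal{M}}$, (b) optimality of a specific policy inside one of these MDPs, or (c) the event $\xi_{gap}$ itself. The unifying preliminary fact I would establish first is the pointwise ordering of the reward functions: since $\mathbf{E}_h(s,a)\ge 0$ (pessimism), we have $\underline{r}\le r$; since $\ddot{\mathbf{E}}_h(s,a)=\max\{0,\mathbf{E}_h(s,a)-\epsilon_h(s,a)\}$ satisfies $0\le \ddot{\mathbf{E}}_h\le \mathbf{E}_h$, we also get $\underline{r}\le \ddot{r}\le r$. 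Because all three MDPs share the same transition kernel $\mathcal{P}$ and horizon $H$, standard reward monotonicity of value functions gives, for every policy $\pi$, the pointwise inequalities $\underline{\mathbf{V}}^\pi\le \ddot{\mathbf{V}}^\pi\le \mathbf{V}^\pi$, and taking suprema over $\pi$ propagates the ordering to the optimal value functions in each MDP.

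With that tool in hand, I would verify the six links of the chain in order. First, $\mathbf{V}^*\ge \mathbf{V}^{\up}$ is just the definition of the optimal policy $\pi^*$ in $\mathcal{M}$. Next, $\mathbf{V}^{\up}\ge \uv^{\up}$ is an instance of the reward monotonicity above, applied to $\pi=\up$. The upper branch $\mathbf{V}^*\ge \ddot{\mathbf{V}}^*$ follows from taking $\pi=\pi^*$ in reward monotonicity and then from $\mathbf{V}^{\pi^*}=\mathbf{V}^*\ge \ddot{\mathbf{V}}^{\pi^*}$, combined with the fact that $\ddot{\mathbf{V}}^*\ge \ddot{\mathbf{V}}^{\pi^*}$ (no, I want the opposite direction; the cleaner way is $\mathbf{V}^*=\sup_\pi \mathbf{V}^\pi\ge \sup_\pi\ddot{\mathbf{V}}^\pi=\ddot{\mathbf{V}}^*$ using monotonicity for every $\pi$). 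The middle link $\uv^{\up}\ge \uv^*$ (which gives $\uv^{\up}=\uv^*$ in fact) uses clause 3 of Definition \ref{def: pessimistic alg}: $\up$ is the greedy policy with respect to $\uq$, hence optimal in $\underline{\mathcal{M}}$. The link $\ddot{\mathbf{V}}^*\ge \uv^*$ again uses $\underline{r}\le \ddot{r}$ together with the supremum argument. Finally, the last link $\uv^*\ge \ddot{\mathbf{V}}^*-\gap/2$ is exactly the defining inequality of the event $\xi_{gap}$.

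The main step that requires attention is ensuring that the reward monotonicity lemma actually yields the desired inequalities for optimal value functions, not merely for a fixed policy — one must observe that if $r^{(1)}\le r^{(2)}$ pointwise and transitions agree, then $\mathbf{V}^{\pi,(1)}\le \mathbf{V}^{\pi,(2)}$ for each $\pi$, and taking suprema preserves the ordering. This is standard and can be proved by a direct backward induction over $h$. Because $\underline{r}$ is allowed to be negative (footnote in Definition \ref{definition: pessimistically estimated MDP}), I would flag but not belabor the fact that the monotonicity argument still goes through since it only compares rewards, not their signs. Overall I expect no real obstacle: the entire lemma is a bookkeeping exercise that reassembles three ingredients (pessimism, optimality of $\up$ in $\underline{\mathcal{M}}$, and the hypothesis $\xi_{gap}$) into the stated hierarchy.
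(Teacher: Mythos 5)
Your proof is correct and follows essentially the same route as the paper's: each link in the chain is verified separately, using pointwise reward monotonicity $\underline{r}\le \ddot{r}\le r$ for a fixed policy, optimality of $\pi^*$ in $\mathcal{M}$ and of $\up$ in $\underline{\mathcal{M}}$, and the defining inequality of $\xi_{gap}$ for the last link. One small caveat: in this paper the starred quantities $\uv^*$ and $\ddot{\mathbf{V}}^*$ denote the value of the \emph{fixed} optimal policy $\pi^*$ of $\mathcal{M}$ evaluated in $\underline{\mathcal{M}}$ and $\ddot{\mathcal{M}}$ (not the suprema over policies in those MDPs), so your parenthetical claim that $\uv^{\up}=\uv^*$ need not hold --- only $\uv^{\up}\ge \uv^{\pi^*}$ is true, which is all the lemma requires, and every other step of your argument goes through verbatim under the paper's convention.
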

\begin{proof}
    We study each inequality one by one:\\
    $\mathbf{V}^*\geq \mathbf{V}^{\up}$: $\pi^*$ is a optimal policy over $\mathcal{M}$. \\
    $\mathbf{V}^{\up}\geq \uv^{\up}$: this follows from $\underline{r}\leq r$,
    \begin{equation*}
        \mathbf{V}^{\up}_h(s) = \sum_{h'=h}^H\mathbb{E}_{\up}[r_{h'}(s_{h'},a_{h'})\mid s_h =s]\geq \sum_{h'=h}^H\mathbb{E}_{\up}[\underline{r}_{h'}(s_{h'},a_{h'})\mid s_h =s] =\uv^{\up}_h(s). 
    \end{equation*}
    $\uv^{\up}\geq \uv^*$: $\up$ is a optimal policy over $\underline{\mathcal{M}}$. \\
    $\mathbf{V}^*\geq \ddot{\mathbf{V}}^*$: this follows from $\ddot{r}\leq r$,
    \begin{equation*}
        \mathbf{V}^{*}_h(s) = \sum_{h'=h}^H\mathbb{E}_{\pi^*}[r_{h'}(s_{h'},a_{h'})\mid s_h =s]\geq \sum_{h'=h}^H\mathbb{E}_{\pi^*}[\ddot{r}_{h'}(s_{h'},a_{h'})\mid s_h =s] =\ddot{\mathbf{V}}^{*}_h(s). 
    \end{equation*}
    $\ddot{\mathbf{V}}^*\geq \uv^*$: this follows from $\underline{r}\leq \ddot{r}$,
    \begin{equation*}
        \ddot{\mathbf{V}}^{*}_h(s) = \sum_{h'=h}^H\mathbb{E}_{\pi^*}[\ddot{r}_{h'}(s_{h'},a_{h'})\mid s_h =s]\geq \sum_{h'=h}^H\mathbb{E}_{\pi^*}[\underline{r}_{h'}(s_{h'},a_{h'})\mid s_h =s] =\uv^{*}_h(s). 
    \end{equation*}
    $\uv^*\geq \ddot{\mathbf{V}}^* - \frac{\gap}{2}$: this is just the definition of $\xi_{gap}$.
\end{proof}
\begin{lemma}[Overall size relationships of Q functions]
    \label{lemma:Q function rank}
    When $\xi_{gap}$ happens,
    different Q functions satisfy that for any optimal policy $\pi^*$, we have
    \begin{equation*}
        \mathbf{Q}_h^*(s,a^*)\geq \mathbf{Q}_h^*(s,\underline{a})\geq\mathbf{Q}_h^{\up}(s,\underline{a})\geq 
            \uq_h^{\up}(s,\underline{a}) 
        \geq \uq_h^{\up}(s,a^*)\geq \uq_h^*(s,a^*). 
    \end{equation*}
\end{lemma}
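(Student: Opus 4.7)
The plan is to verify each inequality in the chain separately, most of them reducing to an application of the value-function ranking from Lemma~\ref{lemma:value function rank} combined with the Bellman equations, since $\mathbf{Q}$-values at $(s,a)$ are obtained from the corresponding $\mathbf{V}$-functions at the next step by $\mathbf{Q}_h(s,a)=r_h(s,a)+p_{h,s,a}^{\top}\mathbf{V}_{h+1}$ (and analogously in $\underline{\mathcal{M}}$). I would process the chain from the two ends inward.

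\textbf{Step 1 (the two outermost inequalities).} For the leftmost inequality $\mathbf{Q}^*_h(s,a^*)\ge \mathbf{Q}^*_h(s,\underline{a})$, I would simply invoke the optimality of $a^*=\pi^*_h(s)$, which gives $\mathbf{Q}^*_h(s,a^*)=\mathbf{V}^*_h(s)=\max_a \mathbf{Q}^*_h(s,a)$. For the rightmost inequality, I would argue it is in fact an equality: since $\up$ is optimal in $\underline{\mathcal{M}}$, the ranking $\uv^{\up}=\uv^*$ from Lemma~\ref{lemma:value function rank} propagates through the Bellman equation to give $\uq^{\up}_h(s,a)=\underline{r}_h(s,a)+p_{h,s,a}^\top \uv^{\up}_{h+1}=\underline{r}_h(s,a)+p_{h,s,a}^\top \uv^{*}_{h+1}=\uq^*_h(s,a)$ for every $(h,s,a)$.

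\textbf{Step 2 (the two Bellman-based inequalities).} For $\mathbf{Q}^*_h(s,\underline{a})\ge \mathbf{Q}^{\up}_h(s,\underline{a})$, I would use the Bellman equation with the same reward $r_h$ on both sides and the value ranking $\mathbf{V}^*_{h+1}\ge \mathbf{V}^{\up}_{h+1}$ from Lemma~\ref{lemma:value function rank}. For $\mathbf{Q}^{\up}_h(s,\underline{a})\ge \uq^{\up}_h(s,\underline{a})$, I would combine the reward inequality $r_h\ge\underline{r}_h$ (from Definition~\ref{def: pessimistic alg}) with the value ranking $\mathbf{V}^{\up}_{h+1}\ge \uv^{\up}_{h+1}$, again via the Bellman equation. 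Both of these are direct term-by-term comparisons.

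\textbf{Step 3 (the greedy-action inequality).} The middle inequality $\uq^{\up}_h(s,\underline{a})\ge \uq^{\up}_h(s,a^*)$ is the one that genuinely uses the definition of $\up$: since $\up$ is the greedy policy with respect to $\uq$ (item~3 of Definition~\ref{def: pessimistic alg}), $\underline{a}=\up_h(s)\in\arg\max_a \uq_h(s,a)$, and combining with $\uq=\uq^{\up}$ (from Step~1) yields the claim for every action, including $a^*$.

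\textbf{Main obstacle.} There is no substantive analytical obstacle here; the whole statement is a careful bookkeeping exercise. The only genuinely delicate point is making sure that $\uq_h$ (the vector computed by the algorithm), $\uq^*_h$ (the optimal $Q$ in $\underline{\mathcal{M}}$), and $\uq^{\up}_h$ (the $Q$-function of $\up$ in $\underline{\mathcal{M}}$) really all coincide, so that Lemma~\ref{lemma:value function rank}'s statement in terms of $\uv^{\up}$ transfers to statements about $\uq^{\up}$ and that the greedy action $\underline{a}$ attains the maximum of the \emph{same} $\uq$ that appears elsewhere in the chain. Once this identification is made explicit, each of the five inequalities reduces to either an optimality property, the Bellman monotonicity with respect to rewards/values, or the greedy definition of $\up$.
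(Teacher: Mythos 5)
Your decomposition into five one-line inequalities is exactly the paper's proof: optimality of $a^*$ in $\mathcal{M}$ for the first link, Bellman monotonicity in the next-step values (and, where the rewards differ, $r\geq\underline{r}$) for the second, third and fifth links, and greediness of $\up$ with respect to $\uq$ for the fourth. Your Step 2 is in fact slightly more careful than the paper's, which justifies $\mathbf{Q}_h^{\up}(s,\underline{a})\geq\uq_h^{\up}(s,\underline{a})$ by citing only $\mathbf{V}^{\up}\geq\uv^{\up}$ while the reward inequality is also needed.

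There is, however, one genuine error: in Step 1 you assert that the rightmost link is an \emph{equality} because ``$\uv^{\up}=\uv^*$.'' This rests on reading $\uv^*$ and $\uq^*$ as the optimal value/Q-functions \emph{of} $\underline{\mathcal{M}}$. In the paper's notation they are instead $\uv^{\pi^*}$ and $\uq^{\pi^*}$, i.e.\ the value and Q-functions of the \emph{true-MDP-optimal} policy $\pi^*$ evaluated inside $\underline{\mathcal{M}}$ (this is forced by the decomposition $\mathbf{V}_0^*-\uv_0^*=\sum_h\mathbb{E}_{\pi^*}[\mathbf{E}_h(s_h,a_h)]$ and by Lemma~\ref{lemma:value function rank} listing $\uv^{\up}\geq\uv^*$ as a nontrivial inequality). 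Since $\pi^*$ need not be optimal for $\underline{\mathcal{M}}$, the equality $\uq^{\up}_h(s,a^*)=\uq^*_h(s,a^*)$ is false in general. The inequality you actually need still holds, and your own Bellman propagation delivers it once you replace the claimed identity by the ranking $\uv^{\up}_{h+1}\geq\uv^{*}_{h+1}$ from Lemma~\ref{lemma:value function rank}:
\begin{equation*}
\uq^{\up}_h(s,a^*)=\underline{r}_h(s,a^*)+p_{h,s,a^*}^\top\uv^{\up}_{h+1}\geq\underline{r}_h(s,a^*)+p_{h,s,a^*}^\top\uv^{*}_{h+1}=\uq^{*}_h(s,a^*),
\end{equation*}
which is exactly the paper's argument. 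With that one correction your proof coincides with the paper's.
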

\begin{proof}
    We study each inequality one by one:\\
    $\mathbf{Q}_h^*(s,a^*)\geq \mathbf{Q}_h^*(s,\underline{a})$: $a^*$ is the optimal action at $(h,s)$ over $\mathcal{M}$. \\
    $\mathbf{Q}_h^*(s,\underline{a})\geq\mathbf{Q}_h^{\up}(s,\underline{a})$: this follows from $\mathbf{V}^*\geq \mathbf{V}^{\up}$ in Lemma \ref{lemma:value function rank},
    \begin{equation*}
        \mathbf{Q}_h^*(s,\underline{a}) = \mathbb{E}_{s'\sim P_{h,s,\underline{a}}}[\mathbf{V}^*_{h+1}(s')]\geq \mathbb{E}_{s'\sim P_{h,s,\underline{a}}}[\mathbf{V}^{\up}_{h+1}(s')]=\mathbf{Q}_h^{\up}(s,\underline{a}). 
    \end{equation*}
    $\mathbf{Q}_h^{\up}(s,\underline{a})\geq \uq_h^{\up}(s,\underline{a})$: this follows from $\mathbf{V}^{\up}\geq \uv^{\up}$ in Lemma \ref{lemma:value function rank}. \\
    $\uq_h^{\up}(s,\underline{a})\geq \uq_h^{\up}(s,a^*)$: $\underline{a}$ is the optimal action at $(h,s)$ over $\underline{\mathcal{M}}$. \\
    $\uq_h^{\up}(s,a^*)\geq \uq_h^*(s,a^*)$: this follows from $\uv^{\up}\geq \uv^{*}$ in Lemma \ref{lemma:value function rank},
    \begin{equation*}
        \uq_h^{\up}(s,a^*) = \mathbb{E}_{s'\sim P_{h,s,a^*}}[\uv^{\up}_{h+1}(s')]\geq \mathbb{E}_{s'\sim P_{h,s,a^*}}[\uv^{*}_{h+1}(s')]=\uq_h^{*}(s,a^*). 
    \end{equation*}
\end{proof}
\subsection{Proof of Theorem \ref{theorem:main theorem}}
\label{subsection:proof of main theorem}
\begin{proof}
In this proof, we choose the $\pi^*$ according to the given $\up$,
\begin{equation*}
    \pi^*_h(s) = \begin{cases}
        \up_h(s) & \up_h(s)\text{ is optimal,}\\
        \text{arbitrary optimal action} &\up_h(s)\text{ is not optimal.}
    \end{cases}
\end{equation*}
So that every time $\pi^*$ disagrees with $\up$, the choice made by $\up$ must be suboptimal.
The intuition is that we only consider the cases where $\pi^*$ and $\up$ have different opinions. 
To begin with, we define a set of prefix trajactories for any two given deterministic policies over MDPs that only differ in rewards:
\begin{align*}
    \Psi(\pi_1,\pi_2) = &\{(s_1,a_1,\cdots,s_k)\mid \pi_{1,i}(s_i)=\pi_{2,i}(s_i)=a_i,\forall i=1,2,\cdots k-1, \\
    &\pi_1(s_k)\neq \pi_2(s_k) \text{ or } k=H\}.
\end{align*}
And we use $P^{\pi}_{\psi}$ to denote the probability that we can get a prefix trajactory $\psi=(s_1,a_1,\cdots,s_k)$ with a deterministic policy $\pi$,
\begin{equation*}
    P^{\pi}_{\psi}\triangleq p_0(s_1)\prod_{h=1}^{k-1}p_h(s_h,\pi_h(s_h), s_{h+1}). 
\end{equation*}
Notice that for any given trajectory $\xi$ and policy $\pi_1, \pi_2$, there is exactly one prefix trajectory $\psi \in \Psi(\pi_1,\pi_2)$ being the prefix of 
$\xi$, which ends at the first time $\pi_1$ disagrees with $\pi_2$. Denote 
the length and the last state of a trajactory to be $2h_{\psi}-1$ and $s_{\psi}$, and set the 
cumulative reward over $\psi$ under a given deterministic reward function $r_h(s,a)$ to be $r_\psi$, we can write the value function in the form 
\begin{equation*}
    V_0^{\pi_1}=\sum_{\psi\in\Psi(\pi_1,\pi_2)}P_{\psi}^{\pi_1}\left(r_{\psi} + \mathbf{V}_{h_{\psi}}^{\pi_1}(s_{\psi})\right). 
\end{equation*}
Also, notice that because $\pi_1$ and $\pi_2$ agrees on all the decisions in $\psi\in\Psi(\pi_1,\pi_2)$, we always have $P_\psi^{\pi_1}=P_\psi^{\pi_2}$ for $\psi\in\Psi(\pi_1,\pi_2)$.
Now we have 
\begin{align*}
    &\mathbf{V}_0^*-\mathbf{V}_0^{\up}=\sum_{\psi \in \Psi(\pi^*,\up)}P_\psi^{\pi^*}(\mathbf{V}_{h_\psi}^*(s_\psi)-\mathbf{V}_{h_\psi}^{\up}(s_\psi))\\
    &=\sum_{\psi \in \Psi(\pi^*,\up)}P_\psi^{\pi^*}(\mathbf{Q}_{h_\psi}^*(s_\psi, \underline{a})+\mathrm{gap}_{h_\psi}(s_{\psi}, \underline{a})-\mathbf{V}_{h_\psi}^{\up}(s_\psi)). 
\end{align*}
Then we prove a statement that $\forall \psi\in\Psi(\pi^*,\up)$,
\begin{equation}
    \mathbf{Q}_{h_\psi}^*(s_\psi, \underline{a})+\mathrm{gap}_{h_\psi}(s_{\psi}, \underline{a})-\ddot{\mathbf{V}}_{h_\psi}^{*}(s_\psi)\geq\mathbf{Q}_{h_\psi}^*(s_\psi, \underline{a})+\frac{1}{2}\mathrm{gap}_{h_\psi}(s_{\psi}, \underline{a})-\mathbf{V}_{h_\psi}^{\up}(s_\psi). \label{state: lbo2}
\end{equation}
When $\underline{a} = a^*$, the only possibility is that $h_\psi=H$, then RHS$=$0. While the LHS is always non-negative because $\mathbf{Q}_{h_\psi}^*(s_\psi,\underline{a})=\mathbf{V}^{*}_{h_\psi}(s_\psi)\geq \ddot{\mathbf{V}}_{h_\psi}^*(s_\psi)$ (Lemma \ref{lemma:value function rank}).\\
When $\underline{a} \neq a^*$, event $\xi_{gap}$ guarantees that 
\begin{align*}
    \mathbf{Q}_{h_\psi}^*(s_\psi, \underline{a})+\mathrm{gap}_{h_\psi}(s_{\psi}, \underline{a})-\ddot{\mathbf{V}}_{h_\psi}^{*}(s_\psi)&\geq \mathbf{Q}_{h_\psi}^*(s_\psi, \underline{a})+\mathrm{gap}_{h_\psi}(s_{\psi}, \underline{a})-\uv_{h_\psi}^{*}(s_\psi)-\frac{\gap}{2}\\
    &\geq \mathbf{Q}_{h_\psi}^*(s_\psi, \underline{a})+\frac{1}{2}\mathrm{gap}_{h_\psi}(s_{\psi}, \underline{a})-\mathbf{V}_{h_\psi}^{\up}(s_\psi). 
\end{align*}
The inequality uses that $\gap\leq \mathrm{gap}_{h_\psi}(s_\psi,\underline{a})$ and that $\uv^*\leq \uv^{\up}
\leq \mathbf{V}^{\up}$.
At the same time, we can decompose $\mathbf{V}_0^*-\ddot{\mathbf{V}}_0^*$ in a similar way,
\begin{align}
    \mathbf{V}_0^*-\ddot{\mathbf{V}}_0^*&=\sum_{\psi\in \Psi(\pi^*, \up)}P_\psi^{\pi^*}(r_\psi-\ddot{r}_\psi+\mathbf{V}_{h_\psi}^*(s_\psi)-\ddot{\mathbf{V}}_{h_\psi}^*(s_\psi))\notag\\
    &\geq\sum_{\psi \in \Psi(\pi^*,\up)}P_\psi^{\pi^*}(\mathbf{Q}_{h_\psi}^*(s_\psi, \underline{a})+\mathrm{gap}_{h_\psi}(s_{\psi}, \underline{a})-\ddot{\mathbf{V}}_{h_\psi}^{*}(s_\psi))\label{state: lbo1}\\
    &\geq\sum_{\psi \in \Psi(\pi^*,\up)}P_\psi^{\pi^*}(\mathbf{Q}_{h_\psi}^*(s_\psi, \underline{a})+\frac{1}{2}\mathrm{gap}_{h_\psi}(s_{\psi}, \underline{a})-\mathbf{V}_{h_\psi}^{\up}(s_\psi))\label{state: lbo3}\\
    &\geq \frac{1}{2}\sum_{\psi \in \Psi(\pi^*,\up)}P_\psi^{\pi^*}(\mathbf{Q}_{h_\psi}^*(s_\psi, \underline{a})+\mathrm{gap}_{h_\psi}(s_{\psi}, \underline{a})-\mathbf{V}_{h_\psi}^{\up}(s_\psi))\label{state: lbo4}\\
    &=\frac{1}{2}(\mathbf{V}_0^*-\mathbf{V}_0^{\up}). \notag
\end{align}
\eqref{state: lbo1} results from the fact that $r$ is always larger than or equal to $\ddot{r}$. \eqref{state: lbo3} just makes use of \eqref{state: lbo2}.
\eqref{state: lbo4} uses $\mathbf{Q}^*_h(s,\underline{a})\geq \mathbf{Q}^{\up}_h(s,\underline{a})= \mathbf{V}^{\up}_h(s)$(Lemma \ref{lemma:Q function rank}).
\end{proof}
\section{VI-LCB based analysis}
\label{appendix:B}
\subsection{Algorithm Sketch and Notations}
Algorithm used here is Lower Confidence Bound Value Iteration(VI-LCB)\citep{LCBVI2021} with subsampling trick and Berstein-style bonus. The 
basic idea of LCB is to pessimistically estimate the Q function so that the algorithm won't over estimate some hardly seen suboptimal actions in dataset. The subsampling 
trick introduced by \citet{settling2022} helps solve the independence problem between $\hat{P}_{h}$ and $\uv_{h+1}^{\up}$, which 
avoid separating the dataset into H parts, resulting in one H dependency removed in final complexity.

Here we understand dataset as a set of transitions in the form $(h,s,a,s')$ that allows duplicates. When we say that the dataset contains a trajactory $(s_1,a_1,\cdots, s_h,a_h)$, it means that the dataset contains all the decomposed transitions $\{(h,s_h,a_h,s_{h+1})\}_{h=1,\cdots H}$. 
Also note that $\mathcal{M}$ has deterministic rewards in our setting, so the reward function can be easily derived as long as the $(h,s,a)$ tuple is visited for at least once. And if $(h,s,a)$ is not contained in $\mathcal{D}$, the 
algorithm output wouldn't be influenced by the value of $r_h(s,a)$, and we can set $r_h(s,a)=0$. So we assume that the reward function is known from the beginning.
\IncMargin{1em}
\begin{algorithm}\label{alg:apx lcb}
    \DontPrintSemicolon
    \SetKwData{Left}{left}\SetKwData{This}{this}\SetKwData{Up}{up}
    \SetKwFunction{Union}{Union}\SetKwFunction{FindCompress}{FindCompress}
    \SetKwInOut{Input}{input}\SetKwInOut{Output}{output}

    \Input{Dataset $\mathcal{D}_0$, reward function $r$}
    \BlankLine
    set $\uq_{H+1}^{\up}(s,a)=0$\;
    set $\uv_{H+1}^{\up}(s,a)=0$\;
    \For{$h\leftarrow H$ \KwTo $1$}{
        compute the empirical transition kernel $\hat{P}_h$\;
        $\hat{P}_{h,s,a}(s') = \frac{N_h(s,a,s')}{N_h(s,a)}$ with $0/0=0$\;
        \For{$s\in\mathcal{S}, a\in\mathcal{A}$}{
            $b_h(s,a)\leftarrow C_b\sqrt{\frac{\Var_{\hat{P}_{h,s,a}}(\uv^{\up}_{h+1})\iota}{N_h'(s,a)}} + C_b\frac{H\iota}{N_h'(s,a)}$, where $N_h'(s,a) = N_h(s,a)\vee \iota $\;
            $\uq_h^{\up}(s,a) \leftarrow \max\{0,r_h(s,a) + \hat{P}_{h,s,a}^\top\uv^{\up}_{h+1} - b_h(s,a)\}$
        }
        \For{$s\in\mathcal{S}$}{
            $\uv_h^{\up}(s)\leftarrow\underset{a\in\mathcal{A}}{\max}\ \uq^{\up}_{h}(s,a)$\;
            $\up_h(s)\leftarrow \underset{a\in\mathcal{A}}{\arg\max}\ \uq^{\up}_h(s,a)$ 
        }
    }
    \Output{policy $\up$}
\caption{VI-LCB}
\end{algorithm}
\DecMargin{1em}
\IncMargin{1em}
\begin{algorithm}\label{alg:apx lcb sub}
    \DontPrintSemicolon
    \SetKwData{Left}{left}\SetKwData{This}{this}\SetKwData{Up}{up}
    \SetKwFunction{Union}{Union}\SetKwFunction{FindCompress}{FindCompress}
    \SetKwInOut{Input}{input}\SetKwInOut{Output}{output}

    \Input{Dataset $\mathcal{D}$, reward function $r$}
    \BlankLine
    Split $\mathcal{D}$ into 2 halves containing same number of sample trajectories, $\mathcal{D}^{\mathrm{main}}$ and $\mathcal{D}^{\mathrm{aux}}$\;
    $\mathcal{D}_0=\{\}$\;
    \For{$(h,s)\in[H]\times \mathcal{S}$}{
        $N^{\mathrm{trim}}_h(s)\leftarrow \max\{0, N^{\mathrm{aux}}_h(s) - 10\sqrt{N_h^{\mathrm{aux}}(s)\log\frac{HS}{\delta}}\}$\;
        Randomly subsample $\min\{N^{\mathrm{trim}}_h(s), N^{\mathrm{main}}_h(s)\}$ samples of transition from $(h,s)$ from $\mathcal{D}^{\mathrm{main}}$ to add 
        to $\mathcal{D}_0$
    }
    $\up\leftarrow \text{VI-LCB}(\mathcal{D}_0, r)$\;
    \Output{policy $\up$}
\caption{Subsampled VI-LCB}
\end{algorithm}
\DecMargin{1em}
\paragraph*{Notations in VI-LCB.}
In the algorithm, $N_h(s,a)$ refers to the number of sample transitions starting from state $s$, taking action $a$ at time step $h$ of some given dataset, and $N_h(s)=\sum_{a\in\mathcal{A}}N_h(s,a)$. Superscripts stand for the dataset. See \citet{settling2022} for a more detailed description of the algorithm. 

The proof of independence between samples with different $h$ in $\mathcal{D}_0$ is omitted here, and we will not need it directly because the proof of Lemma \ref{lemma:settlinglemma6} suggests it.

Note that different from the notation in Algorithm \ref{alg:LCB}, we use $\uv^{\up}$ and $\uq^{\up}$ instead of $\uv$ and $\uq$ in Algorithm \ref{alg:apx lcb sub}. We do this to emphasize that $\uv$ and $\uq$ in Algorithm \ref{alg:LCB} directly satisfies the definitions of $\uv^{\up}$ and $\uq^{\up}$ in thresholding technique (Definition \ref{definition: pessimistically estimated MDP}), which will be rigorously proved in Lemma \ref{lemma:concentration event}.
To avoid unnecessary confusion or reading difficulty, $\uv$ and $\uq$ without superscript stands for the true optimal Q/value functions of $\underline{\mathcal{M}}$, i.e., $\uv^{\up}$ and $\uq^{\up}$, in the following proof.
\subsection{Proof Preparation}
To warm up, we first prove that VI-LCB perfectly matchs our definition of LCB-style algorithm. 
From the original paper of VI-LCB \citep{settling2022}, we quote a slightly modified version of their lemma 6, where constants and notations are changed, and $V$ is replaced with $\uv^{\up}_{h+1}$. 
\begin{lemma}[Transition estimation bound]
    \label{lemma:settlinglemma6}
    For any $1\leq h\leq H$, with probability at least $1-\frac{\delta}{2H}$, we have
    \begin{equation*}
        |(\hat{P}_{h,s,a}-P_{h,s,a})^\top \uv_{h+1} |\leq b_n(s,a) = C_b\sqrt{\frac{\Var_{\hat{P}_{h,s,a}}(\uv^{\up}_{h+1})\iota}{N_h(s,a)}} + C_b\frac{H\iota}{N_h(s,a)}, \qquad \forall (s,a)\in \mathcal{S}\times\mathcal{A}. 
    \end{equation*}
\end{lemma}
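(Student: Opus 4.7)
The plan is to combine Bernstein's inequality with an independence argument made possible by the subsampling procedure of Algorithm~\ref{alg:LCB with subsampling}. The key difficulty is that $\uv_{h+1}$ in Algorithm~\ref{alg:LCB} is computed from the very same dataset that defines $\hat P_{h,s,a}$, so one cannot naively treat $\uv_{h+1}$ as a fixed vector when applying a concentration inequality at level $h$.

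I would carry out four steps. \emph{Step 1 (decoupling via subsampling).} Argue that after the trimming in Algorithm~\ref{alg:LCB with subsampling}, the subsampled count $N_h(s,a)$ in $\mathcal{D}_0$ depends only on the auxiliary dataset $\mathcal{D}^{\mathrm{aux}}$, and that, conditional on this count, the $N_h(s,a)$ transitions from $(h,s,a)$ in $\mathcal{D}_0$ are i.i.d.\ draws from $P_{h,s,a}$ that are independent of all transitions in $\mathcal{D}_0$ at levels $h+1,\dots,H$; in particular, they are independent of $\uv_{h+1}$, which is a deterministic function of $\hat P_{h+1},\dots,\hat P_H$. \emph{Step 2 (Bernstein for a fixed target vector).} Condition on the count and on the level-$(h{+}1)$-onward transitions, so that $\uv_{h+1}$ becomes a fixed vector with $\|\uv_{h+1}\|_\infty\leq H$. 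Apply Bernstein's inequality to the i.i.d.\ random variables $\uv_{h+1}(s'_i)$ with $s'_i\sim P_{h,s,a}$ to obtain
\begin{equation*}
\bigl|(\hat P_{h,s,a}-P_{h,s,a})^\top \uv_{h+1}\bigr|
\;\lesssim\; \sqrt{\frac{\Var_{P_{h,s,a}}(\uv_{h+1})\,\iota}{N_h(s,a)}}+\frac{H\iota}{N_h(s,a)},
\end{equation*}
valid on an event of probability at least $1-\delta/(2HSA)$. \emph{Step 3 (population variance $\to$ empirical variance).} Apply the standard empirical Bernstein bound to $\Var_{\hat P_{h,s,a}}(\uv_{h+1})$, which shows that $\Var_{P_{h,s,a}}(\uv_{h+1})$ and $\Var_{\hat P_{h,s,a}}(\uv_{h+1})$ agree up to $O\!\left(H\sqrt{\Var_{P_{h,s,a}}(\uv_{h+1})\,\iota/N_h(s,a)}+H^2\iota/N_h(s,a)\right)$; substituting and absorbing lower-order terms into the constant $C_b$ yields the claimed bound with $\Var_{\hat P_{h,s,a}}(\uv_{h+1})$ in place of $\Var_{P_{h,s,a}}(\uv_{h+1})$. \emph{Step 4 (union bound).} Take a union bound over all $(s,a)\in\mathcal{S}\times\mathcal{A}$ and, if necessary, over the possible integer values of $N_h(s,a)\in[N]$, paying only a $\log(HSAN/\delta)$ factor that is hidden inside $\iota$; this gives the uniform-in-$(s,a)$ statement at level $h$ with probability at least $1-\delta/(2H)$.

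The main obstacle is Step~1: within a single trajectory, the transition drawn at level $h$ determines the state at level $h{+}1$, so transitions at different levels are correlated, and one cannot directly treat $\uv_{h+1}$ as independent of $\hat P_{h,s,a}$. The trim-then-subsample design of Algorithm~\ref{alg:LCB with subsampling} is engineered precisely to break this correlation: by defining $N_h^{\mathrm{trim}}(s)$ from $\mathcal{D}^{\mathrm{aux}}$ and then subsampling that many transitions uniformly at random from $\mathcal{D}^{\mathrm{main}}$, one can show (with a high-probability event controlling $N_h^{\mathrm{trim}}(s)\leq N_h^{\mathrm{main}}(s)$) that the multiset of transitions retained at each $(h,s)$ is a sufficient statistic whose conditional law, given the counts, is that of i.i.d.\ draws from $P_{h,s,a}$, with cross-level independence. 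Once this decoupling is in place, the remaining steps are a direct Bernstein computation plus a routine union bound, and the final concentration event can be intersected across $h\in[H]$ to produce the $1-\delta$ event used in the sequel.
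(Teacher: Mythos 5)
The paper does not prove this lemma at all: it is quoted (with minor notational changes) from \citet{settling2022}, and the proof is explicitly omitted. Your outline correctly reconstructs the argument used in that reference --- the trim-then-subsample decoupling so that, conditional on the counts, the level-$h$ transitions are independent of $\uv_{h+1}$, followed by Bernstein's inequality for a fixed vector, conversion from population to empirical variance, and a union bound over $(s,a)$ --- so it is essentially the same approach as the cited source, and you correctly identify the decoupling step as the one place requiring genuine care.
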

Proof for Lemma \ref{lemma:settlinglemma6} is omitted here. With the union bound, we have the inequality in Lemma \ref{lemma:settlinglemma6} holds for all $h\in[H]$ with probability over $1-\frac{\delta}{2}$. Also, Lemma 1 from original paper helps with the guarantee of the sample number.
\begin{lemma}
    \label{lemma:settling lemma 1}
    With probability over $1-\frac{\delta}{2}$, we have
    \begin{equation*}
        N_h(s,a)\geq C_{data}(Nd_h^{\mu}(s,a) - \sqrt{Nd_h^{\mu}(s,a)\iota} ),\qquad \forall (h,s,a)\in[H]\times\mathcal{S}\times\mathcal{A},
    \end{equation*}
    for some positive constant $C_{data}$. 
\end{lemma}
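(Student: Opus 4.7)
}
The plan is to establish the concentration of $N_h(s,a)$ (the count in the post-subsampling dataset $\mathcal{D}_0$) around its population expectation $Nd_h^\mu(s,a)$ via three layered Chernoff/Bernstein arguments, corresponding to the three sources of randomness: (i) the trajectories comprising $\mathcal{D}^{\mathrm{aux}}$, (ii) the trajectories comprising $\mathcal{D}^{\mathrm{main}}$, and (iii) the random subsampling that forms $\mathcal{D}_0$ from $\mathcal{D}^{\mathrm{main}}$. Since $\mathcal{D}^{\mathrm{main}}$ and $\mathcal{D}^{\mathrm{aux}}$ each consist of $N/2$ independent trajectories drawn from $\mu$, the raw counts $N_h^{\mathrm{aux}}(s)$, $N_h^{\mathrm{main}}(s)$, and $N_h^{\mathrm{main}}(s,a)$ are each sums of $N/2$ independent Bernoulli-type indicators with means $\tfrac{1}{2}d_h^\mu(s)$ or $\tfrac{1}{2}d_h^\mu(s,a)$.

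First I would apply a multiplicative Chernoff bound to $N_h^{\mathrm{aux}}(s)$, $N_h^{\mathrm{main}}(s)$, and $N_h^{\mathrm{main}}(s,a)$. A standard two-regime split is the cleanest way to get the $\sqrt{Nd_h^\mu\,\iota}$ correction term: when $Nd_h^\mu(s,a)\gtrsim \iota$ the multiplicative Chernoff bound gives $|N_h^{\mathrm{main}}(s,a) - \tfrac{N}{2}d_h^\mu(s,a)| \lesssim \sqrt{Nd_h^\mu(s,a)\,\iota}$ with probability $1-O(\delta/(HSA))$, and when $Nd_h^\mu(s,a)\lesssim \iota$ the bound $N_h^{\mathrm{main}}(s,a)\ge 0$ is already consistent with the claimed inequality since the right-hand side is non-positive (absorbed into the constant $C_{\mathrm{data}}$). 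Analogous bounds hold for $N_h^{\mathrm{aux}}(s)$, so after a union bound over $(h,s,a)$ I obtain $N_h^{\mathrm{aux}}(s)\ge \tfrac{N}{2}d_h^\mu(s) - c\sqrt{Nd_h^\mu(s)\,\iota}$ and $N_h^{\mathrm{aux}}(s)\le \tfrac{N}{2}d_h^\mu(s) + c\sqrt{Nd_h^\mu(s)\,\iota}$ simultaneously. Substituting into the definition $N_h^{\mathrm{trim}}(s) = \max\{0,N_h^{\mathrm{aux}}(s) - 10\sqrt{N_h^{\mathrm{aux}}(s)\log(HS/\delta)}\}$ and using $\sqrt{N_h^{\mathrm{aux}}(s)} \lesssim \sqrt{Nd_h^\mu(s)}$ on the good event yields $N_h^{\mathrm{trim}}(s) \ge c' N d_h^\mu(s) - c''\sqrt{Nd_h^\mu(s)\,\iota}$ for suitable constants.

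Second, I would show that the size of the per-$(h,s)$ subsample, $m_h(s):=\min\{N_h^{\mathrm{trim}}(s),N_h^{\mathrm{main}}(s)\}$, is at least $c_1 Nd_h^\mu(s) - c_2\sqrt{Nd_h^\mu(s)\iota}$ by combining the lower bound on $N_h^{\mathrm{trim}}(s)$ just established with the Chernoff lower bound on $N_h^{\mathrm{main}}(s)$. This is the key design feature of the subsampling trick: $N_h^{\mathrm{trim}}(s)$ is engineered to lie below $N_h^{\mathrm{main}}(s)$ on the good event (so that the subsample can actually be drawn) while still being proportional to the expected count. Third, conditioned on $m_h(s)$ and on $\mathcal{D}^{\mathrm{main}}$, the final count $N_h(s,a)$ is a hypergeometric random variable drawn from $N_h^{\mathrm{main}}(s)$ samples containing $N_h^{\mathrm{main}}(s,a)$ successes, taken $m_h(s)$ at a time. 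Hypergeometric Chernoff (Hoeffding/Serfling) gives $N_h(s,a) \ge m_h(s)\cdot\frac{N_h^{\mathrm{main}}(s,a)}{N_h^{\mathrm{main}}(s)} - O(\sqrt{m_h(s)\iota})$ with the required probability, and on the earlier good event $\frac{N_h^{\mathrm{main}}(s,a)}{N_h^{\mathrm{main}}(s)}$ is within a multiplicative factor of $\mu_h(a\mid s)=d_h^\mu(s,a)/d_h^\mu(s)$ by the same two-regime concentration, so $N_h(s,a) \gtrsim Nd_h^\mu(s,a) - O(\sqrt{Nd_h^\mu(s,a)\,\iota})$.

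The main obstacle, and the reason for the careful two-regime split, is the boundary case where $Nd_h^\mu(s,a)$ is comparable to or smaller than $\iota$: here additive concentration becomes vacuous, and one must argue that the conclusion is automatically satisfied because the right-hand side of the claimed inequality is already $\le 0$ once $C_{\mathrm{data}}$ is tuned. A secondary technical point is tracking that each of the three concentration events holds uniformly, which is handled by a union bound over $(h,s,a)$, absorbing the $\log(HSA/\delta)$ factor into $\iota$. Assembling the three chained bounds with matched constants and taking a union bound over all $(h,s,a)\in[H]\times\mathcal{S}\times\mathcal{A}$ then yields the stated lemma with total failure probability at most $\delta/2$.
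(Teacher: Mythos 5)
Your proposal is correct in substance, but note that the paper does not actually prove this lemma: it is quoted from \citet{settling2022} (their Lemma 1) with the remark that the proof ``is a direct result of Binomial concentration,'' so the real comparison is with the source's argument, which your three-layer decomposition (concentration of the auxiliary counts, concentration of the main counts, then concentration of the subsampled counts) reconstructs faithfully, including the correct trivial-regime observation that when $Nd_h^{\mu}(s,a)\leq \iota$ the right-hand side $C_{data}(Nd_h^{\mu}(s,a)-\sqrt{Nd_h^{\mu}(s,a)\iota})$ is non-positive and the claim holds vacuously. One simplification worth knowing: your third layer need not go through hypergeometric tails at all. Since the MDP is time-inhomogeneous, each of the $N/2$ i.i.d.\ trajectories visits step $h$ exactly once, so conditional on the state counts the actions recorded at $(h,s)$ are i.i.d.\ draws from $\mu_h(\cdot\mid s)$; and the subsample size $m_h(s)=\min\{N_h^{\mathrm{trim}}(s),N_h^{\mathrm{main}}(s)\}$ together with the uniform subsampling depends only on counts, not on the action labels. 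Hence $N_h(s,a)\mid m_h(s)\sim \mathrm{Binomial}(m_h(s),\mu_h(a\mid s))$, and a single binomial Bernstein bound finishes the layer --- this is exactly the ``binomial concentration'' the paper alludes to.

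There is one quantitative slip in your write-up as literally stated: the additive Hoeffding--Serfling tail $O(\sqrt{m_h(s)\iota})$ for the hypergeometric step scales like $\sqrt{Nd_h^{\mu}(s)\iota}$, which exceeds the required error $\sqrt{Nd_h^{\mu}(s,a)\iota}$ by a factor of order $\sqrt{1/\mu_h(a\mid s)}$ when the behavior policy plays $a$ rarely at $s$, and would therefore not yield the lemma as stated. You must instead use the variance-sensitive (multiplicative Chernoff/Bernstein) form --- valid for sampling without replacement via Hoeffding's 1963 convex-order reduction to the with-replacement case --- giving deviation of order $\sqrt{m_h(s)\,p\,\iota}+\iota$ with $p=N_h^{\mathrm{main}}(s,a)/N_h^{\mathrm{main}}(s)\asymp \mu_h(a\mid s)$ on your good event. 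Then $m_h(s)\,p\gtrsim Nd_h^{\mu}(s,a)$, and the error from the second layer propagates correctly because $\sqrt{Nd_h^{\mu}(s)\iota}\cdot\mu_h(a\mid s)\leq \sqrt{Nd_h^{\mu}(s,a)\iota}$, using $d_h^{\mu}(s,a)\leq d_h^{\mu}(s)$. With that correction (or with the binomial shortcut above), your two-regime analysis and the union bound over $(h,s,a)$, with $\log(HSA/\delta)$ absorbed into $\iota$, deliver the lemma with failure probability at most $\delta/2$.
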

Proof of this lemma is also omitted, which is a direct result of Binomial concentration.
Then we can prove the concentration lemma, which serves as the basis of following analysis.
\begin{lemma}
    \label{lemma:concentration event}
If we run VI-LCB on a offline learning instance $(\mathcal{M},\mu)$,
with high probability (over $1-\delta$), the following event $\xi_{\mathrm{conc}}$ happens for some positive constant $C_d$:
\begin{enumerate}
    \item the execution instance satisfies the three arguments in Definition \ref{def: pessimistic alg}, and $0\leq\mathbf{E}_h(s,a)\leq 2b_h(s,a)$ for all $(h,s,a)\in[H]\times\mathcal{S}\times\mathcal{A}$. 
    \item $N_h'(s,a) = N_h(s,a)\vee \iota \geq C_{d}Nd_h^{\mu}(s,a)$ for all $ (h,s,a)\in[H]\times\mathcal{S}\times\mathcal{A}$. 
\end{enumerate}
\end{lemma}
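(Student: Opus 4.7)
The plan is to prove the two parts of $\xi_{\mathrm{conc}}$ separately as events $\xi_1$ and $\xi_2$, each holding with probability at least $1-\delta/2$, and combine them via a union bound.

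For Part 1, the key idea is to \emph{construct} the imaginary MDP $\underline{\mathcal{M}}$ from the VI-LCB output itself. Specifically, I define the pessimistic reward by reading off the Bellman residual:
\begin{equation*}
\underline{r}_h(s,a) \;\triangleq\; \uq_h(s,a) \;-\; P_{h,s,a}^\top \uv_{h+1}, \qquad \forall (h,s,a).
\end{equation*}
By downward induction on $h$ (with base case $\uv_{H+1}\equiv 0$), this definition together with the algorithmic updates $\uv_h(s)=\max_a \uq_h(s,a)$ and $\up_h(s)=\arg\max_a \uq_h(s,a)$ forces $\uq$ and $\uv$ to satisfy the Bellman optimality equations of $\underline{\mathcal{M}}$, so they are its optimal Q/value functions and $\up$ is the greedy (hence optimal) policy, verifying the three structural items of Definition~\ref{def: pessimistic alg}. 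It remains to show $0\le \mathbf{E}_h(s,a)=r_h(s,a)-\underline{r}_h(s,a)\le 2b_h(s,a)$. I invoke Lemma~\ref{lemma:settlinglemma6} together with a union bound over $h\in[H]$ to obtain, with probability at least $1-\delta/2$, the event $\xi_1=\{|(\hat P_{h,s,a}-P_{h,s,a})^\top \uv_{h+1}|\le b_h(s,a) \text{ for all }(h,s,a)\}$. Under $\xi_1$ I do a two-case analysis on whether the outer $\max\{0,\cdot\}$ in the definition of $\uq_h(s,a)$ is active. If it is (value $0$), then $r_h+\hat P^\top \uv_{h+1}\le b_h$ and hence $r_h+P^\top \uv_{h+1}\le 2b_h$, giving $\mathbf{E}_h\le 2b_h$; nonnegativity of $\mathbf{E}_h$ follows from $\uv_{h+1}\ge 0$ and $r_h\ge 0$. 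If it is not active, then $\uq_h=r_h+\hat P^\top \uv_{h+1}-b_h$, so $\mathbf{E}_h=(P-\hat P)^\top \uv_{h+1}+b_h$, which lies in $[0,2b_h]$ again by $\xi_1$.

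For Part 2, the subsampling procedure of Algorithm~\ref{alg:LCB with subsampling} means I cannot directly apply Lemma~\ref{lemma:settling lemma 1} to $\mathcal{D}_0$; I must first track how the count survives trimming. The strategy is: apply Binomial concentration to each of $N_h^{\mathrm{aux}}(s)$ and $N_h^{\mathrm{main}}(s)$ (each of which is $\mathrm{Bin}(N/2,d_h^\mu(s))$) to get, for all $(h,s)$ simultaneously, that they lie within a constant factor of $\tfrac{N}{2}d_h^\mu(s)$ up to a $\sqrt{(N/2)d_h^\mu(s)\iota}$ slack; then show the trim threshold $N_h^{\mathrm{trim}}(s)$ is still $\Omega(N d_h^\mu(s))$ whenever $N d_h^\mu(s)\gtrsim \iota$. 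Conditioning on $N_h^{(\mathcal{D}_0)}(s)$, the counts $N_h(s,a)$ are a further Binomial with parameter $\mu_h(a\mid s)$, and a second concentration step yields $N_h(s,a)\ge C_d N d_h^\mu(s,a)$ on this regime. In the complementary regime $N d_h^\mu(s,a) < \iota/C_d$ the clipping $N_h'(s,a)=N_h(s,a)\vee \iota\ge \iota\ge C_d N d_h^\mu(s,a)$ makes the claim automatic. This gives event $\xi_2$ of probability at least $1-\delta/2$, and $\xi_{\mathrm{conc}}=\xi_1\cap\xi_2$ finishes the lemma.

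The main obstacle will be the subsampling step: the counts $N_h(s,a)$ in the trimmed dataset $\mathcal{D}_0$ are not Binomially distributed in $N$, since the trim level itself depends on $\mathcal{D}^{\mathrm{aux}}$. I expect to handle this by conditioning on $\mathcal{D}^{\mathrm{aux}}$, bounding $N_h^{\mathrm{trim}}(s)$ from below using a union bound across all $(h,s)$ with $\iota=\Theta(\log(HSA/\delta))$, and then treating the subsampling from $\mathcal{D}^{\mathrm{main}}$ as a hypergeometric draw whose tail can be controlled by Hoeffding--Serfling (or equivalently by a Chernoff bound for sampling without replacement). A large part of this machinery is identical to the argument in \citet{settling2022}, which I would cite rather than redo, emphasizing only the minor adaptation needed to produce the ``or $\iota$'' guarantee built into $N_h'$.
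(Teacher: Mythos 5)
Your proposal is correct and follows essentially the same route as the paper: the same closed-form construction $\underline{r}_h(s,a)=\uq_h(s,a)-P_{h,s,a}^\top\uv_{h+1}$, the same invocation of Lemma~\ref{lemma:settlinglemma6} with a union bound over $h$ to get $0\le\mathbf{E}_h\le 2b_h$ (your two-case split on the outer $\max$ is just a reorganization of the paper's inequality chain), and the same two-regime argument converting the Binomial count bound of Lemma~\ref{lemma:settling lemma 1} into $N_h'(s,a)\ge C_d N d_h^\mu(s,a)$. The extra machinery you sketch for the subsampling step is exactly the content of the lemma the paper quotes from \citet{settling2022} without proof, so nothing further is needed.
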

\begin{proof}
    We just need to prove that both statements are true with probability over $1-\frac{\delta}{2}$ respectively, and then we can finish the proof by applying union bound.
    
    \textbf{Proof of statement 1: }
    $\uq$ in the definition of pessimistic algorithm matches the $\uq$ in VI-LCB. 
    We first prove that $\underline{\mathcal{M}}$ exists.
    With $\uq$ given, we can actually get a closed form of $\underline{r}$,
    \begin{align*}
        \underline{r}_h(s,a) &= \uq_h(s,a) - \mathbb{E}_{\up\mid s_h=h, a_h=a}[\uq_h(s_{h+1},a_{h+1})]\\
        &=\uq_h(s,a) - \sum_{s'\in\mathcal{S}}P_{h,s,a}(s')\uq_{h+1}(s',\underline{a})\\
        &=\uq_h(s,a) - P_{h,s,a}^{\top}\uv_{h+1}. 
    \end{align*}
    
    Then we can find that the definition of $\uv$ in the algorithm agrees with the one in Definition \ref{definition: pessimistically estimated MDP}, and we won't 
    distinguish between these two definitions in following induction.
    It remains to show that 
    \begin{equation}
        0\leq \mathbf{E}_h(s,a)\triangleq r_h(s,a) - \underline{r}_h(s,a)\leq 2b_h(s,a).  \label{state:lcbvi preparation}
    \end{equation}
    Both inequalities follow from Lemma \ref{lemma:settlinglemma6}. Recall that $\uq_h(s,a)=\max\{r_h(s,a) + \hat{P}_{h,s,a}^\top\uv_{h+1} - b_h(s,a) ,0\}$,
    \begin{align*}
        \uq_h(s,a)&=\max\{0, r_h(s,a) + \hat{P}_{h,s,a}^\top\uv_{h+1} - b_h(s,a)\}\\
        &= \max\{0,P_{h,s,a}^\top\uv_{h+1} + r_h(s,a) +(\hat{P}_{h,s,a}-P_{h,s,a})^\top \uv_{h+1} -b_h(s,a)\}\\
        &\leq \max\{0,P_{h,s,a}^\top\uv_{h+1} + r_h(s,a) \} = P_{h,s,a}^\top\uv_{h+1} + r_h(s,a). 
    \end{align*}
    Simple transformation of above inequality leads to
    \begin{equation*}
        r_h(s,a) - \underline{r}_h(s,a)\geq\uq_h(s,a) - P_{h,s,a}^\top\uv_{h+1} - \underline{r}_h(s,a) =0. 
    \end{equation*}
    The second inequality is also straight forward. We first unfold the definitions of $r$ and $\underline{r}$, then apply Lemma \ref{lemma:settlinglemma6} to get
    \begin{align*}
        r_h(s,a) - \underline{r}_h(s,a)&\leq \uq_h(s,a) -\hat{P}_{h,s,a}^\top\uv_{h+1}+ b_h(s,a) - (\uq_h(s,a) - P_{h,s,a}^\top\uv_{h+1})\\
        &= b_h(s,a) - (\hat{P}_{h,s,a}-P_{h,s,a})^\top \uv_{h+1} \\
        &\leq 2b_h(s,a). \\
    \end{align*}
    \textbf{Proof of statement 2:}
    We prove over the assumption of event: $N_h(s,a)\geq C_{data}(Nd_h^{\mu}(s,a) - \sqrt{Nd_h^{\mu}(s,a)\iota} )$ for all $ (h,s,a)\in[H]\times\mathcal{S}\times\mathcal{A}$, which is proved by Lemma \ref{lemma:settling lemma 1} to happen with probability over $1-\frac{\delta}{2}$. \\
    When $Nd_h^{\mu}(s,a) - \sqrt{Nd_h^{\mu}(s,a)\iota} \leq \frac{\iota}{C_{data}}$, simple calculation leads to 
    \begin{equation*}
        \sqrt{Nd_h^{\mu}(s,a)} \leq \frac{1 + \sqrt{1 + \frac{4}{C_{data}}}}{2}\sqrt{\iota}=\lambda\sqrt{\iota},
    \end{equation*}
    where $\lambda=\frac{1 + \sqrt{1 + \frac{4}{C_{data}}}}{2}$ is a constant larger than 1. Therefore 
    \begin{equation}
    \label{state:lcbprecase1}
        N_h(s,a) \vee \iota \geq \iota \geq \frac{1}{\lambda^2}Nd_h^\mu(s,a). 
    \end{equation}
    When $Nd_h^{\mu}(s,a) - \sqrt{Nd_h^{\mu}(s,a)\iota} \geq \frac{\iota}{C_{data}}$, simple calculation leads to
    \begin{align}
        \sqrt{Nd_h^{\mu}(s,a)} &\geq \frac{1 + \sqrt{1 + \frac{4}{C_{data}}}}{2}\sqrt{\iota}= \lambda\sqrt{\iota} \notag\\
        \Leftrightarrow  Nd_h^{\mu} (s,a) &- \sqrt{Nd_h^{\mu}(s,a)\iota} \geq (1-\frac{1}{\lambda})Nd_h^{\mu}(s,a) \notag \\
        \Rightarrow N_h(s,a)\vee \iota&\geq C_{data}(Nd_h^{\mu}(s,a)-\sqrt{Nd_h^{\mu}(s,a)\iota})\geq C_{data}(1-\frac{1}{\lambda})Nd_h^{\mu}(s,a).  \label{state:lcbprecase2}
    \end{align}
    Then together with \eqref{state:lcbprecase1} and \eqref{state:lcbprecase2}, and letting $C_d = C_{data}(1-\frac{1}{\lambda})\wedge 1$, we finish the proof of statement 2.
\end{proof}
\begin{lemma}
    \label{lemma: upper bound on clipped term}
        When $0\leq x\leq y$, for $\epsilon>0$
    \begin{equation*}
        \max\{0,x- \epsilon\} \leq \frac{y^2}{\epsilon}. 
    \end{equation*}
\end{lemma}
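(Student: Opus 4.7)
The statement is a short technical calculus inequality, so the plan is to dispatch it by a direct case split on the sign of $x - \epsilon$, with no heavy machinery needed.

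The plan is to split into two cases depending on whether $x \leq \epsilon$ or $x > \epsilon$. In the first case, $\max\{0, x - \epsilon\} = 0$, and the inequality holds trivially since the right-hand side $y^2/\epsilon$ is non-negative. In the second case, where $\epsilon < x \leq y$, I would show the stronger intermediate bound $\max\{0, x - \epsilon\} = x - \epsilon \leq x^2/\epsilon$ and then pass to $y^2/\epsilon$ using monotonicity.

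For the stronger intermediate bound, the key manipulation is to multiply through by $\epsilon > 0$: it suffices to check $(x - \epsilon)\epsilon \leq x^2$, i.e., $x\epsilon - \epsilon^2 \leq x^2$. Since $\epsilon \leq x$ in this case, we have $x\epsilon \leq x^2$, and subtracting the non-negative $\epsilon^2$ on the left only strengthens the inequality. Finally, $x \leq y$ gives $x^2 \leq y^2$, so $x^2/\epsilon \leq y^2/\epsilon$ and the chain is complete.

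There is no real obstacle here; the only thing to be mildly careful about is the case split, since the inequality $x - \epsilon \leq x^2/\epsilon$ is not tight in the regime $x \leq \epsilon$ (where the left side is negative but the max cuts it to zero), which is exactly why the $\max$ is needed in the statement. This lemma is clearly meant to be the workhorse for the earlier-promised estimate $\max\{0, \mathbf{E}_h(s,a) - \epsilon_h(s,a)\} \leq \mathbf{E}_h^2(s,a)/\epsilon_h(s,a)$ used to convert $1/\sqrt{N}$-type bonuses into $1/N$-type sample complexities, so the $y = x$ specialization is the one that will actually be invoked.
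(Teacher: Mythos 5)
Your proof is correct and follows essentially the same route as the paper's: a case split on whether $x\leq\epsilon$, with the nontrivial case handled by an elementary chain of inequalities (the paper writes $x-\epsilon\leq y\leq y\cdot\frac{x}{\epsilon}\leq\frac{y^2}{\epsilon}$, while you pass through $x-\epsilon\leq\frac{x^2}{\epsilon}\leq\frac{y^2}{\epsilon}$, which is the same idea). Your closing remark about how the lemma is used with $\mathbf{E}_h$ and $\epsilon_h$ also matches the paper's intent.
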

\begin{proof}
    When $x\leq \epsilon$, $\max\{0, x- \epsilon\}=0\leq \frac{y^2}{\epsilon}$.\\
    When $x > \epsilon$,
    \begin{equation*}
        \max\{0,x -\epsilon\}\leq y\leq y\cdot \frac{x}{\epsilon} \leq \frac{y^2}{\epsilon}. 
    \end{equation*}
\end{proof}
\subsection{Proof of Upper Bound with Relative Optimal Policy Coverage(Proof of Theorem \ref{theorem:ub c})}
This analysis is actually made with Hoeffding bonus for simplicity. Because Berstein bonus is larger than Hoeffding bonus up to log term,
\begin{equation}
    b_h(s,a) = C_b\sqrt{\frac{\Var_{\hat{P}_{h,s,a}}(\uv_{h+1}^{\up})\iota}{N_h'(s,a)}} + \frac{C_bH\iota}{N_h'(s,a)}\leq 2C_b\sqrt{\frac{H^2\iota^2}{N_h'(s,a)}}.
\end{equation}
With Corollary \ref{corollary:upper bound 2} and 
Lemma \ref{lemma: upper bound on clipped term},
\begin{align*}
    \mathbf{V}^*_0-\mathbf{V}_0^{\up}&\leq 2\sum_{h=1}^H\mathbb{E}_{\pi^*}[\max\{0,\mathbf{E}_h(s,a) - \frac{\gap}{2H}\}]  \tag{Corollary \ref{corollary:upper bound 2}}\\
    &=2 \sum_{h,s}d^*_h(s)\max\{0,\mathbf{E}_h(s,a)-\frac{\gap}{2H}\}\\
    &\lesssim \sum_{h,s} d_h^*(s)\frac{b_h^2(s,a^*)}{\frac{\gap}{2H}} \tag{Lemma \ref{lemma: upper bound on clipped term} and $\xi_{\mathrm{conc}}$}\\
    &\lesssim \sum_{h,s} d_h^*(s)\frac{H^3\iota^2}{N_h(s,a^*)'\gap} \\
    &\lesssim \sum_{h,s} d_h^*(s)\frac{H^3\iota^2}{Nd_h^{\mu}(s,a)\gap} \tag{$\xi_{\mathrm{conc}}$}\\
    &\lesssim \frac{1}{N}\sum_{h,s} d_h^*(s)\frac{H^3C^*\iota^2}{d^*_h(s)\gap} \tag{relative optimal coverage assumption}\\
    &=\frac{1}{N}\frac{H^4SC^*\iota^2}{\gap}.
\end{align*}
Therefore, under relative optimal policy coverage, the sample complexity bound can be 
\begin{equation*}
    N = O(\frac{H^4SC^*\iota^2}{\epsilon\ \gap}).
\end{equation*}
A similar proof in Section \ref{ssection:case with both assumptions} can be applied to prove this result by replacing all the $NP\gtrsim H\iota$ requirements with $N_h(s,a)'\triangleq N_h(s,a)\vee \iota\geq \iota$. So the strict bound without extra $\iota$ can be derived.
 To avoid redundancy, we omit the proof.
\begin{equation*}
    N =O(\frac{H^4SC^*\iota}{\epsilon\ \gap}).
\end{equation*}
\subsection{Proofs of Upper Bound with Uniform Optimal Policy Coverage(Proof of Theorem \ref{theorem:ub p})}
\label{subsection: proof of ub p}
Proof of Theorem \ref{theorem:ub p} does not necessarily involve the deficit thresholding technique introduced above. We just need to confirm that $\uq^{\up}_h(s,a^*)\geq \mathbf{V}^*_h(s) - 
\gap\geq \mathbf{V}^*_h(s,a')\geq \uq^{\up}_h(s,a')$, where $a'$ is any suboptimal action, to get a optimal policy. We first present the proof applying this idea, and then present a simpler proof by applying the thresholding technique.\\
\subsubsection{Proof without Deficit Thresholding Technique}
First we introduce a new definition,
\begin{align*}
    &d^{*}_{h\sim (h',s')}(s)\triangleq \mathbb{E}_{\pi^*}[\mathbb{I}\{s_h=s\}\mid s_{h'}=s'], \\
    &d^{*}_{h\sim (h',s')} \triangleq (d^{*}_{h\sim (h',s')}(s_1),\cdots, d^{*}_{h\sim (h',s')}(s_S))^\top &\text{for some certain order of states $s_1,s_2,\cdots s_S$}.
\end{align*}
And when there is no confusion, we use $d^{*'}_h$ to denote $d^{*}_{h\sim (h',s')}$.
\begin{lemma}[Part Decomposition]
\label{lemma:part decomposition}
    $\forall (h',s')\in [H]\times \mathcal{S}$, if the event $\xi_{\mathrm{conc}}$ happens, and $P>0$, then $\forall$ optimal policy $\pi^*$,
    \begin{equation*}
    \sum_{h=h'}^H\sum_sd^{*}_{h\sim (h',s')}(s)b_{h'}(s,a^*)\leq C_e\sqrt{\frac{H^3\iota}{NP}}+C_e\frac{H^2\iota}{NP},
    \end{equation*}
    where $C_e = \max\{4\frac{C_b}{\sqrt{C_d}},\frac{16C_b^2+12C_b}{C_d}, 1\}$.
\end{lemma}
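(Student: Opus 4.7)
The plan is to split the bonus $b_h(s, a^*) = C_b\sqrt{\Var_{\hat{P}_{h,s,a^*}}(\uv_{h+1})\iota/N_h'(s, a^*)} + C_b H\iota/N_h'(s, a^*)$ into its Bernstein and Hoeffding components and bound the two resulting sums separately. A useful pre-step is to observe that whenever $(h',s')$ satisfies $d_{h'}^{\pi^*}(s') > 0$, any $(h,s)$ with $d^*_{h\sim(h',s')}(s)>0$ also satisfies $d_h^{\pi^*}(s)>0$ by path composition, so Assumption~\ref{assumption:uniform optimal policy coverage} together with the sample-count guarantee inside $\xi_{\mathrm{conc}}$ (Lemma~\ref{lemma:concentration event}) yields $N_h'(s,a^*)\ge C_d N d_h^\mu(s,a^*)\ge C_d NP$ on the support of the weights $d^*_{h\sim(h',s')}(\cdot)$.

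For the Hoeffding piece, each layer contributes at most
\begin{equation*}
\sum_s d^*_{h\sim(h',s')}(s)\cdot \frac{C_b H\iota}{N_h'(s,a^*)}\le \frac{C_b H\iota}{C_d NP},
\end{equation*}
and summing over $h=h',\ldots,H$ gives the $C_e H^2\iota/(NP)$ term directly.

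For the Bernstein piece I would apply Cauchy--Schwarz on the joint sum, treating the $d^*_{h\sim(h',s')}(s)$ as weights of total mass $\le H$:
\begin{equation*}
\sum_{h=h'}^H\sum_s d^*_{h\sim(h',s')}(s)\sqrt{\frac{\Var_{\hat P_{h,s,a^*}}(\uv_{h+1})\iota}{N_h'(s,a^*)}}
\le \sqrt{H}\sqrt{\frac{\iota}{C_d NP}\sum_{h=h'}^H \mathbb{E}_{\pi^*, s_{h'}=s'}[\Var_{\hat P_{h,s_h,a^*_h}}(\uv_{h+1})]},
\end{equation*}
where the coverage bound is used on $1/N_h'$ inside the inner sum.

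The main obstacle will be to show that the total expected empirical variance $\sum_{h=h'}^H \mathbb{E}_{\pi^*,s_{h'}=s'}[\Var_{\hat P_{h,s_h,a^*_h}}(\uv_{h+1})]$ is $O(H^2)$. I plan a standard two-step reduction: first replace $\uv_{h+1}$ by $\mathbf{V}^*_{h+1}$, paying the residual $\mathbb{E}_{\pi^*}[(\mathbf{V}^*_{h+1}-\uv_{h+1})^2]$ which is controlled by the LCB pessimism bound on $\mathbf{V}^* - \uv$; second, replace $\hat P_{h,s,a^*}$ by the true $P_{h,s,a^*}$ using the concentration from Lemma~\ref{lemma:settlinglemma6} together with a Bernstein-type control of $|\Var_{\hat P}-\Var_P|$. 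What remains, $\sum_{h=h'}^H \mathbb{E}_{\pi^*,s_{h'}=s'}[\Var_{P_{h,s_h,a^*_h}}(\mathbf{V}^*_{h+1})]$, is precisely the setting for the Law of Total Variance applied to the total return $\sum_{h=h'}^H r_h\in[0,H]$ under $\pi^*$ starting from $s'$, which bounds it by $H^2$. Plugging back gives $\sqrt{H\cdot H^2\iota/(C_d NP)}=\sqrt{H^3\iota/(NP)}$ up to constants, and together with the Hoeffding term completes the lemma with $C_e$ absorbing all numerical factors.
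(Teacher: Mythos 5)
Your skeleton (splitting the bonus into its Bernstein and Hoeffding pieces, using $N_h'(s,a^*)\ge C_d N d_h^\mu(s,a^*)\ge C_d NP$ from $\xi_{\mathrm{conc}}$, Cauchy--Schwarz with weights of total mass $\le H$, and a total-variance argument) matches the paper's proof, and your treatment of the Hoeffding piece is exactly the paper's. The replacement of $\Var_{\hat P}$ by $\Var_P$ via Lemma~\ref{lemma:tool for part decomposition} is also fine. The gap is in the crucial step, bounding $\sum_{h,s}d^{*'}_h(s)\Var_{P_{h,s,a^*}}(\uv^{\up}_{h+1})$. The paper does \emph{not} reduce to $\Var_{P}(\mathbf{V}^*_{h+1})$: Lemma~\ref{lemma:variance sum} applies the telescoping variance identity directly to $\uv^{\up}$ along $\pi^*$ and obtains the \emph{self-bounding} estimate $\sum_{h,s}d^{*'}_h(s)\Var_{P_{h,s,a^*}}(\uv^{\up}_{h+1})\le 4H B+2H^2$ with $B\triangleq\sum_{h,s}d^{*'}_h(s)b_h(s,a^*)$, and then closes the argument by rearranging the resulting inequality $B\le\frac12 B+(8C_b^2+6C_b)\frac{H^2\iota}{C_dNP}+2C_b\sqrt{\frac{H^3\iota}{C_dNP}}$.

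Your route through $\mathbf{V}^*$ leaves the residual $\sum_{h}\mathbb{E}_{\pi^*}\bigl[(\mathbf{V}^*_{h+1}-\uv_{h+1})^2\bigr]$, and ``controlled by the LCB pessimism bound on $\mathbf{V}^*-\uv$'' hides the real difficulty: that pessimism bound is itself $O(B)$, the quantity you are trying to estimate, so the argument is circular unless you explicitly set up and rearrange a self-bounding inequality, which your proposal never mentions. Moreover, neither way of making the residual precise recovers the stated constants. The only a priori pointwise bound is $(\mathbf{V}^*_{h+1}-\uv_{h+1})^2\le H^2$, giving a total of order $H^3$ and degrading the main term to $H^2\sqrt{\iota/(NP)}$, a loss of $\sqrt{H}$. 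The sharper bound $(\mathbf{V}^*_{h+1}-\uv_{h+1})^2\le H(\mathbf{V}^*_{h+1}-\uv^*_{h+1})\lesssim HB$ per layer sums to order $H^2B$, i.e., the coefficient multiplying $B$ inside the square root is $H^2$ rather than the paper's $H$; after Cauchy--Schwarz and AM--GM the burn-in term becomes $H^3\iota/(NP)$ instead of the claimed $H^2\iota/(NP)$. So the reduction-to-$\mathbf{V}^*$ route, even once repaired with the rearrangement step, proves a quantitatively weaker statement than Lemma~\ref{lemma:part decomposition}; the direct telescoping on $\uv^{\up}$ is what saves the extra factor of $H$.
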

With this lemma, we can further limit $\uq^{\up}_h(s,a^*)$. 
\begin{align*}
    \mathbf{V}_{h'}^*(s') - \uq_{h'}^{\up}(s',a^*)&\leq \mathbf{V}_{h'}^*(s')-\uq_{h'}^*(s',a^*) \tag{$\up$ is the optimal policy over $\underline{\mathcal{M}}$}\\
    & = \sum_{h=h'}^H\sum_sd_{h\sim (h',s')}^*(s)\mathbf{E}_h(s,a^*)\\
    &\leq 2\sum_{h,s}d_h^{*'}b_h(s,a^*)\tag{$\xi_{\mathrm{conc}}$}\\
    &\leq 2C_e\sqrt{\frac{H^3\iota}{NP}} + 2C_e\frac{H^2\iota}{NP}. \tag{Lemma \ref{lemma:part decomposition}}
\end{align*}
When $N\geq \frac{4C_e^2H^3\iota}{\lambda^2P}$ for some $\lambda\leq H$,
\begin{align*}
    &\mathbf{V}_{h'}^*(s') - \uq_{h'}^{\up}(s',a^*)\\
    &\leq \frac{\lambda}{2} + \frac{\lambda}{2}\frac{\lambda}{2C_eH}\\
    &\leq \lambda.
\end{align*}
Setting $\lambda=\gap$, we get the conclusion that 
\begin{equation*}
    N=\frac{4C_e^2H^3\iota}{\gap^2P} 
\end{equation*}
can make sure that the returned policy is one of the optimal policies with probability over $1-\delta$.

\subsubsection{Proof with Deficit Thresholding Technique}
By applying the Lemma \ref{lemma:PC limited difference} which is orginally developed for the proof of Theorem \ref{theorem:ub pc}, we shall directly prove that the suboptimality would be zero if $N>\frac{CH^3\iota}{P\gap}$.
Lemma \ref{lemma:PC limited difference} allows us to set $\epsilon_h(s,a)=C_{pac}\left(\frac{\Var_{\hat{P}_{h,s,a}(\uv^{\up}_{h+1})}}{H^2}+ \frac{1}{H}\right)\gap$. When $N\geq C\frac{H^3\iota}{P\gap^2}$, for any optimal policy $\pi^*$,
\begin{align*}
    b_h(s,a^*) &=C_b\sqrt{\frac{\Var_{\hat{P}_{h,s,a^*}}(\uv^{\up}_{h+1})\iota}{N_h'(s,a^*)}} + C_b\frac{H\iota}{N_h(s,a^*)}\\
    &\lesssim \sqrt{\frac{\Var_{\hat{P}_{h,s,a^*}}(\uv^{\up}_{h+1})\iota}{NP}} + \frac{H\iota}{NP}\tag{$\xi_{\mathrm{conc}}$}\\
    &\leq \sqrt{\frac{\Var_{\hat{P}_{h,s,a^*}}(\uv^{\up}_{h+1})\gap^2}{CH^3}} + \frac{\gap^2}{CH^2} \tag{$N\geq C\frac{H^3\iota}{P\gap^2}$}\\
    &\leq \frac{\Var_{\hat{P}_{h,s,a^*}}(\uv^{\up}_{h+1})\gap}{2CH^2} + \frac{\gap}{2H} + \frac{\gap}{CH}& a+b\geq 2\sqrt{ab},\ \gap\leq H\\
    &\lesssim \epsilon_h(s,a^*).
\end{align*}
Therefore, with a large enough global constant $C$, we have $\mathbf{E}_h(s,a^*)\leq 2b_h(s,a^*)\leq \epsilon_h(s,a^*)$ holds for any time-state pair. Together with Theorem \ref{theorem:main theorem},
\begin{align*}
    \mathbf{V}^*-\mathbf{V}^{\up}&\leq 2\mathbb{E}_{\pi^*}\left[\ddot{\mathbf{E}}_h(s,a)\right]=0.
\end{align*}

\subsubsection{Tools for the Proof of Lemma \ref{lemma:part decomposition}}
We first introduce a modified version of Lemma from \citet{settling2022}. Note that the proof of this lemma didn't involve any assumption about 
the data coverage, and is a pure mathmatical analysis. So the original proof is valid, and to avoid redundancy, we won't prove this lemma again.
\begin{lemma}
\label{lemma:tool for part decomposition}
    $\forall h\in[H]$, and any vector $V\in\mathbb{R}^S$ independent of $\hat{P}_h$ obeying $\|V\|_\infty\leq H$. With probability 
    at least $1-\delta$, one has 
    \begin{equation*}
        \Var_{\hat{P}_{h,s,a}}(V)\leq 2\Var_{P_{h,s,a}}(V) + \frac{5H^2\iota}{3N_h'(s,a)}
    \end{equation*}
    simultaneously for all $(s,a)\in \mathcal{S}\times \mathcal{A}$ obeying $N_h(s,a)> 0$
\end{lemma}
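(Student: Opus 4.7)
The plan is to establish this as a standard empirical‑variance concentration result via Bernstein's inequality. First, I fix $(h,s,a)$ and condition on $N_h(s,a) = n \geq 1$; conditionally, the $n$ transitions are i.i.d.\ draws from $P_{h,s,a}$, and $V$ remains deterministic because it is assumed independent of $\hat{P}_h$. By translation invariance of variance, I may replace $V$ by the centered vector $V' \triangleq V - P_{h,s,a}^\top V$, which is mean‑zero under $P_{h,s,a}$ and still satisfies $\|V'\|_\infty \leq H$. Writing $\sigma^2 \triangleq \Var_{P_{h,s,a}}(V) = P_{h,s,a}^\top (V'\circ V')$, the key inequality is $\Var_{\hat{P}_{h,s,a}}(V) = \Var_{\hat{P}_{h,s,a}}(V') \leq \hat{P}_{h,s,a}^\top (V'\circ V')$, which discards the non‑negative square $(\hat{P}_{h,s,a}^\top V')^2$.

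Second, I apply Bernstein's inequality to the bounded random variable $V'(s')^2 \in [0, H^2]$ under $P_{h,s,a}$, which yields with probability $\geq 1 - \delta'$ that
\begin{equation*}
    \hat{P}_{h,s,a}^\top (V'\circ V') \leq \sigma^2 + \sqrt{\frac{2\Var_{P_{h,s,a}}(V'\circ V')\,\iota}{n}} + \frac{2 H^2 \iota}{3n}.
\end{equation*}
I then bound $\Var_{P_{h,s,a}}(V'\circ V') \leq P_{h,s,a}^\top (V'\circ V')^2 \leq H^2 \sigma^2$ using $\|V'\|_\infty^2 \leq H^2$, and invoke the AM--GM inequality $\sqrt{2H^2\sigma^2\iota/n} \leq \tfrac{1}{2}\sigma^2 + H^2\iota/n$ to collapse the square‑root term. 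Adding everything gives $\Var_{\hat{P}_{h,s,a}}(V) \leq \tfrac{3}{2}\sigma^2 + \tfrac{5H^2\iota}{3n}$, which is slightly stronger than the stated bound and certainly implies it (since $n \leq N_h'(s,a)$ only hurts the denominator in our favor, and the leading coefficient $\tfrac{3}{2} \leq 2$).

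Third, I lift this pointwise bound to the simultaneous statement by a union bound over the at most $SA$ pairs with $N_h(s,a) > 0$ and over the (at most $N$) possible realizations of $n = N_h(s,a)$; absorbing the $\log(SAN/\delta)$ factor into $\iota$ (the log‑factor that paper uses throughout) gives the high‑probability bound for every $(s,a)$ at once. The case $N_h(s,a) = 0$ is excluded by hypothesis, so no issue arises there, and the $N_h'(s,a) = N_h(s,a) \vee \iota$ convention only tightens the right‑hand side.

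The main obstacle is purely bookkeeping: ensuring the union bound over $n$ and $(s,a)$ is absorbed cleanly into $\iota$, and correctly handling the conditioning on $N_h(s,a)$ (which in the subsampled dataset has a deterministic structure inherited from Algorithm~\ref{alg:LCB with subsampling}'s trimming step, so independence of $V$ and $\hat{P}_h$ is preserved). The Bernstein step itself and the AM--GM manipulation are routine.
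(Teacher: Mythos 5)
First, a point of comparison: the paper does not actually prove this lemma --- it imports it verbatim from \citet{settling2022}, remarking only that replacing $N_h(s,a)$ by $N_h'(s,a)=N_h(s,a)\vee\iota$ is harmless ``for when $N_h(s,a)\leq \iota$, the inequalities hold trivially.'' Your Bernstein route (center $V$, bound $\Var_{\hat{P}_{h,s,a}}(V)\leq \hat{P}_{h,s,a}^\top(V'\circ V')$, apply Bernstein to $V'(s')^2$, bound $\Var_{P_{h,s,a}}(V'\circ V')\leq H^2\sigma^2$, then AM--GM) is the standard proof of exactly this statement, and your constant bookkeeping $1+\tfrac{2}{3}=\tfrac{5}{3}$ checks out, so in substance you have reconstructed the argument the paper defers to.

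However, there are two genuine slips. The more serious one is that your handling of $N_h'(s,a)$ is backwards: since $N_h'(s,a)\geq N_h(s,a)=n$, the stated bound $\tfrac{5H^2\iota}{3N_h'(s,a)}$ is \emph{smaller} than the $\tfrac{5H^2\iota}{3n}$ your Bernstein step produces, so ``$n\leq N_h'(s,a)$ only hurts the denominator in our favor'' and ``the $N_h'(s,a)$ convention only tightens the right-hand side'' assert the wrong direction --- a tighter right-hand side is \emph{harder} to prove, and the coefficient slack $\tfrac{3}{2}\leq 2$ cannot cover the discrepancy when $n<\iota$ (take $\sigma^2=0$, $n=1$). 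The fix is the paper's one-line remark: when $N_h(s,a)\leq\iota$ one has $N_h'(s,a)=\iota$ and the claim is trivial because $\Var_{\hat{P}_{h,s,a}}(V)\leq H^2\leq \tfrac{5H^2}{3}=\tfrac{5H^2\iota}{3N_h'(s,a)}$, while for $N_h(s,a)>\iota$ one has $N_h'(s,a)=N_h(s,a)$ and your bound applies directly. Second, your claim that $V'=V-P_{h,s,a}^\top V$ ``still satisfies $\|V'\|_\infty\leq H$'' does not follow from the stated hypothesis $\|V\|_\infty\leq H$: that hypothesis allows $V\in[-H,H]^S$, in which case you only get $\|V'\|_\infty\leq 2H$, inflating the additive constant to $\tfrac{20H^2\iota}{3n}$. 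The step is valid when the entries of $V$ lie in an interval of length $H$, e.g.\ $V\in[0,H]^S$, which is what holds in the paper's application (the algorithm's truncation at $0$ guarantees $\uv^{\up}_{h+1}\in[0,H]^S$), but as written your proof establishes the stated constant only under that stronger, unstated assumption. Both issues are local and patchable; neither invalidates the overall strategy.
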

Modification lies in that we use $N'_h(s,a) $ to replace $N_h(s,a)$ in original version, for when $N_h(s,a)\leq \iota$, the inequalities hold trivially.  
Also we introduce the lemma needed to limit the overall variance. This lemma differs from \citet{settling2022}'s work from the definition of $d^{*'}_h$ to support our theorem.
\begin{lemma}[weighted variance sum]
    \label{lemma:variance sum}
    $\forall (h',s')\in [H]\times \mathcal{S}$, if the event $\xi_{\mathrm{conc}}$ happens, we have 
    \begin{equation*}
        \sum_{h=h'}^H\sum_sd^{*}_{h\sim (h',s')}(s)\Var_{P_{h,s,a}}(\uv_{h+1}^{\up}) \leq 4H\sum_{h=h'}^H\sum_sd^{*}_{h\sim (h',s')}(s)b_{h}(s,a^*) +2H^2.
    \end{equation*}
\end{lemma}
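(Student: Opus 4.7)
I plan to sandwich the left-hand side between the analogous variance sum for the true optimal value $V^{\pi^*}$ and an error term controlled by the pointwise gap $V^{\pi^*}_{h+1} - \uv_{h+1}^{\up}$. Since $\up$ is optimal on $\underline{\mathcal{M}}$ and $\underline{r}\le r$, we have $0 \le V^{\pi^*}_h(s) - \uv_h^{\up}(s) \le H$ on the support of $\pi^*$. Applying the elementary inequality $\Var_P(X) \le 2\Var_P(Y) + 2\,\mathbb{E}_P[(X-Y)^2]$ with $X=\uv_{h+1}^{\up}$ and $Y=V^{\pi^*}_{h+1}$, followed by the crude bound $\mathbb{E}_P[(X-Y)^2] \le H\,\mathbb{E}_P[Y-X]$, reduces the claim to controlling (i) a sum of true-value variances and (ii) a sum of expected pointwise gaps.

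For (i), the main tool is the law of total variance applied along $\pi^*$-trajectories starting from $(h',s')$. Since rewards are deterministic and the cumulative return lies in $[0,H]$, LoTV gives
\begin{equation*}
\sum_{h=h'}^H \sum_s d^{*}_{h\sim(h',s')}(s)\,\Var_{P_{h,s,a^*}}(V^{\pi^*}_{h+1}) = \Var_{\pi^*,\,s_{h'}=s'}\Big(\sum_{h=h'}^H r_h\Big) \le H^2,
\end{equation*}
which, after the factor of two from the variance inequality, produces the $2H^2$ additive term.

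For (ii), I will use $\uv_h^{\up}=\uv_h^{*}$ (since $\up$ is optimal on $\underline{\mathcal{M}}$) together with the Bellman-style telescoping $V^{\pi^*}_h(s) - \uv_h^{*}(s) \le \mathbb{E}_{\pi^*,\, s_h=s}\big[\sum_{h''\ge h}\mathbf{E}_{h''}(s_{h''},a^*)\big]$. Under $\xi_{\mathrm{conc}}$ each deficit satisfies $\mathbf{E}_{h''}\le 2b_{h''}$, so a tower-rule re-indexing along the $\pi^*$-trajectory converts the double sum into a weighted $\sum_s d^{*}_{h''\sim(h',s')}(s)\,b_{h''}(s,a^*)$ sum. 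Combined with the $H$-prefactor from the squared-gap bound, this yields the claimed $4H\sum d^{*}b$ term up to constants.

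The main obstacle is the careful $H$-accounting in step (ii): the telescoping over future deficits accumulates a growing weight that must be absorbed via the tower rule and a loose bound by $H$. The key structural fact used throughout is $\uv^{\up}=\uv^{*}$, which allows the deficit-sum identity to close cleanly; if $\up$ were not optimal on $\underline{\mathcal{M}}$, the telescoping between $V^{\pi^*}$ and $\uv^{\up}$ would not reduce to a sum of deficits, and the bonus bound would no longer apply termwise.
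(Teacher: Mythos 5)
Your high-level strategy is sound in its first two components (the variance-comparison inequality and the law of total variance do give the $2H^2$ additive term), but step (ii) does not deliver the bound as stated: it loses a factor of $H$ on the bonus term. Concretely, after applying $\Var_P(X)\le 2\Var_P(Y)+2H\,\mathbb{E}_P[Y-X]$ with $X=\uv^{\up}_{h+1}$, $Y=\mathbf{V}^{\pi^*}_{h+1}$ and summing against $d^{*'}_h$, the error term becomes $2H\sum_{h=h'+1}^{H} d^{*'\top}_h\bigl(\mathbf{V}^{\pi^*}_{h}-\uv_{h}^{\up}\bigr)$. Each summand $d^{*'\top}_h(\mathbf{V}^{\pi^*}_{h}-\uv^{*}_{h})=\mathbb{E}_{\pi^*,s_{h'}=s'}\bigl[\sum_{h''\ge h}\mathbf{E}_{h''}\bigr]$ aggregates \emph{all future} deficits, so after exchanging the order of summation the deficit at level $h''$ appears with multiplicity $h''-h'$, which can be as large as $H$. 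Your ``loose bound by $H$'' on this accumulated weight therefore produces $2H\cdot H\cdot 2\sum_{h,s}d^{*'}_h(s)b_h(s,a^*)=4H^2\sum_{h,s}d^{*'}_h(s)b_h(s,a^*)$, i.e.\ an $H^2$ prefactor rather than the claimed $4H$; calling this ``up to constants'' hides a genuine factor of $H$. This slack is not harmless downstream: it would degrade the additive term in Lemma \ref{lemma:part decomposition} from $H^2\iota/(NP)$ to $H^3\iota/(NP)$ and tighten the $NP\gtrsim H\iota$ requirements used in Lemma \ref{lemma:PC limited difference}.

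The paper avoids this double counting by never passing through $\mathbf{V}^{\pi^*}$: it telescopes $d^{*'\top}_h\bigl(\uv^{\up}_h\circ\uv^{\up}_h\bigr)$ directly using $d^{*'\top}_hP^*_h=d^{*'\top}_{h+1}$, reducing the variance sum to $\sum_h d^{*'\top}_h(\uv^{\up}_h-P^*_h\uv^{\up}_{h+1})\circ(\uv^{\up}_h+P^*_h\uv^{\up}_{h+1})$ plus a nonpositive boundary term. Under $\xi_{\mathrm{conc}}$ the \emph{single-step} pessimistic Bellman residual satisfies $\uv^{\up}_h(s)-P_{h,s,a^*}^\top\uv^{\up}_{h+1}\ge -2b_h(s,a^*)$, so adding $2b_h\mathbf{1}$ makes the first factor nonnegative; bounding the second factor by $2H$ and re-telescoping leaves each bonus with weight exactly $4H$ plus a $2H^2$ boundary contribution. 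To salvage your route you would need to keep $\sum_h d^{*'\top}_h(\mathbf{V}^{\pi^*}_h-\uv^{\up}_h)$ in telescoped, single-step form rather than bounding each value gap by its full sum of future deficits --- at which point you are essentially reconstructing the paper's argument.
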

\begin{proof}
    Here we use $P_h^*\in \mathbb{R}^{S\times S}$ to denote the transition kernel of optimal policy, where $P_{h,(m,n)}^*$ is 
    the probability of transfer from $s_m$ to $s_n$ at step $h$ while applying the optimal policy $\pi^*$. $A \circ B$ refers to the Hadamard 
    product of $A$ and $B$.
    \begin{align}
        &\sum_{h=h'}^H\sum_sd^{*'}_{h}(s)\Var_{P_{h,s,a^*}}(\uv_{h+1}^{\up})=\sum_{h=h'}^Hd^{*'\top}_h (P_h^*\uv_{h+1}^{\up}\circ\uv_{h+1}^{\up}-(P_h^*\uv_{h+1}^{\up})\circ (P_h^*\uv_{h+1}^{\up}))\notag\\
        &=\sum_{h=h'}^Hd^{*'\top}_h(P_h^*\uv_{h+1}^{\up}\circ\uv_{h+1}^{\up}-\uv_h^{\up}\circ\uv_h^{\up}+\uv_h^{\up}\circ\uv_h^{\up} - (P_h^*\uv_{h+1}^{\up})\circ (P_h^*\uv_{h+1}^{\up}))\notag\\
        &=\sum_{h=h'}^{H}\left(d^{*'\top}_{h+1}\uv_{h+1}^{\up}\circ\uv_{h+1}^{\up} -d^{*'\top}_{h}\uv_h^{\up}\circ\uv_h^{\up}\right)+\sum_{h=h'}^Hd^{*'\top}_h(\uv_h^{\up}\circ\uv_h^{\up} - (P_h^*\uv_{h+1}^{\up})\circ (P_h^*\uv_{h+1}^{\up}))\notag\\
        &=0-d^{*'\top}_{h'}\uv_{h'}^{\up}\circ\uv_{h'}^{\up}+\sum_{h=h'}^Hd_h^{*'\top} (\uv_h^{\up}-P_h^*\uv_{h+1}^{\up})\circ(\uv_h^{\up}+P_h^*\uv_{h+1}^{\up})\notag\\
        &\leq \sum_{h=h'}^Hd_h^{*'\top} (\uv_h^{\up}-P_h^*\uv_{h+1}^{\up})\circ(\uv_h^{\up}+P_h^*\uv_{h+1}^{\up}). \label{state:part1}
    \end{align} 
    The above induction mainly uses the equality that $d_h^{*'}P^*_h=d_{h+1}^{*'}$ and non-negativity of $d_h^{*'}$.
    Because the concentration events $\xi_{\mathrm{conc}}$ guarantees that $b_h(s,a)\geq |(\hat{p}_h(s,a)-p_h(s,a))^\top\uv_{h+1}|$, 
    \begin{align*}
        &\uv_h^{\up}(s)-p_h(s,a^*)^\top\uv_{h+1}^{\up}\\
        &=\uv_h^{\up}(s)-\uq_h^{\up}(s,a^*) + r_h(s,a^*) - b_h(s,a^*)+(\hat{p}_h(s,a^*)-p_{h}(s,a^*))^\top\uv_{h+1}^{\up}\\
        &\geq 0 + 0 - b_h(s,a^*) - b_h(s,a^*) = -2b_h(s,a^*).
    \end{align*}
    Then we can continue from \eqref{state:part1} to get
    \begin{align*}
        &\sum_{h=h'}^H\sum_sd^{*'}_{h}(s)\Var_{P_{h,s,a}}(\uv_{h+1}^{\up})\leq \sum_{h,s}d_h^{*'\top} (\uv_h^{\up}-P_h^*\uv_{h+1}^{\up})\circ(\uv_h^{\up}+P_h^*\uv_{h+1}^{\up})\\
        &\leq \sum_{h=h'}^Hd_h^{*'\top} (\uv_h^{\up}-P_h^*\uv_{h+1}^{\up}+2 b_h(s,a^*)\mathbf{1})\circ(\uv_h^{\up}+P_h^*\uv_{h+1}^{\up})\\
        &\leq 2H\sum_{h=h'}^Hd_h^{*'\top} (\uv_h^{\up}-P_h^*\uv_{h+1}^{\up}+2 b_h(s,a^*)\mathbf{1})\\
        &=2H(d_{h'}^{*'\top}\uv_{h'}^{\up}-d_{H+1}^{*'\top}\uv_{H+1}^{\up}) + 4H\sum_{h'=h}^H\sum_sd^{*}_{h\sim (h',s')}(s)b_{h}(s,a^*)\\
        &\leq 2H^2 +4H\sum_{h'=h}^H\sum_sd^{*}_{h\sim (h',s')}(s)b_{h}(s,a^*).
    \end{align*}
\end{proof}
\subsubsection{Proof of Lemma \ref{lemma:part decomposition}}
\begin{proof}
    This proof is similar to the one in \citet{settling2022}. The difference lies in that we generalize the 
    conclusion to any part decomposition, while the original version only cares about the optimal policy distribution.\\
    First, it follows from Lemma \ref{lemma:tool for part decomposition} and inequality $\sqrt{a+b}\leq \sqrt{a}+\sqrt{b}$ that 
    \begin{align}
        \frac{1}{C_b}b_h(s,a)&=\sqrt{\frac{\Var_{\hat{P}_{h,s,a}}(\uv_{h+1}^{\up})\iota}{N_h'(s,a)}} + \frac{H\iota}{N_h'(s,a)}\notag\\
        &\leq \sqrt{\frac{2\Var_{P_{h,s,a}}(\uv_{h+1}^{\up})\iota + \frac{5H^2\iota}{3N_h'(s,a)}\iota}{N_h'(s,a)}} + \frac{H\iota}{N_h'(s,a)}\notag\\
        &\leq \sqrt{\frac{2\Var_{P_{h,s,a}}(\uv_{h+1}^{\up})\iota}{N_h'(s,a)}} +(1+\sqrt{\frac{5}{3}}) \frac{H\iota}{N_h'(s,a)}\notag\\
        &\leq 2\sqrt{\frac{\Var_{P_{h,s,a}}(\uv_{h+1}^{\up})\iota}{N_h'(s,a)}} + \frac{3H\iota}{N_h'(s,a)}. \label{state: bh ele treat}
    \end{align}
    Note that the concentration event $\xi_{\mathrm{conc}}$ guarantees that $N_h(s,a^*)\geq C_d Nd^{\mu}_h(s,a^*)\geq C_d NP$.
    Then we can use Cauchy-Schwarz Inequality to limit the variance term,
    \begin{align*}
        &\sum_{h,s}d_h^{*'}(s)\sqrt{\frac{\Var_{P_{h,s,a^*}}(\uv_{h+1}^{\up})\iota}{N_h'(s,a^*)}}\\
        &\leq \sqrt{\frac{\iota}{C_d NP}}\sum_{h,s}d_h^{*'}(s)\sqrt{\Var_{P_{h,s,a^*}}(\uv_{h+1}^{\up})}\tag{$\xi_{\mathrm{conc}}$}\\
        &\leq \sqrt{\frac{\iota}{C_dNP}}\sqrt{\sum_{h,s}d_h^{*'}(s)}\sqrt{\sum_{h,s}d_h^{*'}(s)\Var_{P_{h,s,a^*}}(\uv_{h+1}^{\up})}\tag{Cauchy-Schwarz's Inequality}\\
        &\leq \sqrt{\frac{H\iota}{C_dNP}}\sqrt{4H\sum_{h,s}d^{*'}_{h}(s)b_{h}(s,a^*) +2H^2}\tag{Lemma \ref{lemma:variance sum}}\\
        &\leq \sqrt{\frac{4H^2\iota}{C_dNP}\sum_{h,s}d^{*'}_{h}(s)b_{h}(s,a^*)} + \sqrt{\frac{2H^3\iota}{C_dNP}}\tag{$\sqrt{a+b}\leq \sqrt{a}+\sqrt{b}$}\\
        &\leq \frac{4C_bH^2\iota}{C_dNP}+\frac{1}{2C_b}\sum_{h,s}d^{*'}_{h}(s)b_{h}(s,a^*) + \sqrt{\frac{2H^3\iota}{C_dNP}}.\tag{$\sqrt{2ab}\leq a+b$}
    \end{align*}
    At the same time, we can limit the sum of $\frac{H\iota}{N_h'(s,a^*)}$,
    \begin{equation*}
        \sum_{h,s}d_h^{*'}(s)\frac{H\iota}{N_h'(s,a^*)}\leq \sum_{h,s}d_h^{*'}(s)\frac{H\iota}{C_dNP}= \frac{H^2\iota}{C_dNP}.
    \end{equation*}
    By connecting these inequalities to \eqref{state: bh ele treat}, we get
    \begin{align*}
        &\sum_{h,s}d_h^{*'}(s)b_h(s,a^*)\leq C_b\sum_{h,s}2d_h^{*'}(s)\sqrt{\frac{\Var_{P_{h,s,a}}(\uv_{h+1}^{\up})\iota}{N_h'(s,a^*)}}+C_b\sum_{h,s}d_h^{*'}(s)\frac{3H\iota}{N_h'(s,a^*)}\\
        &\leq \frac{1}{2}\sum_{h,s}d_h^{*'}(s)b_h(s,a^*)+(8C_b^2+6C_b)\frac{H^2\iota}{C_dNP}+2C_b\sqrt{\frac{H^3\iota}{C_dNP}}.
    \end{align*}
    Rearranging the terms, we finish the proof of Lemma \ref{lemma:part decomposition}.
\end{proof}
\subsection{Proof of Upper bound with both assumptions (Proof of Theorem \ref{theorem:ub pc})}
\label{ssection:case with both assumptions}
When we have access to both $P$ and $C^*$, we can derive the bound
\begin{equation*}
    N = O\left(\frac{H^3SC^*\iota}{\epsilon\gap}+\frac{H\iota}{P}\right).
\end{equation*}
To prove this, we need a specially designed $\epsilon_{h}(s)$ in Theorem \ref{theorem:main theorem}. By setting 
$\epsilon_{h}(s)=C_{pac}(\frac{\Var_{\hat{P}_{h,s,a^*}}(\uv_{h+1}^{\up})}{H^2}+\frac{1}{H})\gap$, 
we will first prove that $\xi_{gap}$ happenes, and then calculate the suboptimality gap.
\subsubsection{Tools for the Proof of Theorem \ref{theorem:ub pc}}
\label{subsubsection: tools for the proof of theorem ub pc}
\begin{lemma}
\label{lemma:PC limited difference}
If we set $\epsilon_{h}(s,a)=C_{pac}\left(\frac{\Var_{\hat{P}_{h,s,a}}(\uv_{h+1}^{\up})}{H^2}+\frac{1}{H}\right)\gap$
for some small enough constant $C_{pac}$, and $N\geq C_3\frac{H\iota}{P}$ for some constant $C_3$, $\xi_{\mathrm{conc}}$ indicates $\xi_{gap}$, 
    \begin{equation*}
        \forall (h,s)\in[H]\times \mathcal{S}\qquad \ddot{\mathbf{V}}_h^*(s)\leq \uv^*_h(s) + \frac{\gap}{2}.
    \end{equation*}
\end{lemma}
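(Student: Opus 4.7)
The plan is to show directly that the accumulated ``thresholding budget'' along every trajectory of the optimal policy $\pi^*$ is at most $\gap/2$. Since $\ddot{r}_{h} - \underline{r}_h = \mathbf{E}_h - \ddot{\mathbf{E}}_h = \min\{\mathbf{E}_h,\epsilon_h\} \leq \epsilon_h$, unrolling the value functions under $\pi^*$ as in the proof of Corollary~\ref{corollary:upper bound 2} gives
\begin{equation*}
\ddot{\mathbf{V}}^*_h(s) - \uv^*_h(s) \;\leq\; \sum_{h'=h}^H \mathbb{E}_{\pi^*,\, s_h=s}\bigl[\epsilon_{h'}(s_{h'},a_{h'})\bigr].
\end{equation*}
So it suffices to upper bound the right-hand side by $\gap/2$ for every $(h,s)$. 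Split $\epsilon_{h'}$ into its two pieces: the constant $C_{pac}\gap/H$ contributes at most $C_{pac}\gap$ after summing over $h'\leq H$, which is less than $\gap/4$ as soon as $C_{pac}\leq 1/4$. The remaining task is the variance term.

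First I would replace the empirical variance by the population variance using Lemma~\ref{lemma:tool for part decomposition}, at the cost of a bias $O(H^2\iota/N'_{h'}(s_{h'},a_{h'}))$. Under $\xi_{\mathrm{conc}}$, $N'_{h'}(s,a)\gtrsim N d_{h'}^\mu(s,a)$, and along trajectories of $\pi^*$ the uniform optimal coverage assumption yields $d_{h'}^\mu(s_{h'},\pi^*_{h'}(s_{h'}))\geq P$. Therefore the total bias contribution is bounded by $\sum_{h'=h}^H \frac{C H^2\iota}{NP} \leq \frac{CH^3\iota}{NP}$, and once multiplied by the $C_{pac}\gap/H^2$ prefactor this is $O\!\bigl(C_{pac}\gap\cdot H\iota/(NP)\bigr)$, which is at most $\gap/8$ whenever $N\geq C_3 H\iota/P$ for $C_3$ large enough.

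Second, I would handle the remaining population-variance term $\sum_{h'=h}^H \mathbb{E}_{\pi^*}[\Var_{P_{h',s_{h'},a_{h'}}}(\uv^{\up}_{h'+1})]$. This is exactly what Lemma~\ref{lemma:variance sum} controls: it bounds the sum by $4H\sum_{h'} \mathbb{E}_{\pi^*}[b_{h'}(s_{h'},a^*)] + 2H^2$. The $2H^2$ term, once divided by $H^2$ and multiplied by $C_{pac}\gap$, contributes $O(C_{pac}\gap)$, again $\leq \gap/8$ for small $C_{pac}$. For the bonus sum, Lemma~\ref{lemma:part decomposition} (applied starting at $(h,s)$) yields $\sum_{h'=h}^H \mathbb{E}_{\pi^*,s_h=s}[b_{h'}(s_{h'},a^*)] \leq C_e\sqrt{H^3\iota/(NP)} + C_e H^2\iota/(NP)$, so the sample-size requirement $N\geq C_3 H\iota/P$ forces this to be $O(H/\sqrt{C_3})$. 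Multiplying by $4H$ and by the prefactor $C_{pac}\gap/H^2$ gives an $O(C_{pac}\gap/\sqrt{C_3})$ contribution, again controllable.

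Adding the three contributions and choosing $C_{pac}$ small and $C_3$ sufficiently large (independent of the instance) makes the total at most $\gap/2$, which is exactly $\xi_{gap}$. The main obstacle is the circularity worry: the thresholding function $\epsilon_h$ involves $\uv^{\up}_{h+1}$, which is itself what the algorithm outputs and depends on the empirical transitions. This is harmless here because (i) Lemma~\ref{lemma:tool for part decomposition} is stated for any $V$ independent of $\hat P_h$ and in the subsampled algorithm $\uv^{\up}_{h+1}$ is independent of $\hat P_h$ by construction, and (ii) the total-variance telescoping argument in Lemma~\ref{lemma:variance sum} is precisely what was designed to tame the dependence of $\uv^{\up}$ on the data. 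The other delicate point is bookkeeping of the \emph{partial} trajectory distributions $d^*_{h'\sim(h,s)}$ rather than the unconditional $d^*_{h'}$, but the versions of Lemmas~\ref{lemma:variance sum} and~\ref{lemma:part decomposition} proved in the appendix already allow this conditioning, so no extra work is needed beyond carefully invoking them at the starting pair $(h,s)$.
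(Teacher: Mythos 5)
Your proposal is correct and follows essentially the same route as the paper's own proof: split $\epsilon_h$ into its constant and variance pieces, convert the empirical variance to the population variance via Lemma~\ref{lemma:tool for part decomposition}, control the accumulated population variance with the telescoping bound of Lemma~\ref{lemma:variance sum}, and close the loop with the bonus-sum bound of Lemma~\ref{lemma:part decomposition}, all under the conditional distributions $d^*_{h'\sim(h,s)}$ and the requirement $NP\gtrsim H\iota$. The only differences are cosmetic (your explicit remark that $\ddot{r}_h-\underline{r}_h=\min\{\mathbf{E}_h,\epsilon_h\}$, and your discussion of the independence of $\uv^{\up}_{h+1}$ from $\hat P_h$, which the paper leaves implicit).
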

\begin{proof}
    We have proved in the proof of Corollary \ref{corollary:upper bound 2} that 
    \begin{equation*}
        \sum_{h,s}d_h^{*'}(s)\frac{\gap}{4H}\leq \frac{\gap}{4},
    \end{equation*}
    where $d_h^{*'}=d_{h\sim (h',s')}^*$, which is the state distribution of time step $h$ under $\pi^*$ conditioned on having reached $(h',s')$ before.
    So it remains to show that 
    \begin{equation}
        \sum_{h,s}d_h^{*'}(s)\frac{\Var_{\hat{P}}(\uv^{\up_{h+1}})}{H^2}\gap\lesssim \gap. \label{state: PC part sum}
    \end{equation}
    This follows from a similar analysis with the proof of uniform optimal policy coverage assumption case.
    \begin{align*}
        &\sum_{h,s}d_h^{*'}(s)\Var_{\hat{P}}(\uv_{h+1}^{\up})\\
        &\leq \sum_{h,s}d_h^{*'}(s)\left(\Var_{P}(\uv_{h+1}^{\up})+\frac{5H^2\iota}{3N_h'(s,a^*)}\right)\tag{Lemma \ref{lemma:tool for part decomposition}}\\
        &\lesssim \sum_{h,s}d_h^{*'}(s)\Var_{P}(\uv_{h+1}^{\up}) + H^2 \tag{$N_h'(s,a^*)\geq C_d NP\gtrsim H\iota$} \\
        &\leq 4H\sum_{h,s}d_h^{*'}(s)b_h(s,a^*) + 3H^2 \tag{Lemma \ref{lemma:variance sum}}\\
        &\lesssim 4H\left(\sqrt{\frac{H^3\iota}{NP}}+\frac{H^2\iota}{NP}\right) + 3H^2 \tag{Lemma \ref{lemma:part decomposition}}\\
        &\lesssim H^2. \tag{$NP\gtrsim H\iota$}
    \end{align*}
    Then we can finish the proof 
    \begin{align*}
        \ddot{\mathbf{V}}_h^*(s)-\uv_h^*(s)&=\sum_{h,s}d_h^{*'}(s)(\ddot{\mathbf{E}}_h(s,a^*)-\mathbf{E}_h(s,a^*))\\
        &\leq \sum_{h,s}d_h^{*'}(s)\epsilon_{h}(s)\\
        &\lesssim C_{cap}\sum_{h,s}d_h^{*'}\left(\frac{\Var_{\hat{P}}(\uv_{h+1}^{\up})}{H^2}+\frac{1}{H}\right)\gap\\
        &\lesssim C_{cap}\gap.
    \end{align*}
    We can let $C_{cap}$ be small enough to limit the difference between $\ddot{\mathbf{V}}_h^*(s)$ and $\uv_h^*(s)$ within $\frac{\gap}{2}$.
\end{proof}
\begin{lemma}
\label{lemma: PC gap trick}
    \begin{equation*}
        \ddot{\mathbf{E}}_h(s,a^*)\leq 4C_b\sqrt{\frac{\Var_{\hat{P}}(\uv^{\up}_{h+1})\iota}{N_h'(s,a^*)}}\left(\frac{b_h(s,a^*)}{\epsilon_{h}(s)}\right) + 2C_b\frac{H\iota}{N_h(s,a^*)}.
    \end{equation*}
\end{lemma}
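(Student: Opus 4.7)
The plan is to exploit the additive decomposition of the bonus and the fact that the threshold $\epsilon_{h}(s,a^*)$ matches only the variance part of $b_h(s,a^*)$. Concretely, I would start from the concentration inequality $\mathbf{E}_h(s,a^*)\le 2b_h(s,a^*)$ supplied by the event $\xi_{\mathrm{conc}}$ (Lemma \ref{lemma:concentration event}) and split the bonus as
\[
b_h(s,a^*)=b_h^{\mathrm{var}}(s,a^*)+b_h^{\mathrm{add}}(s,a^*),
\]
where $b_h^{\mathrm{var}}(s,a^*)\triangleq C_b\sqrt{\Var_{\hat{P}_{h,s,a^*}}(\uv^{\up}_{h+1})\iota/N_h'(s,a^*)}$ and $b_h^{\mathrm{add}}(s,a^*)\triangleq C_bH\iota/N_h'(s,a^*)$.

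The next step is to push the additive piece outside the $\max\{0,\cdot\}$ by the elementary sub-additivity $\max\{0,u+v-w\}\le\max\{0,u-w\}+v$ valid for $v\ge 0$, applied with $u=2b_h^{\mathrm{var}}(s,a^*)$, $v=2b_h^{\mathrm{add}}(s,a^*)$ and $w=\epsilon_{h}(s,a^*)$, yielding
\[
\ddot{\mathbf{E}}_h(s,a^*)\le\max\bigl\{0,\,2b_h(s,a^*)-\epsilon_{h}(s,a^*)\bigr\}\le\max\bigl\{0,\,2b_h^{\mathrm{var}}(s,a^*)-\epsilon_{h}(s,a^*)\bigr\}+2b_h^{\mathrm{add}}(s,a^*).
\]
Since $N_h'(s,a^*)\ge N_h(s,a^*)$, the trailing additive term is bounded by $2C_bH\iota/N_h(s,a^*)$, matching the second summand of the claim. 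For the clipped variance term I would invoke Lemma \ref{lemma: upper bound on clipped term} with $x=y=2b_h^{\mathrm{var}}(s,a^*)$ and $\epsilon=\epsilon_{h}(s,a^*)$, giving a bound of $(2b_h^{\mathrm{var}}(s,a^*))^2/\epsilon_{h}(s,a^*)$. Finally, because $b_h^{\mathrm{var}}(s,a^*)\le b_h(s,a^*)$, one of the two factors of $b_h^{\mathrm{var}}$ in the numerator can be relaxed to $b_h(s,a^*)$, producing exactly the first summand $4C_b\sqrt{\Var_{\hat{P}}(\uv^{\up}_{h+1})\iota/N_h'(s,a^*)}\,b_h(s,a^*)/\epsilon_{h}(s,a^*)$.

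The main obstacle is conceptual rather than computational: finding the right decomposition so that after applying the clipping lemma one factor in the numerator becomes $b_h$ rather than $b_h^{\mathrm{var}}$. The crucial design choices are (i) pairing the sub-additivity move with the additive/variance split of $b_h$ so the clip only acts on the variance piece, and (ii) picking $x=y=2b_h^{\mathrm{var}}$ in Lemma \ref{lemma: upper bound on clipped term} so that the $(y^2/\epsilon)$ factor can be rewritten as a product and have one copy enlarged to $b_h$. Once this structure is in place, no concentration, coverage, or variance-sum arguments are needed; the proof is a purely algebraic chain of three inequalities.
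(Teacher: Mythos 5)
Your proof is correct, but it is organized differently from the paper's. The paper proves Lemma \ref{lemma: PC gap trick} by a direct two-case analysis: if $\mathbf{E}_h(s,a^*)<\epsilon_h(s)$ the left side is zero and there is nothing to show; otherwise $2b_h(s,a^*)\geq \mathbf{E}_h(s,a^*)\geq\epsilon_h(s)$, so the factor $2b_h(s,a^*)/\epsilon_h(s)$ is at least $1$ and can be multiplied onto the variance half of $2b_h(s,a^*)$ while the additive half $2C_bH\iota/N_h'(s,a^*)$ is carried through unchanged. You instead peel the additive piece off first via the sub-additivity $\max\{0,u+v-w\}\leq\max\{0,u-w\}+v$, apply Lemma \ref{lemma: upper bound on clipped term} with $x=y=2b_h^{\mathrm{var}}$ to the remaining clipped variance term, and then relax one factor of $b_h^{\mathrm{var}}$ to $b_h$. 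The underlying multiplicative trick is the same (Lemma \ref{lemma: upper bound on clipped term} is itself proved by the inequality $x/\epsilon\geq 1$), but your route invokes that lemma as a black box where the paper reproves the inequality inline, and your separation of the additive term before clipping is slightly more surgical; the paper's version is shorter. Both arguments, like the paper's, silently require the concentration event of Lemma \ref{lemma:concentration event} to get $\mathbf{E}_h(s,a^*)\leq 2b_h(s,a^*)$, which the lemma statement omits as a hypothesis; you correctly made this dependence explicit.
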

\begin{proof}
    When $\mathbf{E}_h(s,a^*)< \epsilon_{h}(s)$, $\ddot{\mathbf{E}}_h(s,a^*)=0$. \\
    When $\mathbf{E}_h(s,a^*)\geq  \epsilon_{h}(s)$,
    \begin{align*}
        2b_h(s,a^*)&\geq \mathbf{E}_h(s,a^*)\geq \epsilon_{h}(s).\\
        \Rightarrow \ddot{\mathbf{E}}_h(s,a^*)&\leq \mathbf{E}_h(s,a^*)\\
        &\leq 2b_h(s,a^*)\\
        &=2C_b\sqrt{\frac{\Var_{\hat{P}}(\uv_{h+1}^{\up})\iota}{N_h'(s,a^*)}}+2C_b\frac{H\iota}{N_h'(s,a^*)}\\
        &\leq  4C_b\sqrt{\frac{\Var_{\hat{P}}(\uv^{\up}_{h+1})\iota}{N_h'(s,a^*)}}\left(\frac{b_h(s,a^*)}{\epsilon_{h}(s)}\right) + 2C_b\frac{H\iota}{N_h'(s,a^*)}.
    \end{align*}
\end{proof}
\subsubsection{Main Proof}
\begin{proof}
We treat the first term in the RHS of Lemma \ref{lemma: PC gap trick}. With basic inequality $a+b\geq 2\sqrt{ab}$, we can first lowe bound $\epsilon_h(s,a)$,
\begin{equation}
    \epsilon_h(s,a) =C_{pac}\left(\frac{\Var_{\hat{P}_{h,s,a}}(\uv_{h+1}^{\up})}{H^2}+\frac{1}{H}\right)\gap \gtrsim \max\left\{\frac{\Var_{\hat{P}_{h,s,a}}(\uv_{h+1}^{\up})}{H^2}, \sqrt{\frac{\Var_{\hat{P}_{h,s,a}}(\uv_{h+1}^{\up})}{H^3}}\right\}\gap.
\end{equation}
Therefore we have,
\begin{align*}
    &\sqrt{\frac{\Var_{\hat{P}}(\uv^{\up}_{h+1})\iota}{N_h'(s,a^*)}}\left(\frac{b_h(s,a^*)}{\epsilon_{h}(s,a^*)}\right)\\
    &\lesssim \sqrt{\frac{\Var_{\hat{P}}(\uv^{\up}_{h+1})\iota}{N_h'(s,a^*)}}\left(\frac{\sqrt{\frac{\Var_{\hat{P}}(\uv^{\up}_{h+1})\iota}{N_h'(s,a^*)}}}{\frac{\Var_{\hat{P}}(\uv_{h+1}^{\up})}{H^2}}+\frac{\frac{H\iota}{N_h(s,a)}}{\sqrt{\frac{\Var_{\hat{P}}(\uv^{\up}_{h+1})}{H^3}}}\right)\frac{1}{\gap}\\
    &= \frac{H^2\iota}{N_h'(s,a^*)\gap} + \frac{H^{5/2}\iota^{3/2}}{N_h^{'3/2}(s,a^*)\gap}\\
    &\lesssim \frac{2H^2\iota}{N_h'(s,a^*)\gap}.
\end{align*}
The first inequality is gained by expanding $b_h(s,a^*)$ and $\epsilon_{h}(s,a)$. The second inequality results from the 
inequality that $N_h'(s,a^*)\geq C_d NP\gtrsim H\iota$. Therefore, we can further write Lemma \ref{lemma: PC gap trick} as 
\begin{equation*}
    \ddot{\mathbf{E}}_h(s,a^*)\lesssim \frac{H^2\iota}{N_h'(s,a^*)\gap}.
\end{equation*}
Then with Lemma \ref{lemma:PC limited difference}, we have event $\xi_{gap}$ hold.
Then Theorem \ref{theorem:main theorem} further indicates that for some deterministic optimal policy $\pi^*$,
\begin{align*}
    \mathbf{V}_0^* - \mathbf{V}_0^{\up}&\lesssim \mathbf{V}_0^*-\ddot{\mathbf{V}}_0^*\\
    &= \sum_{h,s}d_h^*(s)\ddot{\mathbf{E}}_h(s,a^*)\\
    &\lesssim \sum_{h,s}\frac{d_h^*(s)H^2\iota}{N_h'(s,a^*)\gap}\\
    &\lesssim \sum_{h,s}\frac{H^2C^*\iota}{N\gap}\\
    &=\frac{H^3C^*S\iota}{N\gap}\lesssim \epsilon.
\end{align*}
\end{proof}
\section{Gap-dependent Lower Bounds}
We begin by restating the formal version of lower bounds.
\add{
\begin{definition}[offline learning algorithm]
    For an algorithm $\mathbf{ALG}$, we call it an offline learning algorithm if
    \begin{enumerate}
        \item $\mathbf{ALG}$ takes a dataset $\mathcal{D}$ and optionally a reward function $R$ as input,
        \item $\mathbf{ALG}$ output a valid policy $\pi$.
    \end{enumerate}
\end{definition}
Notice that $\mathbf{ALG}$ can be stochastic. 
}
\subsection{Main Results}
\label{appendix:lb results}
\begin{theorem}
    \label{theorem: main lb the}
    There exists constant $C_{lb}$, s.t. for any $A\geq 3,S\geq 2,H\geq 2,\tau<\frac{1}{2},\lambda<\frac{1}{3},\add{\lambda_1\geq 2}$ and algorithm $\mathbf{ALG}$, 
    if the number of sample trajectories 
    \begin{equation*}
        N\leq C_{lb}\cdot\frac{HS\modify{\lambda_1}}{\lambda\tau^2},
    \end{equation*}
    there exists some MDP $\mathcal{M}$ and behavior policy $\mu$ with $\gap=\tau$, $P\geq \frac{\lambda}{eS\modify{\lambda_1}}$, $C^*\leq \modify{\lambda_1}$ such that 
    the output policy $\hat{\pi}$ suffers from a expected suboptimality
    \begin{equation*}
        \mathbb{E}_{\mathcal{M},\mu,\mathbf{ALG}}[\mathbf{V}^*_0-\mathbf{V}^{\hat{\pi}}_0]\geq \frac{\lambda H\tau}{12}.
    \end{equation*}
\end{theorem}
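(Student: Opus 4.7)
The plan is to establish the bound through a classical Le Cam / change-of-measure argument on a carefully crafted family of MDPs, each embedding $HS$ independent best-arm-identification sub-problems. I would index the family $\{\mathcal{M}_\sigma\}$ by $\sigma=(\sigma_{h,s})\in\A^{H\times\S}$, where $\sigma_{h,s}$ is the designated optimal action at $(h,s)$. Concretely, take $S$ active states $s_1,\ldots,s_S$ together with two absorbing sinks $s_\star$ (deterministic per-step reward $1$) and $s_\emptyset$ (reward $0$); set the initial distribution to put mass $\lambda/S$ on each active state and $1-\lambda$ on $s_\emptyset$. From $s_i$ at level $h$, action $a$ transitions to $s_\star$ with probability $p+\tau\cdot\mathbb{I}\{a=\sigma_{h,i}\}$ (with $p=1/2$) and back to $s_i$ otherwise; the behavior $\mu$ plays uniformly over $\A$. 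Direct calculation gives $\gap=\tau$, $d_h^\mu(s,a)=\lambda/(SA)$ on active states (so $P\geq\lambda/(eSA)$ after accounting for standard concentration slack), and $C^*\leq A$, confirming that the constructed instance lives in the prescribed parameter class.

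The second step is the per-bandit hypothesis-testing lower bound. Fix a reference $\sigma_0$ and a perturbation $\sigma'$ differing only by flipping $\sigma_{h,s}$ to an alternative arm $a$. Writing $\mathbb{P}_\sigma$ for the joint law of the dataset under $\mu$ and $\mathcal{M}_\sigma$, the trajectory chain rule yields $\mathrm{KL}(\mathbb{P}_\sigma\,\|\,\mathbb{P}_{\sigma'})\lesssim \mathbb{E}_\sigma[N_h(s,a)+N_h(s,\sigma_{h,s})]\cdot\mathrm{KL}(\mathrm{Ber}(p)\,\|\,\mathrm{Ber}(p+\tau))\lesssim \mathbb{E}_\sigma[N_h(s,a)]\cdot\tau^2$. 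The extra $H$ factor enters through the fact that a mis-identification at $(h,s)$ costs $\Theta(\tau)$ reward \emph{compounded over the remaining $\Theta(H)$ steps} spent in $s_\star$; feeding this amplified loss into a Le Cam two-point test (or Bretagnolle--Huber) shows that reducing the suboptimality at bandit $(h,s)$ below $\tau/c$ requires $\Omega(HA/\tau^2)$ samples at that state distributed across the $A$ arms.

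The final step is aggregation. Under $\mu$, the expected number of visits to any $(h,s)$ is at most $N\lambda/S$, so whenever $N\leq C_{lb}\cdot HSA/(\lambda\tau^2)$ for a small enough absolute constant, a constant fraction of the $HS$ sub-bandits must be under-sampled and hence are mis-solved with probability at least $1/3$. Each such error at $(h,s)$ costs $\Theta(\tau)\cdot d_h^{\pi^*}(s)=\Theta(\tau\lambda/S)$ in expected suboptimality, and summing over $\Theta(HS)$ failed bandits yields the advertised $\Omega(\lambda H\tau)$ bound, with the explicit $1/12$ recovered by tracking constants through Pinsker. I expect the main obstacle to lie in producing the extra $H$ cleanly: a naive bandit reduction gives only $SA/(\lambda\tau^2)$, so one must either (i) exploit the $H$-step compounding in $s_\star$ in the loss term as above, or (ii) run the change-of-measure calculation at the level of full trajectories while carefully accounting for the horizon in the per-visit information content. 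A secondary subtlety is verifying the claimed values of $P$ and $C^*$ precisely (including the factor of $e$), which will likely require a short concentration argument to control $d_h^\mu(s,a)$ away from its mean.
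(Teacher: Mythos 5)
Your high-level plan---a family of MDPs indexed by the optimal-arm matrix, initial mass $\lambda/S$ on each of $S$ active states, a per-bandit change-of-measure via Pinsker, and aggregation over the $HS$ sub-problems---is the same skeleton the paper uses. But the concrete construction you wrote down does not work, and the failure is precisely at the point you yourself flagged as the ``main obstacle.'' With escape probability $p=1/2$ per step from an active state, the visitation probability decays geometrically: $d_h^\mu(s_i,a)\approx \frac{\lambda}{SA}2^{-(h-1)}$. This has two fatal consequences. First, $P=\min_{h,s}d_h^\mu(s,\pi^*(s))$ is exponentially small in $H$, not $\frac{\lambda}{eSA}$ as the theorem requires (and no concentration argument can rescue this---$P$ is defined through the exact distribution $d_h^\mu$, so the factor $e$ must come from the construction itself, not from ``slack''). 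Second, even if every bandit at level $h\geq 3$ is mis-solved, the total suboptimality is $\sum_{h,i}\tau\cdot\frac{\lambda}{S}2^{-(h-1)}=O(\lambda\tau)$, not $\Omega(\lambda H\tau)$: the per-mistake cost is weighted by the same decaying visitation probability, so you lose the factor of $H$ in the conclusion, not just in the sample complexity. Your option (i)---recovering $H$ from compounded reward in $s_\star$---does not help either, because amplifying the per-mistake loss to $\Theta(H\tau)$ amplifies $\gap$ by the same factor, violating $\gap=\tau$.

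The paper's construction resolves all of this with one choice: from each active state the escape probability per step is $\frac{1}{H}$ (split as $\frac{1}{2H}\pm\frac{\tau}{H}$ between a good sink worth exactly $H$ and a bad sink worth $0$, with rewards placed only in the second half of a $2H+1$ horizon so that the value of reaching the good sink is level-independent and $\gap$ is exactly $\tau$). Then $d_h^\mu(s_i)=\frac{\lambda}{S}(1-\frac1H)^{h-1}\geq\frac{\lambda}{eS}$ for every $h\in[H]$, which simultaneously gives $P\geq\frac{\lambda}{eSA}$, makes each of the $HS$ bandits contribute $\Omega(\frac{\lambda\tau}{S})$ to the suboptimality (summing to $\Omega(\lambda H\tau)$), and---this is your option (ii) made concrete---dilutes the per-visit KL to $\mathrm{KL}\bigl(P^{\mathcal{M}_0}_{h,s_i,a}\,\big\Vert\,P^{\mathcal{M}_\phi}_{h,s_i,a}\bigr)=\frac{1}{2H}\log\frac{1}{1-4\tau^2}\leq\frac{4\tau^2}{H}$, since the informative transition only fires with probability $\frac1H$ per visit. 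The total visit count to active states is then $\Theta(N\lambda H)$ rather than $\Theta(N\lambda)$, and combining the two $H$'s yields the threshold $N\gtrsim\frac{HSA}{\lambda\tau^2}$. Without this single modification your argument proves neither the stated value of $P$, nor the $H$ in the sample-complexity threshold, nor the $\lambda H\tau$ suboptimality, so as written the proposal has a genuine gap rather than a fixable bookkeeping issue.
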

\begin{corollary}[lower bound for uniform optimal policy coverage]
    \label{theorem:lb uopc}
    Given $\add{A\geq 3, S\geq 2}, H\geq 3, P\add{\in(0,\frac{1}{6S}]},\epsilon<1/12, \gap\in [\frac{24\epsilon}{H},\frac{1}{2}]$ and any offline learning algorithm $\mathbf{ALG}$ returning a policy $\hat{\pi}$, there exists a constant $C_1$ such that
     if the number of offline sample trajectories
    \begin{equation*}
        N\leq C_1\cdot \frac{H}{P\gap^2},
    \end{equation*}
    then there exists a MDP instance $M$ and behavior policy $\mu$ such that the output policy $\hat{\pi}$ suffers from expected $\epsilon$-suboptimality
    \begin{equation*}
        \mathbb{E}_{\mathcal{M},\mu,\mathbf{ALG}}[\mathbf{V}_{0}^*-\mathbf{V}_{0}^{\hat{\pi}}]\geq \epsilon.
    \end{equation*}
\end{corollary}
\begin{proof}
    Let $\lambda=\modify{1/3}$, \modify{$\lambda_1=\frac{1}{3PS}$ and
    $\tau=\mathrm{gap}_{\min}$} in Theorem \ref{theorem: main lb the}, we get the proposition.
\end{proof}
\begin{corollary}[lower bound for relative optimal policy coverage]
    \label{theorem:lb ropc}
    Given $\add{A\geq 3}, S\geq 2,H\geq 2,C^*\geq 2,\epsilon<1/12, \gap\in [\frac{24\epsilon}{H},\frac{1}{2}]$ and any offline learning algorithm $\mathbf{ALG}$ returning a policy $\hat{\pi}$, there exists a constant $C_2$ such that
     if the number of offline sample trajectories
    \begin{equation*}
        N\leq C_2\cdot \frac{H^2SC^*}{\gap\epsilon},
    \end{equation*}
    then there exists a MDP instance $\mathcal{M}$ and behavior policy $\mu$ such that the output policy $\hat{\pi}$ suffers from expected $\epsilon$-suboptimality
    \begin{equation*}
        \mathbb{E}_{\mathcal{M},\mu,\mathbf{ALG}}[\mathbf{V}_{0}^*-\mathbf{V}_{0}^{\hat{\pi}}]\geq \epsilon.
    \end{equation*}
\end{corollary}
\begin{proof}
    Let $\lambda=\frac{12\epsilon}{H\gap}$, \modify{$\lambda_1=C^*$ and $\tau=\mathrm{gap}_{\min}$} in Theorem \ref{theorem: main lb the}, we get the conclusion.
\end{proof}
\subsection{Proof of Theorem \ref{theorem: main lb the}}
\subsubsection{Construction of the MDP Family and Behavior Policy}
We construct a MDP family and calculate the average minimum suboptimality.\\
First, we construct the prototype MDP $\mathcal{M}_{0}$ with $S+2$ states, horizon of $2H+1$ and $A$ actions. 
There are 3 kind of states 
\begin{enumerate}
    \item good state $s_g$. An absorbing state. Reaching this state means a total reward of $H$.
    \item bad state $s_b$. An absorbing state. Reaching this state means a total reward of $0$.
    \item true states $s_1,s_2,\cdots, s_S$. Actions chosen in these states determine the probability being transfered to $s_g$ and $s_b$.
\end{enumerate}
The initial state distribution $p_0(s)$ is 
\begin{equation*}
    p_0(s)=\begin{cases}
        \frac{\lambda}{S}&s\in\{s_1,s_2,\cdots,s_S\},\\
        \frac{1-\lambda}{2} &s=s_b,\\
        \frac{1-\lambda}{2} &s=s_g.
    \end{cases}
\end{equation*}
For any $\lambda\in[0,\frac{1}{3}]$.
The avaliable action set is $\{a_i\}_{i=1}^A$.
The only non-zero rewards in this MDP are $r_h(s_g,a)=1$ for $h\geq H+2$ and any $a$. 
The transition probability of $\mathcal{M}_0$ in the first $H+1$ steps is,
\begin{align*}
    &p_h(s_i,a_j,s_i) = 1-\frac{1}{H} &\forall (h,i,j)\in[H]\times[S]\times[A],\\
    &p_h(s_i,a_j,s_g) = p_h(s_i,a_j,s_b) = \frac{1}{2H} &\forall (h,i,j)\in[H]\times[S]\times[A],\\
    &p_{H+1}(s_i,a_j,s_g)=p_{H+1}(s_i,a_j,s_b)=\frac{1}{2} &\forall (i,j)\in[S]\times[A].
\end{align*}
For all the other $(h,s,a)$ tuples not mentioned, $p_h(s,a,s)=1$, $p_h(s,a,s')=0$, where $s'$ is any state other than $s$.\\
Then we construct the MDP family $M$ on the basis of $\mathcal{M}_0$. \modify{For each matrix $\phi\in [1,2]^{H\times S}$}, we define $\mathcal{M}_\phi$ 
to be the MDP almost the same as $\mathcal{M}_0$ except for that 
\begin{align*}
    &p_h(s_i,a_{\phi_{h,i}},s_g) = \frac{1}{2H}(1+2\tau),\\
    &p_h(s_i,a_{\phi_{h,i}},s_b) = \frac{1}{2H}(1-2\tau).
\end{align*}
In other words, we make the action $a_{\Phi_{h,i}}$ the unique optimal action by lifting it's expected reward by $\tau$. \modify{The behavior
policy $\mu$ chooses $a_1, a_2$ with probability $1/\lambda_1$ respectively and choose $a_3$ with probability $1-2/\lambda_1$ at $\{s_i\mid i=1,2,\cdots S\}$, and always 
choose $a_1$ at $s_g$ and $s_b$. }
We will prove the following lemmas in Section \ref{sssection: verification of the assumptions} 
\begin{lemma}
\label{lemma: lb con ass}
    For any MDP constructed above and $\mu$, we have both assumptions hold with 
    \begin{equation*}
        C^*=\modify{\lambda_1}, 
        P\geq \frac{\lambda}{eS\modify{\lambda_1}},
        \gap = \tau.
    \end{equation*}   
\end{lemma}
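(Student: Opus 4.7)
The plan is to verify each of the three claims ($\gap = \tau$, $C^* = A$, and $P \geq \lambda/(eSA)$) by direct computation from the construction, using backward induction for the value functions and a transition-probability calculation for the visitation measures. The key structural observation that drives all three is that, at $s_i$ with $h \leq H$, the self-transition probability $1 - 1/H$ and the total exit probability $1/H$ are identical across every action; only the balance between $s_g$ and $s_b$ is perturbed, and only by $\pm\tau/H$.

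For $\gap = \tau$: Since $s_g$ and $s_b$ are absorbing, $V_{h}^*(s_g)$ equals the total remaining reward ($H$ when $h \leq H+1$, and decreasing by $1$ per step afterwards) and $V_h^*(s_b) = 0$. The self-transition observation above means that the action only affects the next-step distribution between $s_g$ and $s_b$; the optimal action $a_{\phi_{h,i}}$ shifts mass $\tau/H$ from $s_b$ to $s_g$ relative to any suboptimal action. Multiplying by the value difference $H$ between $s_g$ and $s_b$ yields an exact $Q$-gap of $\tau$ at every $(h, s_i, a)$ with $h \leq H$ and $a \neq a_{\phi_{h,i}}$. At $h = H+1$ all actions at $s_i$ tie (both $s_g$ and $s_b$ are reached with probability $1/2$ regardless), and at $s_g, s_b$ there is no gap by absorption.

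For $C^* = A$: the same observation gives $d_h^{\pi^*}(s_i) = d_h^\mu(s_i) = \tfrac{\lambda}{S}(1-1/H)^{h-1}$, so $d_h^{\pi^*}(s_i, a_{\phi_{h,i}})/d_h^\mu(s_i, a_{\phi_{h,i}}) = A$ because $\mu$ spreads uniformly over $A$ actions while $\pi^*$ concentrates. At the absorbing states $s_g, s_b$, I would break the tie by setting $\pi^*(s_g) = \pi^*(s_b) = a_1$ (matching $\mu$), so the ratios reduce to $d_h^{\pi^*}(s_g)/d_h^\mu(s_g)$; a direct computation of the absorption probabilities out of $s_i$ shows this is at most $1 + O(\tau) \leq A$. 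For $P \geq \lambda/(eSA)$: the minimum of $d_h^\mu(s, \pi^*(s))$ over covered pairs is attained at $(s_i, a_{\phi_{h,i}})$ for the largest admissible $h$, giving $\tfrac{\lambda}{SA}(1-1/H)^{h-1}$; invoking the standard inequality $(1-1/H)^{H-1} \geq 1/e$ (equivalently, $(H-1)\ln(1-1/H) \geq -1$, which follows from the series expansion of $\ln(1-x)$) closes the bound for $h \leq H$.

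The main obstacle is more organizational than technical: tied optimal actions at $s_g$, $s_b$, and at $(s_i, H+1)$ must be resolved consistently so that the chosen $\pi^*$ simultaneously keeps $C^* \leq A$ (by matching $\mu$ at absorbing states) and keeps $P$ bounded below. Handling $(s_i, H+1)$ in particular requires a small constant-factor slack, since $(1-1/H)^H$ is bounded below only by a constant smaller than $1/e$; this can be absorbed into the constant $C_{lb}$ in the final sample-complexity statement.
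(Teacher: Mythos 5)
Your proposal is correct and follows essentially the same route as the paper: the gap is the $\tau/H$ shift of mass from $s_b$ to $s_g$ times the value difference $H$, the ratio $C^*=A$ comes from the policy-independence of $d_h(s_i)$ together with $\mu$ spreading mass $1/A$ per action (with the absorbing states handled by tie-breaking $\pi^*$ to match $\mu$), and the $P$ bound comes from $d_h^\mu(s_i,a)=\tfrac{\lambda}{SA}(1-1/H)^{h-1}$. Your remark that the step $h=H+1$ only gives $(1-1/H)^{H}<1/e$ and needs to be absorbed into the constant is a boundary case the paper's own proof silently glosses over, so you are if anything slightly more careful.
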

\begin{lemma}
    \label{lemma: lb lose sum}
    For a given algorithm $\mathbf{ALG}$, define the expectation of mistakes made by $\hat{\pi}$ at step $h$, state $s_i$ over the uniform distribution $\nu$ of $\phi$ to be 
    \begin{equation*}
        l_{h,i}(\mathbf{ALG})=\mathbb{E}_{\mathcal{D}\sim(\mathcal{M}_{\phi},\mu),\mathbf{ALG}}[\mathbb{I}\{\hat{\pi}_h(s_i)\neq a_{\phi_{h,i}}\}].
    \end{equation*}
    Then expected suboptimality with respect to the randomness of $\mathcal{M}_{\phi}$ and $\mu$ can be lower bounded by 
    \begin{equation*}
        \mathbb{E}_{\mathcal{M}_{\phi},\mu,\mathbf{ALG}}[\mathbf{V}_0^{*}-\mathbf{V}_0^{\hat{\pi}}]\geq \frac{\lambda}{eS}\tau\sum_{(h,i)\in[H]\times[S]}l_{h,i}(\mathbf{ALG}).
    \end{equation*}
\end{lemma}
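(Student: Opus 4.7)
}
The plan is to compute $\mathbf{V}_0^*-\mathbf{V}_0^{\hat{\pi}}$ in closed form for the constructed MDPs and then take expectation over $\phi\sim\nu$. The two structural features I will exploit are: (i) $s_g$ and $s_b$ are absorbing and give the same total reward under every policy, so they contribute nothing to the suboptimality; and (ii) at every true state $s_i$ and every step $h\in[H]$, the total probability of leaving $s_i$ equals $\frac{1}{H}$ regardless of the action taken, so the marginal visitation $d_h^\pi(s_i)$ does not depend on the policy $\pi$.

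First I would reduce the suboptimality to a difference of hitting probabilities. Because $\mathbf{V}_h^\pi(s_g)=H$, $\mathbf{V}_h^\pi(s_b)=0$ uniformly in $\pi$, and only the initial mass $\frac{\lambda}{S}$ on each $s_i$ matters, I get
\begin{equation*}
\mathbf{V}_0^*-\mathbf{V}_0^{\hat{\pi}}=\frac{\lambda}{S}\sum_{i=1}^S H\bigl(\Pr{}^{\pi^*}[s_g\mid s_1=s_i]-\Pr{}^{\hat{\pi}}[s_g\mid s_1=s_i]\bigr).
\end{equation*}
Next I would unroll each hitting probability over $h=1,\dots,H+1$. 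Writing $p_h:=(1-\tfrac{1}{H})^{h-1}$ for the policy-independent probability of still being at $s_i$ at step $h$, the step-$(H{+}1)$ transition and the $s_b$ branch cancel in the difference, and only the excess $\tau/H$ in the $s_g$-transition probability survives whenever $\hat{\pi}$ picks the wrong action:
\begin{equation*}
\Pr{}^{\pi^*}[s_g\mid s_i]-\Pr{}^{\hat{\pi}}[s_g\mid s_i]=\sum_{h=1}^H p_h\cdot\frac{\tau}{H}\cdot\mathbb{I}\{\hat{\pi}_h(s_i)\neq a_{\phi_{h,i}}\}.
\end{equation*}
Plugging this back cancels one $H$ and yields the pointwise identity
\begin{equation*}
\mathbf{V}_0^*-\mathbf{V}_0^{\hat{\pi}}=\frac{\lambda\tau}{S}\sum_{h=1}^H\sum_{i=1}^S\Bigl(1-\tfrac{1}{H}\Bigr)^{h-1}\mathbb{I}\{\hat{\pi}_h(s_i)\neq a_{\phi_{h,i}}\}.
\end{equation*}

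Finally I would lower bound the factor $(1-1/H)^{h-1}$ uniformly over $h\in[H]$ by $1/e$, using the elementary inequality $(1-1/H)^{H-1}\geq 1/e$ for all $H\geq 1$ (which follows from $\ln(1-1/H)\geq -1/(H-1)$ for $H\geq 2$, with $H=1$ trivial). Taking expectations of the displayed identity over the joint randomness of $\phi\sim\nu$, $\mathcal{D}\sim(\mathcal{M}_\phi,\mu)$ and the algorithm's internal randomness, and recognizing each summand as $l_{h,i}(\mathbf{ALG})$ by definition, gives the claimed bound. The only modest obstacle is book-keeping with a possibly stochastic output policy $\hat{\pi}$, but since $\hat{\pi}_h(s_i)$ at step $h$ depends only on history through $\mathcal{D}$ and an independent randomization, the indicator can be replaced by its conditional probability and the same linearity argument goes through; no step requires more than the two structural observations (absorbing-state cancellation and policy-independent visitation) highlighted above.
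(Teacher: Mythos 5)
Your proposal is correct and follows essentially the same route as the paper: both reduce the suboptimality to a sum over $(h,i)$ of the policy-independent visitation weight $\frac{\lambda}{S}(1-\frac{1}{H})^{h-1}\geq\frac{\lambda}{eS}$ times the per-mistake cost $\tau$ times the mistake indicator, then take expectations. The only difference is presentational — the paper invokes the performance difference lemma with $\mathrm{gap}_h(s_i,a)=\tau$, whereas you derive the same identity by explicitly unrolling the hitting probability of $s_g$ — and both are valid.
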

\add{
\begin{lemma}
\label{lemma: maximum suboptimality}
For any MDP constructed above, we have,
\begin{equation*}
    \underset{\pi}{\max}\ \mathbf{V}_0^*-\mathbf{V}_0^\pi\leq \lambda H\tau.
\end{equation*}
\end{lemma}
}
\subsubsection{Main Proof}
\begin{proof}
    To avoid making the proof prolix, we strengthen $\mathbf{ALG}$ by letting $\mathbf{ALG}$ know that the only thing influencing the value function of a state is the probabilities of transferring 
    to $s_g$ and $s_b$, and the total reward after getting to $s_g$ is exactly $H$, which assumption is conventionally made for the lower bound proofs in MDPs. In this setting, any reasonable algorithm $\mathbf{ALG}$ would only consider the visitation counts 
    $N_{h,i}=\{N_h(s_i,a,s')\mid a\in\mathcal{A}, s'\in\mathcal{S}\}$ at step $h$ when determining the value of $\hat{\pi}_h(s_i)$. 
    \\
    Then we can rewrite $\bar{l}_{h,i}(\mathbf{ALG})\triangleq\mathbb{E}_{\phi\sim\nu}[l_{h,i}(\mathbf{ALG})]$,
    \begin{align*}
        \bar{l}_{h,i}(\mathbf{ALG})&=\mathbb{E}_{\phi\sim\nu}\mathbb{E}_{\mathcal{D}\sim(\mathcal{M}_{\phi},\mu),\mathbf{ALG}}[\mathbb{I}\{\hat{\pi}_h(s_i)\neq a_{\phi_{h,i}}\}]\\
        &=\mathbb{E}_{\phi\sim\nu}\mathbb{E}_{N_{h,i}\sim(\mathcal{M}_{\phi},\mu),\mathbf{ALG}}[\mathbb{I}\{\hat{\pi}_h(s_i)\neq a_{\phi_{h,i}}\}].
    \end{align*}
    Because the KL divergence between the transition kernel $P^{\mathcal{M}_0}_{h,s_i,a}$ and $P^{\mathcal{M}_\phi}_{h,s_i,a_{\phi_{h,i}}}$ satisfies
    \begin{align}
        &KL\left(\left(1-\frac{1}{H},\frac{1}{2H},\frac{1}{2H}\right)\Big\Arrowvert\left(1-\frac{1}{H},\frac{1}{H}(\frac{1}{2}+2\tau),\frac{1}{H}(\frac{1}{2}-2\tau)\right)\right)\notag\\
        &=\frac{1}{2H}\log \frac{1}{1-4\tau^2}\leq \frac{4\tau^2}{H}. \label{med con: lb kl}
    \end{align}
    We have 
    \begin{align*}
        \bar{l}_{h,i}(\mathbf{ALG})&=\mathbb{E}_{\phi\sim\nu}\mathbb{E}_{N_{h,i}\sim(\mathcal{M}_{\phi},\mu),\mathbf{ALG}}[\mathbb{I}\{\hat{\pi}_h(s_i)\neq a_{\phi_{h,i}}\}]\\
        &\geq \mathbb{E}_{\phi\sim\nu}\mathbb{E}_{N_{h,i}\sim(\mathcal{M}_{0},\mu),\mathbf{ALG}}[\mathbb{I}\{\hat{\pi}_h(s_i)\neq a_{\phi_{h,i}}\}]\\
        &\qquad -\mathbb{E}_{\phi\sim\nu}[\text{TV}(N_{h,i}\mid_{\mathcal{M}_0,\mu},N_{h,i}\mid_{\mathcal{M}_\phi,\mu})]\\
        &\geq \mathbb{E}_{N_{h,i}\sim(\mathcal{M}_{0},\mu),\mathbf{ALG}}\mathbb{E}_{\phi\sim\nu}[\mathbb{I}\{\hat{\pi}_h(s_i)\neq a_{\phi_{h,i}}\}]\\
        &\qquad - \mathbb{E}_{\phi\sim\nu}\sqrt{\frac{1}{2}\text{KL}(N_{h,i}\mid_{\mathcal{M}_0,\mu}\Arrowvert N_{h,i}\mid_{\mathcal{M}_\phi,\mu})}\tag{Pinsker's inequality}\\
        &\modify{\geq \mathbb{E}_{N_{h,i}\sim(\mathcal{M}_{0},\mu)}[\frac{1}{2}]} \tag{\modify{$a_1$ and $a_2$ can not be distinguished in $\mathcal{M}_0$}} \\
        &\qquad - \mathbb{E}_{\phi\sim\nu}\sqrt{\frac{1}{2}\modify{\sum_{a\in \mathcal{A}}}\mathbb{E}_{\mathcal{M}_0,\mu}[N_h(s_i,a)]\text{KL}(P^{\mathcal{M}_0}_{h,s_i,a}\Arrowvert P^{\mathcal{M}_\phi}_{h,s_i,a})} \tag{KL decomposition}\\
        &=\modify{\frac{1}{2}} - \mathbb{E}_{\phi\sim\nu}\sqrt{\frac{1}{2}\mathbb{E}_{\mathcal{M}_0,\mu}[N_h(s_i,a)]\text{KL}(P^{\mathcal{M}_0}_{h,s_i,a_{\phi_{h,i}}}\Arrowvert P^{\mathcal{M}_\phi}_{h,s_i,a_{\phi_{h,i}}})} \\
        &\geq \frac{1}{2} - \mathbb{E}_{\phi\sim\nu}\sqrt{\mathbb{E}_{\mathcal{M}_0,\mu}[N_h(s_i,a_{\phi_{h,i}})]\frac{2\tau^2}{H}}  \tag{statement \eqref{med con: lb kl}}\\
        &\geq \frac{1}{2} - \sqrt{\frac{1}{\modify{2}}\modify{\sum_{a=a_1,a_2}}\mathbb{E}_{\mathcal{M}_0,\mu}[N_h(s_i,a)]\frac{2\tau^2}{H}}  \tag{Jensen's inequality}\\
        &=\frac{1}{2} - \sqrt{\mathbb{E}_{\mathcal{M}_0,\mu}[N_h(s_i)]\frac{2\tau^2}{H\modify{\lambda_1}}}. \tag{\modify{$\frac{\sum_{a=a_1,a_2}\mathbb{E}_{\mathcal{M}_0,\mu}[N_h(s_i,a)]}{\mathbb{E}_{\mathcal{M}_0,\mu}[N_h(s_i)]}=\mu_{h}(a_1\mid s_i)+\mu_{h}(a_2\mid s_i)=\frac{2}{\lambda_1}$}}
    \end{align*}
This further indicates that the expectation of overall mistakes can be lower bounded by 
\begin{align}
    \sum_{(h,i)\in[H]\times[S]}\bar{l}_{h,i}(\mathbf{ALG}) &\geq \sum_{h,i}\left(\frac{1}{2} - \sqrt{\mathbb{E}_{\mathcal{M}_0,\mu}[N_h(s_i)]\frac{2\tau^2}{H\modify{\lambda_1}}}\right)\notag\\
    &\geq \frac{HS}{2}- \sqrt{HS}\sqrt{\sum_{h,i}\mathbb{E}_{\mathcal{M}_0,\mu}[N_h(s_i)]\frac{2\tau^2}{H\modify{\lambda_1}}} \tag{Cauchy Schwarz's Inequality}\notag\\
    &=HS(\frac{1}{2}-\sqrt{\mathbb{E}_{\mathcal{M}_0,\mu}[\sum_{h,i}N_h(s_i)]\frac{2\tau^2}{H^2S\modify{\lambda_1}}}).\label{state: lb aft csi}
\end{align}
Because state $s_i$ can be reached at step $h$ only when the initial state is $s_i$,
\begin{align*}
    \mathbb{E}_{\mathcal{M}_0,\mu}[\sum_{(h,i)\in[H]\times[S]}N_h(s_i)]=&N\sum_{h,i}\frac{\lambda}{S}(1-\frac{1}{H})^{h-1}\\
    &\leq N\sum_{h,i}\frac{\lambda}{S} = NH\lambda.
\end{align*}
Therefore, continue from inequality \eqref{state: lb aft csi},
\begin{equation*}
    \sum_{(h,i)\in[H]\times[S]}\bar{l}_{h,i}(\mathbf{ALG})\geq HS\left(\frac{1}{2}-\sqrt{N\frac{2\lambda\tau^2}{HS\modify{\lambda_1}}}\right),
\end{equation*}
Now we can lower bound the suboptimality of $\hat{\pi}$ with Lemma \ref{lemma: lb lose sum} by
\begin{equation*}
    \mathbb{E}_{\phi\sim\nu}\mathbb{E}_{\mathcal{M}_{\phi},\mu,\mathbf{ALG}}[\mathbf{V}_0^*-\mathbf{V}^{\hat{\pi}}_0]\geq \mathbb{E}_{\phi\sim\nu}[\frac{\lambda\tau}{eS}\sum_{h,i}l_{h,i}(\mathbf{ALG})]\geq \frac{\lambda H\tau}{e}  \left(\frac{1}{2}-\sqrt{N\frac{2\lambda\tau^2}{HS\modify{\lambda_1}}}\right).
\end{equation*}
Then we reach the conclusion that when 
\begin{equation*}
    N\leq \frac{HS\modify{\lambda_1}}{32\lambda\tau^2},
\end{equation*}
the average suboptimality of $\hat{\pi}$ must be large 
\begin{align*}
&\mathbb{E}_{\phi\sim\nu}\mathbb{E}_{\mathcal{M}_{\phi},\mu,\mathbf{ALG}}[\mathbf{V}_0^*-\mathbf{V}^{\hat{\pi}}_0]\geq \frac{\lambda H\tau}{e}\cdot (\frac{1}{2}-\frac{1}{4}),\\
    \Rightarrow&\exists \phi, s.t. \mathbb{E}_{\mathcal{M}_{\phi},\mu,\mathbf{ALG}}[\mathbf{V}_0^*-\mathbf{V}^{\hat{\pi}}_0]\geq \frac{\lambda H \tau}{4e}.
\end{align*}
\end{proof}
\add{
\subsubsection{Constant probability version of main theorem}
\label{sssection: prob lcb theo}
Theorem \ref{theorem: main lb the} is stated in the form of expectation, which is not directly consist with upper bound. Here we restate it in the language of probability,
\begin{theorem}
    \label{theorem: main lb the prob}
    There exists constant $C_{lb}$, s.t. for any $A\geq 3,S\geq 2,H\geq 2,\tau<\frac{1}{2},\lambda<\frac{1}{3},\lambda_1\geq 2$ and algorithm $\mathbf{ALG}$, 
    if the number of sample trajectories 
    \begin{equation*}
        N\leq C_{lb}\cdot\frac{HS\lambda_1}{\lambda\tau^2},
    \end{equation*}
    there exists some MDP $\mathcal{M}$ and behavior policy $\mu$ with $\gap=\tau$, $P\geq \frac{\lambda}{eS\lambda_1}$, $C^*\leq \lambda_1$ such that 
    the output policy $\hat{\pi}$ suffers from a expected suboptimality
    \begin{equation*}
        \mathbf{V}^*_0-\mathbf{V}^{\hat{\pi}}_0\geq \frac{\lambda H\tau}{24},
    \end{equation*}
    with a probability over $\frac{1}{24}$.
\end{theorem}
\begin{proof}
    From the last line of the proof of Theorem \ref{theorem: main lb the}, we know that there exists a MDP $\mathcal{M}$, such that 
    \begin{equation*}
        \mathbb{E}_{\mathcal{M}_{\phi},\mu,\mathbf{ALG}}[\mathbf{V}_0^*-\mathbf{V}^{\hat{\pi}}_0]\geq \frac{\lambda H \tau}{12},
    \end{equation*}
    and it follows from Lemma \ref{lemma: maximum suboptimality} that the random variable $\mathbf{V}_0^*-\mathbf{V}^{\hat{\pi}}_0\leq \lambda H \tau$. Therefore 
    \begin{align*}
        \frac{\lambda H \tau}{12} &\leq \mathbb{E}_{\mathcal{M}_{\phi},\mu,\mathbf{ALG}}[\mathbf{V}_0^*-\mathbf{V}^{\hat{\pi}}_0]\\
        &\leq \lambda H \tau\mathbb{P}[\mathbf{V}_0^*-\mathbf{V}^{\hat{\pi}}_0 > \frac{\lambda H \tau}{24}] + \frac{\lambda H \tau}{24}\mathbb{P}[\mathbf{V}_0^*-\mathbf{V}^{\hat{\pi}}_0 \leq \frac{\lambda H \tau}{24}]\\
        &\leq \frac{\lambda H\tau}{12}(12\mathbb{P}[\mathbf{V}_0^*-\mathbf{V}^{\hat{\pi}}_0 > \frac{\lambda H \tau}{24}]+\frac{1}{2})\\
        &\Rightarrow \mathbb{P}[\mathbf{V}_0^*-\mathbf{V}^{\hat{\pi}}_0 > \frac{\lambda H \tau}{24}]\geq \frac{1}{24}
    \end{align*}
\end{proof}
}

\subsubsection{Proof of Lemma \ref{lemma: lb con ass}}
\label{sssection: verification of the assumptions}
\begin{proof}
From the construction we see that the policy doesn't influence the probability of reaching $s_i$ at any time step. And the uniform random 
behavior policy makes sure that there is a chance of $1/A$ to visit action $a_i$ at any state. For $s_g$ and $s_b$, because each of them 
has a initial probability of $1/3$, the probability of reaching one of them at ant time step would be in $[\frac{1}{3},\frac{2}{3}]$. By 
letting a optimal policy always choose $a_1$ in $s_g$ and $s_b$ as $\mu$ does, we make 
$\frac{d_h^*(s_g)}{d_h^\mu(s_g)}\leq \frac{2/3}{1/3}=2\leq C^*$. Therefore $C^*=\modify{\lambda_1}$\\
As for P, we see that the probabality to reach $s_g$ and $s_d$ with behavior policy at step $h$ is 
\begin{align*}
    &d_h^\mu(s_g,a)\geq \modify{\frac{1-\lambda}{2}\geq \frac{\lambda}{2}\geq \frac{\lambda}{eS\lambda_1}},\\
    &d_h^\mu(s_b,a)\geq \modify{\frac{1-\lambda}{2}\geq \frac{\lambda}{2}\geq \frac{\lambda}{eS\lambda_1}},\\
    &d_h^\mu(s_i,a)=\modify{\frac{\lambda}{S}(1-\frac{1}{H})^{h-1}\frac{1}{\lambda_1}\geq \frac{\lambda}{eS\lambda_1}}.
\end{align*}
The part for $\gap$ is direct calculation,
\begin{equation*}
    \gap = \frac{1}{2H}(1+2\tau)H+ 0 - \frac{1}{2H}H- 0= \tau.
\end{equation*}
\end{proof}
\subsubsection{Proof of Lemma \ref{lemma: lb lose sum}}
\begin{proof}
Because $\hat{\pi}$ only makes mistakes in $s_i$, and each mistake results in a expected $\tau$ decrease in final cumulative reward, 
we can directly calculate the expected loss with the performance difference lemma for finite-horizon MDP ,
\begin{align*}
    \mathbb{E}_{\mathcal{M}_{\phi},\mu,\mathbf{ALG}}[\mathbf{V}_0^*-\mathbf{V}_0^{\hat{\pi}}]&=\mathbb{E}_{\hat{\pi},\mathcal{M}_{\phi}}[\mathrm{gap}_h(s_h,a_h)]\\
    &=\sum_{i=1}^S\sum_{h=1}^H d_h^\mu(s_i)l_{h,i}(\mathbf{ALG})\tau\\
    &=\sum_{i=1}^S\sum_{h=1}^HP_0(s_i)(1-\frac{1}{H})^{h-1}\tau l_{h,i}(\mathbf{ALG})\\
    &\geq \sum_{i=1}^S\sum_{h=1}^H\frac{\lambda}{eS}\tau l_{h,i}(\mathbf{ALG})\\
    &\geq \frac{\lambda}{eS}\tau\sum_{(h,i)\in[H]\times[S]}l_{h,i}(\mathbf{ALG}).
\end{align*}
\end{proof}

\add{
\subsubsection{Proof of Lemma \ref{lemma: maximum suboptimality}}
\begin{proof}
    The proof is a direct result of performance decomposition lemma.
    \begin{align*}
        \underset{\pi}{\max}\ \mathbf{V}_0^* - \mathbf{V}_0^\pi& = \underset{\pi}{\max}\ \sum_{h=1}^{2H}\mathbb{E}_{\pi^*}[\mathbf{V}_h^*(s) - \mathbf{V}_h^\pi(s)]\\
        &=\underset{\pi}{\max}\ \sum_{h=1}^{H}\mathbb{E}_{\pi^*}[\mathbf{V}_h^*(s) - \mathbf{V}_h^\pi(s)]\\
        &\leq \sum_{h=1}^{H}\mathbb{E}_{\pi^*}[\underset{\pi}{\max}\ \mathbf{V}_h^*(s) - \mathbf{V}_h^\pi(s)]\\
        &=\sum_{h=1}^{H}\sum_{i=1}^Sd_h^*(s)\underset{\pi}{\max}\ \mathbf{V}_h^*(s_i) - \mathbf{V}_h^\pi(s_i)\\
        &=\sum_{h=1}^{H}\sum_{i=1}^Sd_h^*(s)\tau\\
        &=\lambda H \tau
    \end{align*}
\end{proof}
}
\section{Proof of Necessity of Overall Data Coverage}
One may wonder if the Assumption \ref{assumption:uniform optimal policy coverage} has been too strong, because the minimax bound $O(\frac{H^3\iota}{P\epsilon^2})$ only requires the data coverage over a single optimal policy. Here we give a proof that to derive $\epsilon$-irrevelant bounds for Algorithm \ref{alg:LCB with subsampling}, single optimal policy coverage is not sufficient.\\
We provide a hard instance to prove that if we only have data coverage over one of the optimal policies, Algorithm \ref{alg:LCB with subsampling} may output suboptimal policy with probability over $1/2$. We use $P'$ to refer to the single policy coverage coefficient, i.e.,
\begin{equation*}
    P' = \underset{\pi^*}{\max}\underset{d_h^{\pi^*}(s,a)>0}{\min}d_h^{\mu}(s,a).
\end{equation*}

Here we consider the MDP with horizon length 2, 2 actions and k+2 states, which is illustrated in Figure \ref{fig:hardinstance}.
\begin{figure}
    \centering
    \includegraphics[width=.6\textwidth]{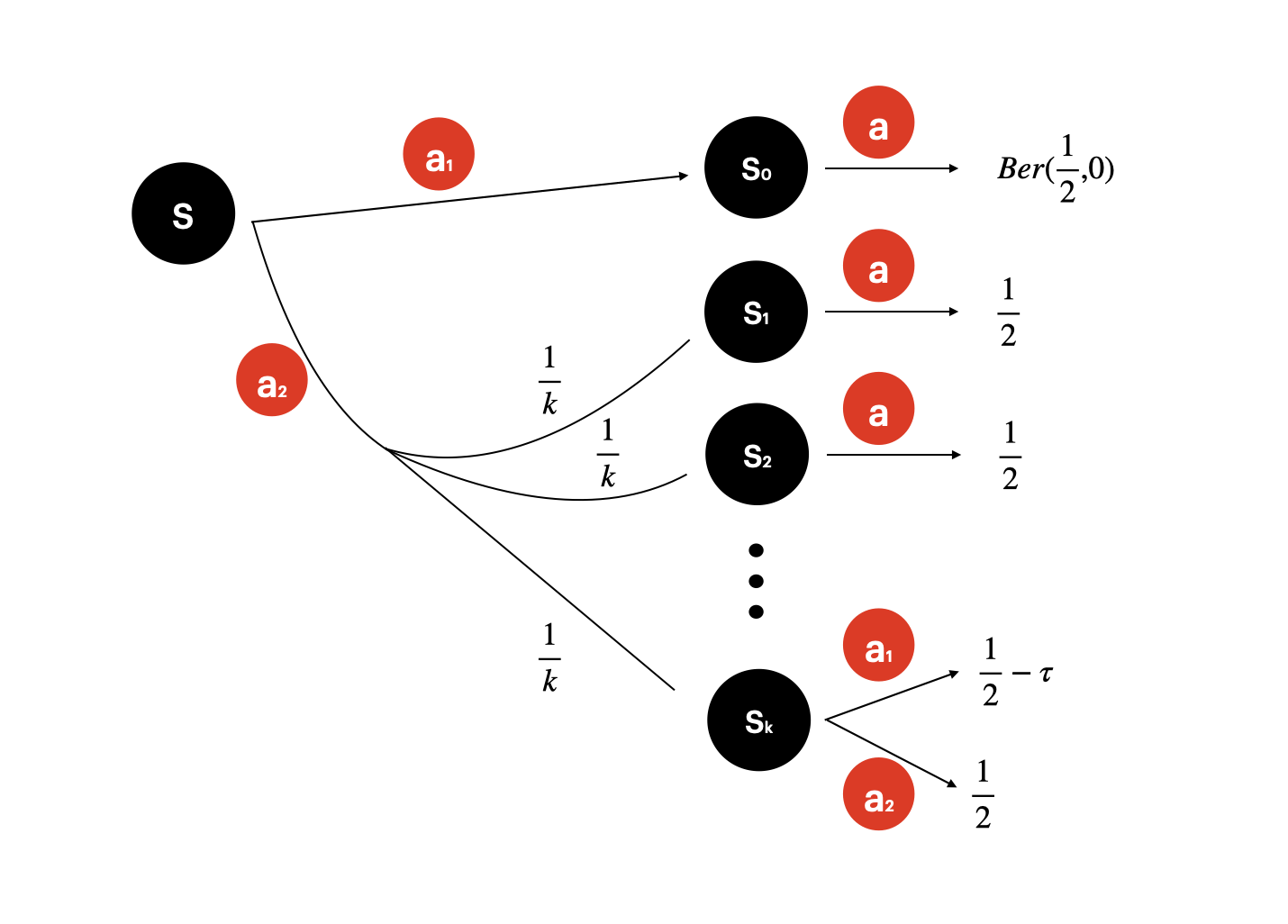}
    \caption{A hard instance with horizon 2, 2 actions and $k+2$ states. $a_1$ at $s$ leads to $s_0$. The reward of both actions at $s_0$ is sampled from Bernoulli Distribution. $a_2$ at $s$ leads to a uniformly random transition to $s_i,i=1,\cdots,k$. The reward of both actions at $s_i,i=1,\cdots, k-1$ are $\frac{1}{2}$. $a_1$ at $s_k$ receives $\frac{1}{2}-\tau$ reward and $a_2$ at $s_k$ receives $\frac{1}{2}$ reward. $\gap=\tau$ in this MDP.}
    \label{fig:hardinstance}
\end{figure}
The initial state is $s$, and $P(s,a_1,s_0)=1,P(s,a_2,s_i)=\frac{1}{k}$ for $i=1,2\cdots,k$. The rewards of actions in $s_i,i=1,2\cdots, k-1$ are all $\frac{1}{2}$, and the rewards of both actions in $s_0$ are sampled from $\mathcal{N}(\frac{1}{2}, 1)$. $r(s_k,a_1)=\frac{1}{2}-\tau$, $r(s_k,a_2)=\frac{1}{2}$.

In this MDP $\gap=\epsilon$, and the only suboptimal action is to take $a_2$ at $s_k$. Define $\mu$,
\begin{align*}
    &\mu(a_1\mid s)=\frac{1}{k+1},\\
    &\mu(a_2\mid s) = \frac{k}{k+1},\\
    &\mu(a_1\mid s_i)= 1\qquad i=0,1,\cdots, k,\\
    &\mu(a_2\mid s_i)= 0\qquad i=0,1,\cdots, k.
\end{align*}
Then we can see that an optimal policy/route $s-a_1-s_0-a_1$ has been covered by $\mu$ with minimal coverage distribution $P'=\frac{1}{k+1}$. Then we show that for any constant $C$, the output policy of VI-LCB with $N=\frac{C}{P'\gap^2}=\frac{C(k+1)}{\tau^2}$ sample trajectories cannot be guaranteed to be optimal with high probability. For the conciseness of proof, we assume that $C>10$, $C_b>16$ and let $k>10$.

Intuitively, this is because there exists the probability that some not-so-well covered optimal policy outperforms the covered one in execution process, and no optimality can be guaranteed over the not-so-well covered one. In this instance, $(s,a_2)$ is also optimal, but as no information about $s_k,a_2$ is known by VI-LCB, it will choose $a_1$ following the principle of pessimism.

In the following proof, we omit the subscripts indicating the time step because the each state only appears in time step 1 or 2, which will not incur confusion.
Rigorously, we define the event the $\{\hat{\pi}(s)=a_2\}$ as $\xi_{\mathrm{bad}}$,
\begin{align*}
    \mathbb{P}[\xi_{bad}]&=\mathbb{P}[\uq(s,a_1)\leq\uq(s,a_2)]\\
    &\geq \mathbb{P}[\uq(s,a_1)\leq \frac{1}{2}-\lambda \tau\leq \uq(s,a_2)]\\
    &\geq 1 - \mathbb{P}[\uq(s,a_1) \geq \frac{1}{2}-\lambda\tau] - \mathbb{P}[\uq(s,a_2)\leq \frac{1}{2} - \lambda\tau].
\end{align*}
where $\lambda$ can be any positive constant, which will be determined later. We limit these two terms respectively.
\begin{align*}
    \mathbb{P}[\uq(s,a_1)\geq \frac{1}{2}-\lambda\tau]&\leq \mathbb{P}[N(s,a_1)\geq \frac{C_1N}{k+1}]+ \mathbb{P}[N(s,a_1)\leq \frac{C_2N}{k+1}]\\
    &+ \mathbb{P}[N(s,a_1)\in [\frac{C_2C}{\tau^2},\frac{C_1C}{\tau^2}],\ \uq(s,a_1)\geq \frac{1}{2}-\lambda\tau].
\end{align*}
Because $N(s,a_1)\sim Bio(N,\frac{1}{k+1})$, it follows from the asymptotic feature of binomial distribution that  $N(s,a_1)-\frac{C}{\tau^2}\sim subG(\frac{Ck}{\tau^2(k+1)})$, and then 
\begin{align*}
    &\mathbb{P}[N(s,a_1)\geq \frac{C_1C}{\tau^2}] \leq \exp\left(-\frac{(C_1-1)^2C^2}{2(1-1/(k+1))^2}\right)\leq \exp(-\frac{C^2}{4}(C_1-1)^2),\\
    &\mathbb{P}[N(s,a_1)\leq \frac{C_2C}{\tau^2}] \leq \exp\left(-\frac{(C_2-1)^2C^2}{2(1-1/(k+1))^2}\right)\leq \exp(-\frac{C^2}{4}(C_2-1)^2).
\end{align*}
Let $C_1=1.5, C_2=0.5$. Remember that we assume that $C>10$, and this makes the sum of above two terms a small constant smaller than $0.1$.
Because $\uq(s,a_1)=-b(s,a_1) + \hat{r}(s_0,a_1) - b(s,a_1)\leq -\frac{3C_b\sqrt{\iota}}{\sqrt{N(s,a_1)}} + \hat{r}(s_0,a_1)$, and the center limit theorem allow us to use $X\sim \mathcal{N}(\frac{1}{2}, \frac{1}{N(s_0,a_1)})$ to replace $\hat{h}(s_0,a_1)$,
\begin{align*}
    \mathbb{P}[N(s,a_1)\in [\frac{C_2C}{\tau^2},\frac{C_1C}{\tau^2}],\ \uq(s,a_1)\geq \frac{1}{2}-\lambda\tau]&\leq \mathbb{P}[\hat{r}(s_0,a_1) - \frac{3C_b\sqrt{\iota}\tau}{\sqrt{C_1C}}\geq \frac{1}{2}-\lambda\tau]\\
    & \lesssim \mathbb{P}[X-\frac{1}{2}\geq \frac{3C_b\sqrt{\iota}\tau}{\sqrt{C_1C}}-\lambda\tau]\\
    &\leq \exp(-\frac{1}{2}(\frac{3C_b\sqrt{\iota}}{\sqrt{C_1C}}-\lambda)^2CC_2)\\
    &\leq \exp(-(\frac{1}{2}C_b\sqrt{\iota}-\lambda)^2).
\end{align*}
The above term disappears quickly when $\iota=\Omega(\log\frac{k}{\delta})$ becomes larger. We will choose a $\lambda<\frac{1}{4}C_b\sqrt{\iota}$.
Then we consider $\uq(s,a_2)$,
\begin{align*}
    \uq(s,a_2)&=-b(s,a_2)-\sum_{i=1}^k\hat{P}(s,a_2,s_i)(r(s_i,a_1)-b(s_i,a_1))\\
    &=\frac{1}{2}-\frac{N(s,a_2,s_k)\tau}{N(s,a_2)} - \frac{(k+1)C_b\iota}{N(s,a_2)} - C_b \sqrt{\frac{\Var_{\hat{P}_{s,a_2}}(\uv)\iota}{N(s,a_2)}}.
\end{align*}
With similar induction, we can prove that event $\{N(s,a_2)\geq \frac{Ck}{2\tau^2}\}\bigcap\{N(s,a_2,s_i)\in [\frac{C}{2\tau^2},\frac{2C}{\tau^2}] \}_{i=1,\cdots, k}$ with probability over 0.8. When we have this concentration event true,
\begin{equation*}
    \mathbb{P}[\uq(s,a_2)\leq \frac{1}{2} - \lambda\tau]\leq \mathbb{P}[-\frac{\tau}{4k}-C_b\tau^2\iota - \frac{C_b\tau^2\iota}{k}\leq -\lambda\tau].
\end{equation*}
By taking $\lambda = \frac{1}{4k} + C_b\tau\iota + \frac{C_b\tau^2}{k} + \epsilon$, where $\epsilon$ is a extremely small positive constant, $\mathbb{P}[\uq(s,a_2)\leq \frac{1}{2} - \lambda\tau]=0$. And further letting $\tau\leq \frac{1}{40\sqrt{\iota}}$, we have $\lambda\leq \frac{1}{4}C_b\sqrt{\iota}$. Putting the above inductions together,
\begin{equation*}
    \mathbb{P}[\xi_{bad}]\geq 1-0.1-0.2-\exp(-\frac{1}{16}C_b^2\iota)\geq \frac{1}{2}.
\end{equation*}
This result points out that Algorithm \ref{alg:LCB with subsampling} has a chance of over 1/2 to return a suboptimal policy. Therefore a overall coverage over all the optimal policies is necessary to derive a $\epsilon$-irrelevant bound for VI-LCB.
\end{document}